\documentclass[Afour, sageh, final, times]{style/sagej}

\usepackage{moreverb,url}
\usepackage{makecell}
\usepackage{calc}
\usepackage{mathtools}

\usepackage[bookmarksopen,bookmarksnumbered]{hyperref}
\hypersetup{
    colorlinks=true,
    citecolor=blue,
    linkcolor=blue,
    urlcolor=blue,
    pdfpagemode=FullScreen,
}

\usepackage{subcaption}
\usepackage[toc,page]{appendix}

\usepackage{xspace}
\usepackage{balance}
\usepackage{siunitx}
\usepackage{color}
\usepackage{graphicx}
\usepackage{amsfonts}
\usepackage{amssymb}
\usepackage{amsmath}
\usepackage{algorithm}
\usepackage{algpseudocode}
\usepackage{ifmtarg}
\usepackage{listings}
\usepackage{multirow}

\usepackage{amsthm}
\newtheorem{theorem}{Theorem}

\runninghead{Orthey et al.}

\usepackage{xspace}
\usepackage{tikz}
\usepackage{pgfplots}
\pgfplotsset{compat=1.16}
\usepgfplotslibrary{fillbetween}
\usetikzlibrary{patterns}
\usetikzlibrary{calc}

\def\Lone{\ensuremath{L^1}\xspace}
\def\Ltwo{\ensuremath{L^2}\xspace}

\algnewcommand\True{\textbf{true}\space}
\algnewcommand\False{\textbf{false}\space}

\newcommand\restrict[2]{\ensuremath{\left.#1\right|_{#2}}}

\def\krrt{k_{\text{RRT}}}
\def\kprm{k_{\text{PRM}}}

\def\R{\mathbb{R}}

\def\quotientspace{Q}

\def\N{\mathbb{N}}

\def\hstar{h^{*}}

\def\free{\textnormal{f}}
\def\init{\textnormal{I}}

\def\X{X}

\def\Xf{\X_{\free}}
\def\Xi{\X_{\init}}

\def\Xk{\X_{k}}

\def\Xkk{\X_{k-1}}

\def\Y{Y}
\def\fiber{F}

\def\B{B}
\def\Bf{\B_{\free}}

\def\base{B}

\def\x{x}
\def\xi{\x_I}
\def\xik{\x^k_I}
\def\xg{\x_G}

\def\Xg{\X_G}
\def\Xgk{\X^k_G}

\def\xk{\x_k}
\def\xj{\x_j}

\def\xr{\x_{\text{rand}}}

\def\xn{\x_{\text{near}}}

\def\xnbh{\x_{\text{nbh}}}
\def\xw{\x_{\text{new}}}
\def\xnew{\x_{\text{new}}}

\def\dmax{d_{\text{MAX}}}
\def\bmax{b_{\text{MAX}}}

\def\PriorityQueue{\ensuremath{\mathbf{X}}}

\def\planningproblem{\left(\x_I, \Xg, \X\right)}

\def\G{\ensuremath{\mathbf{G}}}
\def\Gk{\ensuremath{\G_k}}
\def\Gkk{\ensuremath{\G_{k-1}}}
\def\path{\ensuremath{\mathbf{p}}}

\makeatletter
\renewcommand{\toprule}{\hrule height.8pt depth0pt \kern2pt} 
\renewcommand{\midrule}{\kern2pt\hrule\kern2pt} 
\renewcommand{\bottomrule}{\kern2pt\hrule\relax}
\newcommand{\algcaption}[2][]{%
  \refstepcounter{algorithm}%
  \toprule
  \textbf{{\raggedright\fname@algorithm~\thealgorithm}}\ #2\par 
  \midrule
}
\makeatother

\makeatletter
\newcommand*{\algrule}[1][\algorithmicindent]{\makebox[#1][l]{\hspace*{.5em}\vrule height .75\baselineskip depth .25\baselineskip}}%

\newcount\ALG@printindent@tempcnta
\def\ALG@printindent{%
    \ifnum \theALG@nested>0
    \ifx\ALG@text\ALG@x@notext
    \addvspace{-3pt}
    \else
    \unskip
    \ALG@printindent@tempcnta=1
    \loop
    \algrule[\csname ALG@ind@\the\ALG@printindent@tempcnta\endcsname]%
    \advance \ALG@printindent@tempcnta 1
    \ifnum \ALG@printindent@tempcnta<\numexpr\theALG@nested+1\relax
    \repeat
    \fi
    \fi
}%
\usepackage{etoolbox}
\patchcmd{\ALG@doentity}{\noindent\hskip\ALG@tlm}{\ALG@printindent}{}{\errmessage{failed to patch}}
\makeatother

\setcounter{secnumdepth}{3}

\begin{document}

\title{Multilevel Motion Planning:\\ A Fiber Bundle Formulation}

\author{Andreas Orthey\affilnum{1,3} and Sohaib Akbar\affilnum{2} and Marc Toussaint\affilnum{1,3}}

\affiliation{\affilnum{1}Max Planck Institute for Intelligent Systems, Stuttgart, Germany\\
\affilnum{2}University of Stuttgart, Germany\\
\affilnum{3}Technical University of Berlin, Germany}

\corrauth{Andreas Orthey}

\email{\{aorthey\}@is.mpg.de}

\begin{abstract}

High-dimensional motion planning problems can often be solved significantly faster by using multilevel abstractions.
While there are various ways to formally capture multilevel abstractions, we formulate them in terms of fiber bundles. 
Fiber bundles essentially describe lower-dimensional projections of the state space using local product spaces, which allows us to concisely describe and derive novel algorithms in terms of bundle restrictions and bundle sections. 
Given such a structure and a corresponding admissible constraint function, we develop highly efficient and asymptotically-optimal sampling-based motion planning methods for high-dimensional state spaces. 
Those methods exploit the structure of fiber bundles through the use of bundle primitives.
Those primitives are used to create novel bundle planners, the rapidly-exploring quotient-space trees (QRRT*), and the quotient-space roadmap planner (QMP*). 
Both planners are shown to be probabilistically complete and almost-surely asymptotically optimal.
To evaluate our bundle planners, we compare them against classical sampling-based planners on benchmarks of four low-dimensional scenarios, and eight high-dimensional scenarios, ranging from 21 to 100 degrees of freedom, including multiple robots and nonholonomic constraints. 
Our findings show improvements up to 2 to 6 orders of magnitude and underline the efficiency of multilevel motion planners and the benefit of exploiting multilevel abstractions using the terminology of fiber bundles.

\end{abstract}

\keywords{Optimal motion planning, Multi-robot motion planning, Nonholonomic planning, Fiber bundles}

\maketitle

\section{Introduction}

As human beings, we often tackle complex problems by employing abstraction hierarchies~\citep{simon_1969, ballard2015brain}. Abstraction hierarchies, however, are not only helpful for us, but they can also be used by robots to solve high-dimensional motion planning problems---efficiently, and with strong guarantees~\citep{orthey_2019, reid_2019, ichter_2019, vidal_2019, gochev_2012}.
\begin{figure}
    \centering
    \includegraphics[width=\linewidth]{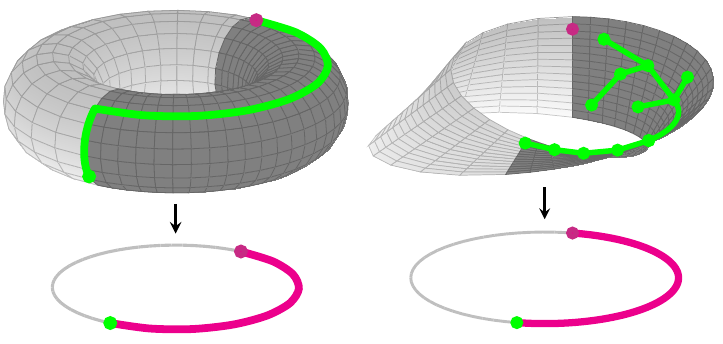}
    \caption{Efficient search over fiber bundles by exploiting path restrictions and sections. \textbf{Left}: A fiber bundle $T^2 \rightarrow S^1$ which abstracts the Torus to a base space (circle). A path on the base space (magenta) imposes a path restriction (dark grey) on the Torus, on which we can find a section (green). \textbf{Right}: A non-trivial fiber bundle from the Mobius strip to a circle, with path restriction (dark grey), and a planned tree (green). \label{fig:pullfigure}}
\end{figure}

\begin{figure*}
\centering
\begin{tikzpicture}[]
     \path[
        nodes={
            rectangle,
            draw,
            align=center,
            minimum width=5.5cm,
            minimum height=.75cm,
            anchor=east
            }
        ]
        (0,3) node[fill=gray!20] (a1) {High-Dimensional State Space}
        (0,1.5) node[fill=gray!20] (a2) {Query (Start, Goal, Objective)}
        
        (6,3) node (b1) {Impose Multilevel Abstraction}
        (6,0) node (b2) {Plan and Optimize}

        (12,3) node (c1) {Model as Fiber Bundles (Sec.~\ref{sec:bundles})}
        (12,2) node (c2) {Derive Bundle Primitives (Sec.~\ref{sec:primitives})}
        (12,1) node (c3) {Create Bundle Planners (Sec.~\ref{sec:bundleplanners})}
        ;

    \begin{scope}
       \draw[->, thick] (a1.east) -- (b1.west); 
       \draw[->, thick] (a2.south) |- (b2.west); 
       \draw[->, thick] (b1.east) -- (c1.west); 
       \draw[->, thick] (c1.south) -- (c2.north); 
       \draw[->, thick] (c2.south) -- (c3.north); 
       \draw[->, thick] (c3.south) |- (b2.east); 
    \end{scope}   
    
\end{tikzpicture}
\caption{Overview about the contributions of this paper (gray boxes are input). We tackle motion planning in high-dimensional state spaces by imposing multilevel abstractions. Those abstractions are modelled as fiber bundles (Sec.~\ref{sec:bundles}), from which bundle primitives are derived (Sec.~\ref{sec:primitives}), and the bundle planners QRRT, QRRT*, QMP, and QMP* are created (Sec.~\ref{sec:bundleplanners}). Those bundle planners efficiently exploit bundle primitives. Given a new query in a high-dimensional state space, this allows us to plan and optimize motions, even in scenarios where non-bundle planners fail.
\label{fig:system}}
\end{figure*}

However, abstractions in robot motion planning are difficult to model. The state spaces involved are usually continuous, high-dimensional, and might contain intricate constraints~\citep{konidaris_2019}. It is often unclear how to model abstractions over such state spaces, how such abstractions can be efficiently exploited, and how we can keep completeness, or optimality guarantees.

To tackle this problem, we introduce the framework of fiber bundles~\citep{steenrod_1951, lee_2003} to robot motion planning. 
Fiber bundles are a convenient way to model multilevel abstractions, because they provide the useful concepts of bundle sections and restrictions. 
Both bundle sections and restrictions allow us to develop novel planning algorithms to exploit them for efficient sampling. 
Fig.~\ref{fig:pullfigure} illustrates these two notions, which we will introduce in more detail later. 
On the left, we show an abstraction (a projection) from the torus $T^2$ to the circle $S^1$. A path on the circle (magenta) imposes a path restriction (dark grey) on the torus $T^2$. 
On this path restriction, we can find path sections (green), which are paths which project onto the path on the circle. The path on the circle acts here as a guide to quickly find solution paths.  
On the right of Fig.~\ref{fig:pullfigure}, we show the same scenario for the Mobius strip, which is a non-trivial fiber bundle~\citep{mobius_1858}.
A tree (green) is shown on the path restriction (dark grey), where planning is restricted by the path on the circle (magenta). 
By constructing sections and restrictions, we can create novel motion planners, which are able to quickly find relevant regions in state space to plan in. 
This allows those planner to efficiently solve high-dimensional motion planning problems. 
An overview of our approach is shown in Fig.~\ref{fig:system}.

\subsection{Our Contributions}

Our work builds on prior publications at the International Conference on Intelligent Robots and Systems (IROS) \citep{orthey_2018} and the International Symposium on Robotics Research (ISRR) \citep{orthey_2019}. Our contributions over this prior work are:
\begin{enumerate}
    \item We propose to formulate multilevel motion planning problems using the terminology of fiber bundles, and introduce the particularly useful notions of bundle sections and bundle restrictions.
    \item Based on this formulation, we improve upon previous bundle planners QRRT and QMP and develop two new bundle planners QRRT* and QMP*.
    \item We show QRRT* and QMP* to be probabilistically complete and asymptotically optimal by inheritance from RRT*, and PRM*.
    \item We define primitive methods on fiber bundles and conduct a meta-analysis to find the best implementation of those methods.
    \item We provide open source implementations of fiber bundles and bundle planners (together with a high-level introduction, a tutorial, and demos), which is freely available in the open motion planning library (OMPL).
    \item We evaluate QMP, QMP*, QRRT, and QRRT* by comparing them against planners from OMPL on four low-dimensional scenarios ranging from $2$ degrees of freedom (dof) to $7$-dof, and on eight high-dimensional scenarios ranging from $21$-dof to $100$-dof.
\end{enumerate}

Our evaluations show that our planners QMP, QMP*, QRRT, and QRRT* can efficiently exploit fiber bundles. While they are competitive in low-dimensional spaces, they are particularly useful in high-dimensional spaces, where other planners have difficulty finding solutions. We show that for high-dimensional state spaces, our bundle space planners can provide runtime improvements by up to $2$ to $6$ order of magnitude.
\section{Related Work}

We provide a brief overview on motion planning with focus on optimal planning. We then discuss multilevel motion planning by discussing how our approach of fiber bundles is connected to existing research. 
In particular, we stress the point that fiber bundles often contribute additional vocabulary, which we can exploit to develop novel methods, simplify notations and better structure our code. We finish by reviewing complementary approaches to fiber bundles and we discuss what our approach adds to existing approaches in (optimal) multilevel motion planning.

\subsection{Motion Planning}

To solve motion planning problems, we need to develop algorithms to find paths through the state space of a robot \citep{lozano_perez_1983}.
Searching such a state space is NP-hard \citep{canny_1988}, but we can often efficiently find solutions using sampling-based algorithms \citep{lavalle_2006, salzman_2019}, where we randomly sample states and connect them to a graph \citep{kavraki_1996} or to a tree \citep{lavalle_1998}. Many variations are possible, for example using bidirectional trees growing from start and goal state \citep{kuffner_2000, lavalle_2001}, lazy evaluation of edges \citep{bohlin_2000, mandalika_2019}, sparse graphs \citep{simeon_2000, jaillet_2008}, safety certificates \citep{bialkowski_2016} or deterministic sampling sequences \citep{janson_2018, palmieri_2019}. 

Often, we like to find an optimal path by minimizing an optimization objective~\citep{karaman_2011}. To find optimal paths, we could transfer ideas from classical planning like lazy edge evaluation \citep{hauser_2015} or sparse graphs \citep{dobson_2014}. However, cost function landscapes \citep{jaillet_2010} often provide additional information we can exploit. Examples include informed sets to prune irrelevant states \citep{gammell_2018, gammell_2020} or fast marching trees to grow trees outward in cost-to-come space \citep{janson_2015}. Recently, sampling-based motion planning algorithms have also been extended to address zero-measure constraints \citep{kingston_2019}, implicit constraints \citep{jaillet_2013}, dynamic constraints \citep{li_2016}, or dynamic environments \citep{otte_2016}.

All those algorithms can robustly solve many planning problems, provide formal guarantees like probabilistic completeness, or asymptotic optimality, and have been verified in a wide variety of applications \citep{lavalle_2006, sucan_2012}. However, as we show in Sec.~\ref{sec:evaluation}, we often cannot use them to solve high-dimensional planning problems in a reasonable amount of time (like less than $60$ seconds). We believe additional information is required to solve those problems efficiently. A possible candidate for this additional information are multiple levels of abstraction.

\subsection{Multilevel Motion Planning}

In multilevel motion planning, we impose a multilevel abstraction on the state space and we develop algorithms which exploit this abstraction. While several models for multilevel motion planning have been put forward, we propose to use fiber bundles. To justify this decision, we show their relation to alternative modelling approaches and provide clues to the additional value they bring to the table. 

\subsubsection{Quotient Spaces}

Fiber bundles are related to quotient spaces \citep{orthey_2018, orthey_2019, brandao_2020}, latent spaces \citep{ichter_2019} or sub-spaces \citep{reid_2020} in that we can represent those spaces as the base space of a fiber bundle. We can often create such a base space by taking the quotient of an equivalence class \citep{pappas_2000}. Using the ideas of base spaces, there are two interesting special cases. First, we can use base spaces to simplify a nonholonomic state space to a holonomic state space \citep{sekhavat_1998, vidal_2019}. Often, having a path on the base space is enough to find a global solution, in particular if some sort of smoothness constraint is imposed \citep{vidal_2019, hoenig_2018}. Second, we can use sequences of base spaces to simplify multi-robot planning problems \citep{erdmann_1987, simeon_2002, solovey_2014}. We can often solve such problems efficiently by graph coordination. In graph coordination, we first plan a graph on each individual robot subspace, then we combine them using specialized algorithms like sub-dimensional expansion \citep{wagner_2015} or directional oracles \citep{solovey_2016, shome_2020}. This is different from our approach, in that graphs are constructed independently, while we construct them sequentially, similar to a prioritization of robots \citep{erdmann_1987, berg_2005_prioritized, ma_2019}. 

While numerous works exist to exploit sequences of base spaces~\citep{zhang_2009, vidal_2019}, we like to highlight two algorithms. First, the Manhattan-like rapidly-exploring random tree (ML-RRT)~\citep{cortes_2008, nguyen_2018}, where path sections are computed similar to the L1 interpolation we advocate. However, the ML-RRT approach differs from ours, in that we use a different collision checking function for the base space and we give formal guarantees using restriction sampling. 
Second, the hierarchical bidirectional fast marching tree (HBFMT) algorithm~\citep{reid_2019, reid_2020}, where restriction sampling is used on sequences of subspaces. 
Similar to our approach,~\cite{reid_2020} prove HBFMT to be almost-surely asymptotically optimal by inheritance from BFMT*. 
Our approach is similar in that we develop asymptotically optimal algorithms based on RRT* and PRM*. However, contrary to both~\cite{reid_2020} and~\cite{jaillet_2008}, we use quotient spaces instead of subspaces, we support manifolds instead of only Euclidean spaces, we support multiple robots with nonholonomic constraints, and we provide a variable path bias for restriction sampling (contrary to a fixed path bias~\cite{reid_2020}). We also differ by providing a recursive path section method which we show to quickly find sections even in high-dimensional state spaces.
\subsubsection{Constraint Relaxations}

Fiber bundles are related to constraint relaxations \citep{boyd, roubivcek_2011}, in that we can often model constraint relaxations as a particular type of fiber bundle, i.e. a bundle with an admissible projection, which does not reduce the dimensionality. 
We can often create constraint relaxations by increasing the free space \citep{hsu_2006}, by retracting the obstacle geometry
\citep{saha_2005}, or by shrinking robot links sequentially to zero \citep{baginski_1996}. While constraint relaxations often do not decrease the dimensionality, there are, however, extensions which do decrease the dimensionality, like progressive relaxations \citep{ferbach_1997} or iterative constraint relaxations \citep{bayazit_2005}. In both methods, we either remove links or robots from the problem and we can use them to model the same multilevel abstractions as we can do with fiber bundles. However, by using fiber bundles, we can add additional insights like path sections and restriction sampling.

Closely related to relaxations are projections \citep{sucan_2009, sucan_2011, rowekamper_2013, luna_2020}. 
Projections are a component of fiber bundles, which we use to project the state space onto a lower-dimensional base space. Contrary to quotient spaces, projections are often not required to be admissible but can even be random \citep{sucan_2009}. A noteworthy approach is projection using adaptive dimensionality \citep{vahrenkamp_2008, gochev_2012, gochev_2013}, where projections remove degrees-of-freedom (dof). We can remove dofs of a robot by having a fixed projection \citep{gochev_2012, yu_2020} or by adjusting the projection depending on which links are closest to obstacles \citep{yoshida_2005, kim_2015}. While similar to fiber bundles, both \cite{yoshida_2005} and \cite{kim_2015} emphasize the role of distances in workspace to choose a multilevel abstraction, which is an interesting complementary approach to ours. We differ, however, by supporting multiple robots, nonholonomic constraints, and by providing asymptotic optimality guarantees.
\subsubsection{Admissible Heuristics}

Fiber bundles are related to admissible heuristics \citep{pearl_1984, persson_2014, aine_2016}, in that we can use metrics on the lower-dimensional base space as admissible heuristics \citep{passino_1994} to guide search on the state space. This is closely related to the idea of computing lower bounds for planning problems \citep{salzman_2016}. When using sequences of fiber bundles, we basically use tighter and tighter lower bounds on the real solution. Our approach differs, however, in that we do not consider inadmissible heuristics, which we could combine with admissible heuristics to often speed up planning \citep{aine_2016, tonneau_2018}. 

While there are many ways to define admissible heuristics \citep{aine_2016}, we believe there are two main approaches for the case of continuous state spaces, namely low-dimensional sampling and guide paths. In low-dimensional sampling \citep{sucan_2009}, we first sample on a lower dimensional base space, then use those samples to restrict sampling on the state space. There are two main approaches. First, we can select sequences of subspaces of the state space \citep{xanthidis_2018}, then sample them by selecting the subspaces based on the density of samples. Second, we can use workspace sampling \citep{berg_2005, zucker_2008, rickert_2014, luna_2020}, where state space samples are taken from the restriction of collision-free sets in workspace. We can do workspace sampling by focusing on narrow passages \citep{berg_2005}, or by selecting promising points on the robot and guiding them through the workspace \citep{luna_2020}. Our approach is similar in that we use lower-dimensional sampling on the base space and we select base spaces based on a density criterion \citep{xanthidis_2018}. 
However, we differ by smoothly changing between path and graph restriction sampling, and by using a recursive path section method to efficiently find solution paths.

Closely related to low-dimensional sampling is the concept of guide paths \citep{tonneau_2018, ha_2019}. A guide path is a solution on the base space, which we use to restrict sampling on the state space \citep{palmieri_2016}. Guide paths are often used in contact planning \citep{bretl_2006, tonneau_2018}, where we can often give sufficiency conditions on when a feasible section exists \citep{grey_2017}. When no feasible section exists, some methods fail while other gradually shift towards graph restriction sampling \citep{grey_2017}. It is also possible to compute multiple guide paths which increase our chance to find a feasible section \citep{vonasek_2019b, ha_2019, orthey_2020}. While we also sample along guide paths (path restriction sampling), we differ in two ways. First, we use adaptive restriction sampling to gradually change sampling from path to graph restriction, whereby we guarantee asymptotic optimality. Second, we use a recursive path section method to quickly find feasible path sections in high-dimensional state spaces. 

\subsection{Exploiting Additional Information}

Fiber bundles are a way to exploit additional information. 
Other approaches, complementary to fiber bundles, exists. 
One approach is region-based decomposition. 
In a region-based decomposition, the problem is divided into regions in which planning becomes computationally efficient~\citep{toussaint_2017, orthey_2020}. Such an approach can be done in two ways. 
First, the workspace can be divided~\citep{plaku_2010, vegabrown_2018}, for example using subdivision grids~\citep{plaku_2015}, Delaunay decompositions~\citep{plaku_2010}, or convex regions~\citep{deits_2014, vegabrown_2018}. Second, the solution path space can be divided~\citep{farber_2008}, for example by using the notion of homotopy classes~\citep{munkres_2000}, where two paths are considered to be equivalent if we can deform them into each other. Homotopy classes are closely related to the notion of topological complexity \citep{farber_2017}, the minimal number of regions in state space which are collapsible into a point (null-homotopic). Several practical solutions exists to compute path homotopy classes, like the H-value \citep{bhattacharya_2012, bhattacharya_2018}, simplicial complices \citep{pokorny_2016_ijrr}, task projections \citep{pokorny_2016}, or mutual crossings of robots \citep{mavrogiannis_2016}. However, all those approaches often become computationally intractable for high-dimensional systems, multiple robots, or nonholonomic constraints. Fiber bundles are a complementary effort to organize regions on different levels of abstraction \citep{orthey_2020b}.

Apart from region-based decompositions, we identify three other methods to exploit additional information. First, we can exploit distance information in workspace to compute sets of feasible states \citep{quinlan_1994}, which can be used to plan safe motions \citep{bialkowski_2016}, or to compute covers of free space \citep{lacevic_2016, lacevic_2020}. Second, we can exploit differentiable constraints when available \citep{toussaint_2018, henkel_2020}. Third, we can exploit alternative state space representations, for example by using topology-preserving mappings \citep{zarubin_2012, ivan_2013}. This is complementary to our approach, in that \cite{zarubin_2012} tries to find alternative representations of a state space, while we concentrate on finding simplifications of a given space.

\subsection{Fiber Bundles and Prior Approaches}

Fiber bundle planners exploit a number of projections to accelerate planning performance. Prior approaches using projections are the KPIECE planner~\citep{sucan_2009}, which uses a projection onto a simplified space, and the SBL planner~\citep{sanchez2003single}, which plans using a simplified grid of the state space.
However, most prior approaches are limited in the number of projections~\citep{sucan_2009,
cortes_2008}, the number of robots~\citep{vidal_2019}, use only holonomic
constraints~\citep{zhang_2009}, use only Euclidean spaces~\citep{reid_2019,
reid_2020}, or work only in specific situations~\citep{gochev_2012, kim_2015}. 
Instead, we
can apply fiber bundles to any manifold space (we show it for compound spaces
including the special Euclidean and orthogonal groups in 2d and 3d), any finite
number of projections (up to $98$ in our evaluations), any finite number of
robots (up to $8$ in our evaluations) and any nonholonomic constraint (for
Dubin's state spaces in our evaluations). With fiber bundles, we also provide a
shared vocabulary, which can unify methods like path restriction sampling~\citep{zhang_2009, tonneau_2018,
vidal_2019}, or graph restriction sampling~\citep{grey_2017, orthey_2018,
reid_2020}. Since we also provide an open source implementation in OMPL, we can
benchmark different multilevel strategies
(Appendix~\ref{sec:appendix:metaanalysis}) and we can show the benefit of fiber
bundles compared to classical motion planners (Sec.~\ref{sec:evaluation}). 

\section{Background on Optimal Motion Planning\label{sec:prelims:motionplanning}}

Let $R_1, \ldots, R_M$ be $M$ robots with associated (component) state spaces $\Y_1, \ldots, \Y_M$, respectively. 
We can combine the robots into one generalized robot $R$ with associated (composite) state space $X = \Y_1 \times \cdots \times Y_M$. 

To each state space $X$, we add two complementary structures. First, we add a constraint function $\phi: \X \rightarrow \{0,1\}$ on $\X$ which takes an element $x$ in $\X$ and returns zero if $x$ is feasible and one otherwise. Examples of constraints are joint-limits, self-collisions, environment-robot collisions and robot-robot collisions. Second, we add a steering function $\psi$, which takes two elements $x_1$ and $x_2$ in $\X$ as input and returns a path steering the robot from $x_1$ to $x_2$ (while potentially ignoring constraints).
We denote a state space $X$ together with the constraint function $\phi$ and the steering function $\psi$ as a \emph{planning space} $(X, \phi,\psi)$. The planning space implicitly defines the \emph{free state space} as $\Xf = \{x\in X \mid \phi(x) = 0\}$.

Given a planning space, we define a \emph{motion planning problem} as a tuple $\planningproblem$. To solve a motion planning problem, we need to develop an algorithm to find a path from the initial state $\xi \in \Xf$ to a desired goal region $\Xg \subseteq \Xf$. Often, we are not only interested in some path, but in a path which optimizes a cost functional $c: \X^I \rightarrow \R_{\geq 0}$ whereby $I$ is the unit interval and $\X^I$ is the set of continuous paths from $I$ to $\X$ with finite length \citep{karaman_2011, janson_2018}. We define the \emph{optimal motion planning problem} as finding a path from $\xi$ to $\Xg$ minimizing the cost functional $c$. 

\section{Multilevel Motion Planning\label{sec:bundles}}

Let $\X$ be a state space and let $\X_K \overset{\pi_{K-1}}{\longrightarrow}
\ldots \overset{\pi_1}{\longrightarrow} \X_1$ be a multilevel abstraction of $\X$ such that $\X_K = \X$, and $\pi_{k}$ are projections from a state space $\X_k$ to a state space $\X_{k-1}$. 
Each projection $\pi_{k} : \X_{k-1} \rightarrow \X_k$ is modelled as a fiber bundle (see Sec.~\ref{sec:fiberbundle_formulation}).
Given a start configuration $\xi \in \X_K$, a goal region $\Xg \subseteq \X_K$, and an objective cost
functional $c$, we define the \emph{optimal multilevel motion planning problem}
as the tuple $(\xi, \Xg, \X_1,\ldots,\X_K)$ asking us to find a path from $\xi$
to $\Xg$ while minimizing the cost $c$. Thus, by defining an optimal multilevel
motion planning problem, we generalize optimal motion planning (Sec.~\ref{sec:prelims:motionplanning}) by \emph{adding additional information}. 

In the following sections, we discuss the framework of fiber bundles which provides us with the concepts of bundle restrictions and bundle sections. Those concepts will be used to define primitive methods (Sec.~\ref{sec:primitives}), which are fundamental to create bundle planners which exploit those primitive methods and plan efficiently over fiber bundles (Sec.~\ref{sec:bundleplanners}).

\subsection{Fiber bundle formulation\label{sec:fiberbundle_formulation}}

To model multiple levels of abstractions of state spaces, we use the framework of fiber bundles \citep{steenrod_1951, husemoller_1966, lee_2003}. A fiber bundle is a tuple $(\fiber, \X, B, \pi)$, consisting of the total space $\X$, the fiber space $\fiber$, the base space $B$ and the projection map 
\begin{equation}
    \pi: \X \rightarrow B ~.
\end{equation}
The mapping $\pi$ needs to fulfil two properties:
\begin{enumerate}
    \item \textbf{Union of Fibers}. The total space $\X$ is a (disjoint) union of copies of the fiber space $\fiber$, parameterized by the base space $B$ \citep{lee_2003}. This means that, if we take any element $b$ in $B$, the preimage $\pi^{-1}(b)$ is isomorphic to the fiber space $\fiber$. \label{enum:projection1}
    \item \textbf{Local Product Space}. The total space $\X$ locally equals the product space $B \times F$. This means, if we take any element $b$ in $\base$, there exists a neighborhood $U$ (an open set containing $b$) such that the preimage $\pi^{-1}(U)$ is homeomorphic to $U \times F$ \citep{lee_2003}.\label{enum:projection2}
\end{enumerate}

In other words, a fiber bundle locally has the structure of a product space, and $\pi$ provides a projection from the total space $X$ to a ``parameterization'' of fibers in $B$. This local product structure and the projection $\pi$ aligns with the terms of equivalence classes and quotient spaces as described in Appendix~\ref{sec:appendix:background}. Our main motivation for leveraging the terminology of fiber bundles are the notions we introduce next.

\subsection{Bundle Restrictions}
\label{sec:bundle_restrictions}

\begin{figure}
    \centering

\begin{tikzpicture}
\def\height{2}
\def\radius{1.25}
\def\heightEllipse{0.5}
\def\distBase{1.8}
\def\pathLength{1.5}
\def\vertexStart{-0.5*\pathLength}
\def\pathStretch{2*3.1415/\pathLength}

\def\xposFiber{-2.2*\radius}
\def\xposPath{0*\radius}
\def\xposGraph{2.2*\radius}

\tikzset{
    classical/.style={thin,double,->,shorten >=4pt,shorten <=4pt,>=stealth}
}
\newcommand\drawCylinder[1]{
\draw (#1,\height) ellipse ({\radius} and \heightEllipse);
\draw (#1-\radius,0) -- (#1-\radius,\height);
\draw (#1+\radius,0) -- (#1+\radius,\height);
\draw (#1-\radius,0) arc (180:360:{\radius} and \heightEllipse);
\draw [dashed] (#1-\radius,0) arc (180:360:{\radius} and -\heightEllipse);
\draw (#1,-\distBase) ellipse ({\radius} and \heightEllipse);
\draw[classical] (#1,0-\heightEllipse) -- (#1,-\distBase+\heightEllipse);
}
\newcommand\ppath[1]{
0.13*sin(180*(\pathStretch*(#1-\pathStart))/3.14)
}
\newcommand\drawPath[1]{
\def\pathStart{#1+\vertexStart}
\draw plot [smooth,samples=10,domain=\pathStart:{\pathStart+\pathLength}](\x,{\ppath{\x} - \distBase});
\fill [gray, pattern=vertical lines, opacity=0.8]
plot [smooth,samples=10,domain=\pathStart:{\pathStart+\pathLength}](\x,{\ppath{\x}}) -- 
plot [smooth,samples=10,domain={\pathStart+\pathLength}:\pathStart](\x,{\ppath{\x} + \height});

\foreach \y in {0,0.5,...,\height}
\draw plot [smooth,samples=10,domain=\pathStart:{\pathStart+\pathLength}](\x,{\ppath{\x} + \y});

\draw (\pathStart,0) -- (\pathStart, \height);
\draw (\pathStart+\pathLength,0) -- (\pathStart+\pathLength, \height);
}

\newcommand\drawFiber[1]{
\def\pointPosX{#1+0.5*\radius}
\draw[line width=0.3mm,black](\pointPosX,0) -- (\pointPosX, \height);
\node[draw,fill,circle,inner sep=0pt,minimum size=0.3mm] at (\pointPosX, 0-\distBase){};
\draw node[left] at (\pointPosX, 0-\distBase) {$x_B$};
\draw node[left] at (\pointPosX, 0.5*\height) {$\pi^{-1}(x_B)$};
}

\newcommand\drawGraphCC[2]{
\draw 
(#1+\vertexStart+1.2*\pathLength,#2 - 0.1*\heightEllipse)
-- (#1+\vertexStart+0.9*\pathLength,#2 + 0.3*\heightEllipse)
-- (#1+\vertexStart+0.6*\pathLength,#2 + 0.5*\heightEllipse)
-- (#1+\vertexStart+0.4*\pathLength,#2 - 0.5*\heightEllipse)
-- (#1+\vertexStart+0.75*\pathLength,#2 - 0.4*\heightEllipse)
-- (#1+\vertexStart+0.9*\pathLength,#2 + 0.3*\heightEllipse);
}

\tikzset{
  vertex/.style = {shape=circle,inner sep=0,outer sep=0},
  edge/.style = {->,-Latex},
}
\node[vertex] (a) at (+\vertexStart+1.2*\pathLength, - 0.1*\heightEllipse) {};
\node[vertex] (b) at (+\vertexStart+0.9*\pathLength, + 0.3*\heightEllipse) {};
\node[vertex] (c) at (+\vertexStart+0.6*\pathLength, + 0.5*\heightEllipse) {};
\node[vertex] (d) at (+\vertexStart+0.4*\pathLength, - 0.5*\heightEllipse) {};
\node[vertex] (e) at (+\vertexStart+0.75*\pathLength, - 0.4*\heightEllipse) {};
\node[vertex] (f) at (+\vertexStart+0.9*\pathLength, + 0.3*\heightEllipse) {};

\node[vertex] (g) at (+\vertexStart-0.2*\pathLength, -0.2*\heightEllipse) {};
\node[vertex] (h) at (+\vertexStart+0.2*\pathLength, + 0.1*\heightEllipse) {};
\node[vertex] (i) at (+\vertexStart+0.05*\pathLength, + 0.5*\heightEllipse) {};

\def\edgeList{a/b, b/c, c/d, d/e, e/b, g/h, h/i, i/g}

\newcommand\drawGraph[2]{
\foreach \source/\target in \edgeList
    \draw ($(\source)+(#1,#2)$) -- 
    ($(\target)+(#1,#2)$);
}
\newcommand\drawGraphRestriction[3]{
\foreach \source/\target in \edgeList
    \fill [gray, draw, pattern=vertical lines, opacity=0.8]
    ($(\source)+(#1,#2)$) -- 
    ($(\target)+(#1,#2)$) --
    ($(\target)+(#1,#2)+(0,#3)$) --
    ($(\source)+(#1,#2)+(0,#3)$) --
    ($(\source)+(#1,#2)$);
}

\drawCylinder{\xposFiber}
\drawFiber{\xposFiber}

\drawCylinder{\xposPath}
\drawPath{\xposPath}


\drawGraph{\xposGraph}{-\distBase}
\foreach \y in {0,0.5,...,\height}
   \drawGraph{\xposGraph}{\y};
\drawCylinder{\xposGraph}
\drawGraphRestriction{\xposGraph}{0}{\height};


\end{tikzpicture}
    \caption{Bundle restrictions on fiber bundle $\R^3 \rightarrow \R^2$. Left: Point restriction (Fiber). Middle: Path restriction. Right: Graph restriction.\label{fig:bundle_restriction}}
    
\end{figure}

Given a fiber bundle $(F,\X,B,\pi)$, and a subset $U \subseteq B$, a bundle restriction $\restrict{\X}{U} \subseteq \X$ is the subset of the total space that projects to $U$. This set $\restrict{\X}{U} = \pi^{-1}(U)$ is called the restriction of
$\X$ to $U$~\citep{tu_2017}. 

We consider three kinds of restrictions. 
First, given a point $\x_B$ on the base space, we use its restriction $\restrict{\fiber}{\x_B} = \pi^{-1}(\{\x_B\})$. Note that we call $\restrict{\fiber}{\x_B}$ a fiber as it is, by definition, isomorphic to $F$ (Assertion~\ref{enum:projection1}.). We visualize this in Fig.~\ref{fig:bundle_restriction} (Left). Second, given a path $\path_B:
I \rightarrow B$ on the base space, with $I=[0,1]$ the unit interval, we have
its restriction $\restrict{\X}{p_B} =  \pi^{-1}(\{\path_B(t):t\in I\})$ (Fig.~\ref{fig:bundle_restriction}, Middle). And third, given a graph $\G_B$ on
the base space, we have the graph restriction  $\pi^{-1}(\G_B) \subseteq \X$, where we unproject the union of all its vertices and edges (Fig.~\ref{fig:bundle_restriction}, Right). 

We use these three restrictions for different computations. First, we use point restrictions (fibers) to lift base space elements up to the total space (Sec.~\ref{sec:bundle_sections}). Second, we use path restrictions to quickly compute sections, which are paths on the total space constraint to the path restriction (Sec.~\ref{sec:bundle_sections} and Sec.~\ref{sec:primitives:pathsections}). Third, we use graph restrictions to formulate restriction sampling, i.e. sampling restricted to elements of the total space that project onto the base space graph (Sec.~\ref{sec:restriction_sampling}). It is important to note that restriction sampling is dense in the free total space, if the graph on $B$ is dense. We use this denseness property to prove probabilistic completeness and asymptotic optimality (Sec.~\ref{sec:analysisalgorithms}). 

\subsection{Bundle Sections\label{sec:bundle_sections}}

Given a fiber bundle $(F,X,B,\pi)$ and a subset $U\subseteq B$ of the base space, a section of $U$ is a map $s: U \rightarrow \X$  such that $\pi(s(u)) = u$ \citep{lee_2003}. 
In other words, while a restriction $\restrict{\X}{U}$ unprojects $U$ to \emph{all} elements $x\in X$ that project to $U$, a section maps each $u\in U$ to just \emph{one} specific element $x\in X$ that projects to $u$ (It also follows, that $s(u) \in \restrict{\X}{U}$ for any section $s$.). We define useful cases of sections in the following.

\subsubsection{Lift}

When $U$ contains only a single element $\{b\}$, we call the section a \emph{lift}. A lift $s(b)$ takes as input an element $b$ in $B$ and returns an element $x$ on the total space. We often like to single out a specific element $x$ by additionally choosing a fiber space element $f$, whereby we overload the lift as $s(b,f)$. In the case of $\X$ being a product space, we define the lift as $s(b,f) = (b,f)$. However, if $X \rightarrow B$ is not trivial, the base space element $b$ defines an isomorphic transformation of the fiber space (including the fiber element $f$), which in turn uniquely defines the element $x$. In this work, all bundle spaces are trivial except the Mobius strip. For the Mobius strip, we define the transformation as a linear transformation, involving a translation of the fiber space (here: the unit $[0,1]$ interval) around the circle while simultaneously rotating the fiber space. 

\subsubsection{Path Section}

When the subset $U$ is an interval, we call the section a \emph{path section}. A path section of a path $\path_B: I \rightarrow B$ is itself a path $\path: I \rightarrow \X$ such that $\path_B = \pi \circ \path$.

Our algorithms will aim to find feasible path section, i.e., feasible
unprojections of paths in the base space to paths in the full space. We use
three interpolation methods to this end. All three methods take as input a base
path $\path_B$ and two total space elements $\x_1$ and $\x_2$ in $\X$. Let
$\pi_F$ be the projection of the total space onto the the fiber space (i.e.
orthogonal to the base space projection $\pi$). We then compute fiber space
elements $f_1=\pi_F(\x_1)$ and $f_2=\pi_F(\x_2)$ that introduce coordinates
along the fibers, and which we use to interpolate. Each method differs by how we
interpolate between $f_1$ and $f_2$ along the path restriction
$\pi^{-1}(\path_B)$ (see also Fig.~\ref{fig:bundle_sections}). 

\begin{figure}
    \centering
\def\height{0.4\linewidth}
\def\radiusPoint{2pt}
\def\length{0.8\linewidth}
\def\lineWidth{1pt}

\tikzset{
    border/.style={line width=0.1mm,gray!50, dashed},
    L2/.style={line width=1pt, black, shorten <=0.02*\length, shorten >=0.02*\length},
    L1FF/.style={line width=1pt, black, dash pattern={on 7pt off 3pt}, shorten <=0.02*\length, shorten >=0.02*\length},
    L1FL/.style={line width=1.2pt, black, loosely dotted,
    shorten <=0.02*\length, shorten >=0.02*\length},
    axis/.style={line width=1pt,->}
}
\begin{tikzpicture}

\def\offsetZ{-0.5}
\draw[axis](\offsetZ,0) -- (1.05*\length, 0) node[below] {$B$};
\draw[axis](\offsetZ,0) -- (\offsetZ,  1.1*\height) node[left] {$X$};

\draw[border](0,\height) -- (0.9*\length, \height);

\foreach \x [count=\xi] in {0,0.2*\length, 0.6*\length, 0.75*\length,0.9*\length}
{
    \fill (\x,0) circle[radius=\radiusPoint] node[below] {$b_{\xi}$};
    \draw[border](\x,0) -- (\x, \height);
};

\coordinate (z1) at (0, 0.2*\height);
\coordinate (z2) at (0.9*\length, 0.9*\height);
\coordinate (z2prime) at (0, 0.9*\height);
\coordinate (z1prime) at (0.9*\length, 0.2*\height);

\fill (z1) circle[radius=\radiusPoint] node[left] {$\x_{1}$};
\fill (z2) circle[radius=\radiusPoint] node[right] {$\x_{2}$};
\fill (z2prime) circle[radius=\radiusPoint] node[left] {$\x'_{2}$};
\fill (z1prime) circle[radius=\radiusPoint] node[right] {$\x'_{1}$};
\draw[L2] (z1) -- (z2);
\draw[L1FF] (z1) -- (z2prime) -- (z2);
\draw[L1FL] (z1) -- (z1prime) -- (z2);

\end{tikzpicture}
    \caption{Bundle sections on fiber bundle $\X \rightarrow B$ with base path $\{b_1,b_2,b_3,b_4,b_5\}$. We show three interpolated sections on the bundle space: L2 section (solid line), L1 fiber first section (dashed line) and L1 fiber last section (dotted line).\label{fig:bundle_sections}}
    
\end{figure}

\subsubsection{L2 Section}
To interpolate a section, we can use a straightforward \Ltwo section. To interpolate an \Ltwo section, we use the shortest path under the \Ltwo-norm, which is simply the linear interpolation
\begin{equation}
    l_{\Ltwo}(t) = (1-t) f_1 + t (f_2 - f_1) ~.
\end{equation}
We then compute the section as
\begin{equation}
    \path(t) = s(\path_B(t), l_{\Ltwo}(t))
\end{equation}
by lifting each path base element to the bundle space. We use the \Ltwo section
mainly to compute quotient space metrics (Sec.~\ref{sec:metric_and_steering}). 

\subsubsection{L1 Section}

An alternative to \Ltwo sections are \Lone sections. In an interpolation with an \Lone section, we compute the section as
\begin{equation}
    \path(t) = s(\path_B(t), l_{\Lone}(t))
\end{equation}
with the interpolation
\begin{equation}
    l_{\Lone} = \begin{cases}
        f_1, & \text{if } t < \frac{1}{2}\\
        f_2, & \text{if } t \geq \frac{1}{2}
        \end{cases}
\end{equation}
\noindent We use two flavors of \Lone sections. The first flavor are fiber first (FF) sections, where we use the adjusted base path as
\begin{equation}
    \path_{\text{FF}}(t) =
    \begin{cases}
        \path_B(0), & \text{if } t < \frac{1}{2}\\
        \path_B(2(t-\frac{1}{2})), & \text{if } t \geq \frac{1}{2}
        \end{cases}
\end{equation}
\noindent The second flavor are \Lone fiber last (FL) sections, where we use the base path as

\begin{equation}
    \path_{\text{FL}}(t) =
    \begin{cases}
        \path_B(2t), & \text{if } t < \frac{1}{2}\\
        \path_B(1), & \text{if } t \geq \frac{1}{2}
        \end{cases}
\end{equation}
Both fiber first and fiber last \Lone sections are a cornerstones of our find section method,
which we will use alternately to find feasible sections (see Sec.~\ref{sec:primitives:pathsections} for details).

\subsection{Bundle Sequences\label{sec:bundle_sequences}}

With a fiber bundle, we model a single state space abstraction. To model multiple levels of abstraction, we can chain fiber bundles into sequences. A fiber bundle sequence is a tuple $(\X_{1:K},\fiber_{1:{K-1}},\pi_{1:{K-1}})$ such that the $k$-th base space is equal to the $k-1$-th total space. We write such a sequence as 
\begin{equation}
    \X_K \xrightarrow{\pi_{K-1}} \X_{K-1} \xrightarrow{\pi_{K-2}} \ldots \xrightarrow{\pi_{1}} \X_1
    \label{eq:bundlesequence}
\end{equation}
whereby we call the space $\X_k$ the $k$-th bundle space. 

\subsection{Admissible Fiber Bundles}

An important type of fiber bundles are the ones using admissible projections~\citep{orthey_2018}. An admissible projection is a projection preserving the feasibility of a state. This is important to preserve probabilistic completeness and asymptotic optimality. We next define admissible projections and discuss the corresponding notion of admissible fiber bundles. 

Let $\phi$ and $\phi_B$ be constraint functions on $\X$ and $\B$, respectively.
Given the constraint functions, we can define the free total space $\Xf$ and the
free base space $\Bf$ (see Sec.~\ref{sec:prelims:motionplanning}). For an
admissible projection, we require the projection mapping to fulfill the first
two requirements (Assertions~\ref{enum:projection1} and \ref{enum:projection2} above) plus the following third requirement:

\begin{enumerate}\addtocounter{enumi}{2}
    \item \textbf{Admissible}. The projection mapping does not invalidate solutions. This means, if we map the free state space $\Xf$ via $\pi$ onto the base space, then the image $\pi(\Xf)$ is a subset of the free base space $\Bf$. Or, equivalently, $\phi_B(\pi(x)) \leq \phi(x)$ for any $x \in \X$ \citep{orthey_2019}.
\end{enumerate}

If a projection mapping is admissible w.r.t.\ given $\phi$ and $\phi_B$, we call the fiber bundle an admissible fiber bundle. Analogously, if a sequence of fiber bundles contains only admissible projections, we call it an admissible fiber bundle sequence. It is important to note that admissibility is a requirement, if we like to prove probabilistic completeness or asymptotic optimality. 

Using admissible fiber bundle (sequences), we thus can tie together the notions of quotient spaces, constraint relaxations and admissible heuristics. 
First, we can interpret fiber bundles as a generalization of constraint relaxations~\citep{orthey_2019}, where paths on the base space are lower bound estimates on solution paths on the total space. 
Second, we can use a solution on the base space as an admissible heuristic~\citep{aine_2016} and exploit it by using either restriction sampling, by using a quotient space (base space) metric~\citep{passino_1994, pearl_1984}, or by computing sections along a given base space path~\citep{zhang_2009}. 

\subsection{Examples of Fiber Bundles\label{sec:bundle_examples}}

To make the discussion more concrete, we discuss two (multilevel) abstractions which are often used in motion planning. 

\subsubsection{Prioritized Multi-Robot Motion Planning}

To plan motions for multiple robots, we can prioritize the robots \citep{erdmann_1987, ma_2019}. Given $M$ robots, we can order them, then plan for the first robot and use its motion as a constraint on the motion of the next robot. We can formalize this as a fiber bundle sequence
\begin{equation}
    \Y_{1:M} \xrightarrow{\pi_{M-1}} \Y_{1:M-1} \xrightarrow{\pi_{M-2}} \cdots \xrightarrow{\pi_1} \Y_1
\end{equation}
whereby $\Y_m$ is the state space of the $m$-th robot and $\Y_{1:m}$ is the Cartesian product of the state spaces $\Y_1,\ldots,\Y_m$. In the fiber bundle sequence, we remove, in each step, the configuration space and the geometry of the least important robot. We can then either plan a path in $Y_1$ and use it as a constraint for the next robot (i.e. finding a feasible section in the path restriction). This is known as path coordination \citep{simeon_2002}. Or we use the graph on $Y_1$ to restrict sampling for the remaining robots, which is known as graph coordination \citep{svestka_1998}. In practice, we can realize graph coordination either by using an oracle to guide expansion \citep{solovey_2016} or by expanding dimensionality when conflicts arise \citep{wagner_2015}.

\subsubsection{Tangent Bundle and Path-Velocity Decomposition}

When planning for a dynamical system, we often can simplify planning using a tangent bundle decomposition. Given a state space $\X$ we impose a tangent bundle $\X = TM = M \times \R^{n}$ with projection
\begin{equation}
    M \times \R^{n} \overset{\pi}{\longrightarrow} M
\end{equation}
whereby $n=\dim{M}$, $\R^n$ is the fiber space and $M$ is the base space. We call $M$ the configuration space and $TM$ the tangent bundle. 
Planning on tangent bundles often follows a two step approach. First, we compute a path $p_M$ on the configuration space $M$ (the base space) avoiding obstacles and self-collisions. Second, we compute a velocity along the path, i.e. a time reparameterization. Such a time reparameterization is a path section of $p_M$ and we can find such a section by solving a convex optimization problem  \citep{bobrow_1985}, which we can solve efficiently \citep{pham_2018}. To guarantee completeness, however, we need to either plan on the full tangent bundle $TM$ \citep{lavalle_2001} or track valid speed profiles along paths on $M$ \citep{pham_2017}.

\section{Primitive Methods on Fiber Bundles\label{sec:primitives}}

\begin{figure}[t]
    \centering
    \input{pseudocode/restriction_sampling}
\end{figure}

To exploit fiber bundles, we derive a set of primitive methods. This includes methods to sample the base space,
to compute a metric, to select a bundle
space to grow next, and to rapidly find a feasible section. To implement each method, we develop
several different strategies and discuss how they influence the algorithms. To
select the best strategies for each algorithm, we perform a meta-analysis in
Appendix~\ref{sec:appendix:metaanalysis}. 

To use the primitive methods, we assume that every bundle space $\X_k$ has access to the following fiber bundle specific functions:

\begin{enumerate}
    \item A fiber space $\fiber_k = \X_k / \X_{k-1}$
    \item A base space projection $\pi_k: \X_k \rightarrow \X_{k-1}$
    \item A fiber space projection $\pi^F_k: \X_k \rightarrow \fiber_k$
    \item Projected start state $\xik$ and goal region $\Xgk$
    \item A graph $\G_k = (V_k, E_k)$ containing $|V_k|$ vertices and $|E_k|$ edges
\end{enumerate}

The primitives will be used in the development of the bundle planners (Sec.~\ref{sec:bundleplanners}), where we exploit primitives for improved planning performance.

\subsection{Restriction Sampling\label{sec:restriction_sampling}}

In restriction sampling, we sample states on the total space $\Xk$ by sampling
exclusively in the graph restriction induced by the graph on the base space
$\Xkk$ (see Sec.~\ref{sec:bundle_restrictions}), as we detail in
Alg.~\ref{alg:restriction_sampling}. We first check if the base space $\Xkk$
exists (Line
\algref{alg:restriction_sampling}{alg:restriction_sampling:exists}). If it does
not exists, we revert to a standard sampling method like uniform sampling (Line
\algref{alg:restriction_sampling}{alg:restriction_sampling:nobase}). If it does
exists, we first sample a base space element (Line
\algref{alg:restriction_sampling}{alg:restriction_sampling:samplebase}), then
use it to sample a fiber space element (Line
\algref{alg:restriction_sampling}{alg:restriction_sampling:samplefiber}) and
finally lift the base space element to the bundle space using the fiber space
element (Line \algref{alg:restriction_sampling}{alg:restriction_sampling:lift}).
The lift operation depends on if the bundle is trivial, in which case we just
concatenate base element and fiber element. If the bundle is non-trivial (like
the Mobius strip), we use the base element to index the correct fiber space,
then use the fiber element to index the correct bundle space element (see Sec.~\ref{sec:bundle_sections}).

To implement the \textsc{Sample} function, we use uniform sampling of the space. However, other sampling techniques are certainly possible, like Gaussian sampling \citep{boor_1999}, obstacle-based sampling \citep{amato_1998}, bridge sampling \citep{hsu_2003}, maximum clearance \citep{wilmarth_1999}, quasi-random \citep{branicky_2001}, utility-based \citep{burns_2005}, or deterministic sampling \citep{janson_2018, palmieri_2019}. To guarantee probabilistic completeness and asymptotic optimality, we only need to verify that those sequences are dense. 

The main method of restriction sampling is the \textsc{SampleBase} method. In the \textsc{SampleBase} method, we sample the graph $\Gkk$ on the base space. While numerous methods exist to sample a graph
\citep{leskovec_2006}, we found five methods particularly important.

\subsubsection{Random Vertex Sampling}
First, we can chose a vertex at random, which we refer to as Random-Vertex (RV)
sampling \citep{leskovec_2006}. In RV sampling, we choose a random integer
between $1$ and $|V|$ which uniquely defines a vertex on the graph $\G$. This
sampling is particularily fast ($O(1)$ operations), but might be overly
constrictive if we sample from a tree or a graph with long edges. However, for large
graphs, this sampling procedure is often the only alternative to not slow down
sampling.

\subsubsection{Random Edge Sampling}

Second, we can choose an edge at random, then pick a state on this edge, a
method we refer to as Random-Edge (RE) sampling \citep{leskovec_2006}.
This method requires two operations, first to pick an edge, then to pick a
number between $0$ and $1$ to determine the state on the edge. This method seems to be superior if the graph
is sparse and has long edges, in particular edges going through narrow passages. 

\subsubsection{Random Degree Vertex Sampling}

Third, we can choose a vertex at random, but biased towards vertices with a low
degree (number of outgoing edges). We refer to this as Random-Degree-Vertex
(RDV) sampling. 
With RDV sampling, we bias samples to vertices which are either in tight corners or inside of narrow passages. 
Vertices in large open passages often have many neighbors and thereby a large degree. 
This method, however, requires to update a probability function which tracks the degrees of each vertex. 

\subsubsection{Path Restriction Sampling}

\def\pathBiasFixed{\beta_{\text{fixed}}}
\def\pathBiasDecay{\beta_{\text{decay}}}
Fourth, we can choose a sample on the lowest cost path on the graph, a method we
refer to as \emph{path restriction} (PR) sampling. We can utilize PR sampling in two
ways. Either, we sample on the path restriction with a fixed probability $\pathBiasFixed$. This is similar to the fixed tunnel radius proposed by \cite{reid_2019}. Or, we first sample exclusively on the path restriction, then gradually decay towards the fixed path bias. We call this method PR decay sampling.

PR decay sampling allows us to model a change in belief. It is often true that
the shortest path on the base space contains a feasible section, which we should
search for by exclusively sampling on the path restriction \citep{orthey_2018}.
If we do not find a valid section, we should gradually dismiss our belief that a
section exists and try to sample the graph restriction instead. To model this
change in belief, we use an exponential decay function to smoothly transition
from probability $1$ down to the fixed probability $\pathBiasFixed$ using a
decay constant $\lambda$. See Appendix~\ref{sec:appendix:expdecay} for the definition of exponential decay. 

Before using PR decay sampling, we simplify the path. Simplifying the path is similar to the local path refinement method \citep{zhang_2009}, where a path is optimized to increase its clearance. For this operation, we use a simple short-cutting path optimizer, which does not slow down planning in high-dimensional spaces.

We use a path optimizer with a cost term for path length.

\subsubsection{Neighborhood Sampling}

Fifth, we can choose a sample not directly on the graph, but in an epsilon
neighborhood. We refer to this as \emph{neighborhood} (NBH) sampling. NBH
sampling is helpful when there is a path through a narrow passage
which comes close to its boundary. Those paths often do not have a feasible section. Instead, if we would perturbate the path slightly, we can often find a path admitting a feasible section. With NBH sampling, we first sample a configuration $x$
exactly on the graph, and then sample a second configuration $x'$ which we
sample uniformly in an epsilon ball around $x$. In practice, we use an
exponential decay (Appendix~\ref{sec:appendix:expdecay}) to smoothly vary the size of the neighborhood from zero up to epsilon. With NBH sampling, we can often solve problems where a solution through a narrow passage has few or no
samples, while using nearby samples allows us a bit more wriggle room. Note that instead of uniform epsilon sampling, we could also use a Gaussian distribution with mean $x$ and epsilon variance \citep{reid_2019}. However, in preliminary testing, we could not observe a difference between them.

\subsection{Bundle Space Metric\label{sec:metric_and_steering}}

An essential component of bundle algorithms are the nearest neighbor computations, which depend on choosing a good metric function. We discuss two possible metrics, the intrinsic bundle metric (ignoring the base space) and the quotient space metric (exploiting the base space).

\subsubsection{Intrinsic Bundle Metric}

To straightforwardly attach a metric to the bundle space, we use the geodesic distance between two points while ignoring the base space. We compute this intrinsic metric on $\X$ as

\begin{equation}
    d(x_1, x_2) = d_\X(x_1, x_2)
\end{equation}

While this is a naive way to compute the metric, we note that using base space information is often costly, and the total space metric is often good enough \citep{orthey_2019}.

\subsubsection{Quotient Space Metric}

If a base space is available, we can consider it as a quotient space, on which
we can define a quotient space metric \citep{guo_2019}. To define a quotient
space metric between two states, we first project both states onto the base
space, compute a shortest path $p_B$ using the base graph and then interpolate
an \Ltwo section along the path restriction $\restrict{\X}{p_B}$ (see Sec.~\ref{sec:bundle_sections}). 

In particular, given two points $x_1$ and $x_2$ in $\Xk$, we project them onto
the base space $\Xkk$ to yield $b_1 = \pi(x_1)$ and $b_2 = \pi(x_2)$. We then
compute the nearest vertices $v_1$ and $v_2$ on the graph $\G_{k-1}$ and we
compute a path on $\G_{k-1}$ between $v_1$ and $v_2$ using the A* algorithm with
the intrinsic base space metric as an admissible heuristic. Finally, we use the
fiber space projection of $x_1$ and $x_2$ to compute fiber space elements $f_1 =
\pi_F(x_1)$ and $f_2 = \pi_F(x_2)$, which we use to integrate an \Ltwo section
(Sec.~\ref{sec:bundle_sections}). We then compute the bundle space metric as
\begin{equation}
    d(x_1, x_2) = 
    \begin{cases}
    d_\X(x_1, x_2) & v_1 = v_2\\
    \begin{aligned}
        &d_F(f_1, f_2)  \\ &+ d_B(y_1, v_1) \\&+ d_B(y_2, v_2) \\&+ d_{\Gk}(v_1, v_2)
    \end{aligned} & \text{otherwise}
    \end{cases}
\end{equation}
with $d_F$ being the fiber space metric (\Ltwo), $d_B$ the base space metric and $d_{\Gk}$ the length of the shortest path on $\Gk$ between vertices under the base space metric. 

While the quotient space metric is more mathematically sound, there are two practical problems. First, computing this metric is costly, because we need to perform a graph search operation. Second, the graph on the base space might not yet be dense, thereby potentially returning values leading to an inadmissible heuristic, which in turn would mislead the planner on the bundle space. 

\subsection{Bundle Space Importance\label{sec:importance}}

In each iteration of multilevel motion planning, we make a choice about expanding a graph by selecting a level. To select a level, we attach an importance function to each bundle space, which we use to rank the bundle spaces. We develop three different importance strategies.

\subsubsection{Uniform\label{sec:importance_uniform}}

In uniform importance, we select all bundle spaces an equal amount of times. This is similar to round-robin change, similar to scheduling operations \citep{russell_2002}. Here we use a slight variation, where we compute the importance based on the number of vertices, thereby ensuring a uniform expansion of each level. In particular, for bundle space $\Xk$ with graph $\Gk$ and $|V_k|$ vertices, we compute its importance as $\frac{1}{|V_k|+1}$.

\subsubsection{Exponential\label{sec:importance_exponential}}

To densely cover spaces with higher dimensions, we usually require more samples. In general, the sampling density is proportional to $N^{\frac{1}{d}}$ where $N$ is the number of samples and $d$ the dimensionality \citep{hastie_2009}. 
Therefore, we should select the space with the lowest density first, thereby guaranteeing equal sampling density across all spaces. 
We can compute an exponential importance as $\frac{1}{|V_k|^{1/d}+1}$ which reflects an exponential increase of samples in higher dimensions. 
This idea is similar to the selection of bundle spaces using a geometric progression \citep{xanthidis_2018}. 
This is also related to multilevel monte carlo \citep{giles_2015} and sparse grid methods \citep{bungartz_2004}.

\subsubsection{Epsilon Greedy\label{sec:importance_greedy}}

Whenever we find a graph connecting initial and goal state on the base space, it
seems reasonable to greedily exploit this graph to find a path on the bundle
space. This strategy is not complete, since the graph might not yet contain a
feasible section (see Sec.~\ref{sec:bundle_restrictions}). We can, however, create a complete algorithm by extending the base space with an epsilon probability while extending the bundle space the rest of the time. We compute this as
\begin{equation}
    f(k) = 
    \begin{cases}
    \epsilon^{K-k} - \epsilon^{K-k+1} & k>1\\
        \epsilon^{K-1} & \text{otherwise}
    \end{cases}
\end{equation}
whereby $k$ is the bundle space level and $K$ is the total number of bundle spaces. We then compute the importance for the $k$-th bundle space as $\frac{1}{|V_k|/f(k)+1}$, reflecting our desire to expand recent levels more aggressively. 

\begin{figure}[t]
    \centering
    \input{pseudocode/hasfeasiblesection}
\end{figure}

\subsection{Finding Path Sections\label{sec:primitives:pathsections}}

Finding path sections quickly and reliably is one of the cornerstones of all
bundle planners. In this section, we use the interpolation methods of
Sec.~\ref{sec:bundle_sections} to develop a recursive path section algorithm,
which we depict in Alg.~\ref{alg:section}. For this to work, we need to have at least a base space (Line \algref{alg:section}{alg:section:exist}). We then compute the shortest path on the base space (Line \algref{alg:section}{alg:section:path}) and recursively compute a section, either by starting from an L1 fiber-first section (Line \algref{alg:section}{alg:section:L1FF}) or if unsuccessful, by starting from an L1 fiber-last section (Line \algref{alg:section}{alg:section:L1FL}). 

To recursively compute a section, we show the pseudocode in Alg.~\ref{alg:recursivesection}. We terminate the algorithm if we reach a certain depth $\dmax$ (Line \algref{alg:recursivesection}{alg:recursivesection:terminatedmax}) or if we reach the goal region (Line \algref{alg:recursivesection}{alg:recursivesection:terminategoal}). Inside each recursion iteration, we interpolate an L1 section, either fiber first (if FF is true) or fiber last (if FF is false) (Line \algref{alg:recursivesection}{alg:recursivesection:interpolate}). We then propagate the system along the section while valid (Line \algref{alg:recursivesection}{alg:recursivesection:propagatewhilevalid}) and return the last valid state. 

If we do not reach the goal state with the last valid state, we do up to $\bmax$ sidesteps along the fiber space. Sidestepping means that we project the last valid state onto the base space (Line \algref{alg:recursivesection}{alg:recursivesection:projectbase}), then sample a random fiber space element (Line \algref{alg:recursivesection}{alg:recursivesection:samplefiber}) and lift the states to the bundle space to obtain a state $\xk$ (Line \algref{alg:recursivesection}{alg:recursivesection:lift}). We then check if we can move from the last valid state to the state $\xk$ (Line \algref{alg:recursivesection}{alg:recursivesection:checkmotion}). Since both states have the same base space projection, we call this a \emph{sidestep} (i.e. a step orthogonal to the base space). If the motion is valid, we clip the remaining base path (Line \algref{alg:recursivesection}{alg:recursivesection:getsegment}) and recursively call the algorithm (Line \algref{alg:recursivesection}{alg:recursivesection:recursion}). In the recursion call, we increase the depth, use the clipped base path segment and change the interpolation method from fiber first to fiber last. We change the interpolation at this point, because we observe an alternation between interpolation methods to substantially improve runtime. 

\subsubsection{Nonholonomic Constraints}

In the case of holonomic constraints, we can use the \Lone interpolation (Line \algref{alg:recursivesection}{alg:recursivesection:interpolate}) and the base space segment (Line \algref{alg:recursivesection}{alg:recursivesection:getsegment}) to follow the path restriction exactly. However, if we have nonholonomic constraints, we often cannot follow the path restriction exactly, in particular if the base space path is piece-wise linear. Note that a base space path is often piece-wise linear if we do not impose additional smoothness assumptions \citep{vidal_2019, hoenig_2018}. 

To still compute path sections over piece-wise linear base space paths in the
nonholonomic case, we do a two-phase approach. First, we compute the
interpolation values as in Sec.~\ref{sec:bundle_sections}, but only at discrete
points, which provides us with a set of points on the bundle space. Second, we
interpolate between those points by using the nonholonomic steering function.
While we might deviate from the base path restriction, we follow, however, the
base path restriction as close as the steering function allows us. This approach
is similar to the idea of interpolating waypoints with dynamically feasible path
segments, which has been done for flying quadrotors \citep{richter_2016} and for
underwater vehicles \citep{yu_2019}. However, we differ by first interpolating
values for the fiber spaces along the base space path. The remaining computation
in Alg.~\ref{alg:recursivesection} remains exactly as in the holonomic case.

\section{Bundle Space Motion Planners\label{sec:bundleplanners}}

\begin{figure}[t]
    \centering
    \input{pseudocode/bundleplanner}
\end{figure}

To solve a multilevel motion planning problem, we develop a set of algorithms
generalizing existing motion planners to fiber bundles. 
All those planners share
the same high-level structure, which we call a \textsc{BundlePlanner} (Alg.~\ref{alg:bundleplanner}). 
In the \textsc{BundlePlanner} method, we first initialize a priority queue sorted by the importance of each bundle space (Line \algref{alg:bundleplanner}{alg:bundleplanner:priorityqueue}). We then iterate over all bundle spaces, try to find a section on the $k$-th bundle space (Line \algref{alg:bundleplanner}{alg:bundleplanner:findsection}) and then push the $k$-th bundle space into the priority queue (Line \algref{alg:bundleplanner}{alg:bundleplanner:pushk}). We then execute the while loop while a planner terminate condition (\textsc{PTC}) is not fulfilled for the $k$-th bundle space (Line \algref{alg:bundleplanner}{alg:bundleplanner:while}). Inside the loop, we select the most important bundle space, grow the graph or tree and push the space back into the queue (Line \algref{alg:bundleplanner}{alg:bundleplanner:popselect} to \algref{alg:bundleplanner}{alg:bundleplanner:pushselect}). We terminate if the \textsc{PTC} for the $K$-th bundle space has been fulfilled. This means we either terminate successfully, found the problem to be infeasible or reach a timelimit. 

All bundle space algorithms are alike in sharing the same high-level structure;
each bundle space algorithm differs in their \textsc{Grow} function (Line
\algref{alg:bundleplanner}{alg:bundleplanner:growselect}) and their primitive
methods (Sec.~\ref{sec:primitives}). 

\subsection{Bundle Planner Variants\label{sec:concretebundleplanner}}

The \textsc{BundlePlanner} algorithm is used to develop novel algorithms by changing the \textsc{Grow} function. To implement the \textsc{Grow} function, we can utilize almost any single-level planning algorithm. In our case, we use the algorithms RRT, RRT*, PRM and PRM* (please consult Tab.~\ref{tab:list_of_algorithms} for abbreviations of algorithms). 

All grow functions in a multilevel versions of our algorithms differ from their single-level version in four points. First, we replace uniform sampling by restriction sampling, as we detail in Sec.~\ref{sec:restriction_sampling}. Algorithms might differ in how we implement graph sampling in restriction sampling. Second, when pushing a new bundle space into the priority queue, we check for a feasible section over the solution path on the last bundle space, as we detail in Sec.~\ref{sec:primitives:pathsections}. This computation is equivalent for each bundle planner. Third, we rank bundle spaces based on a selection criterion, which we detail in Sec.~\ref{sec:importance}. Algorithms might differ in the type of selection criterion we employ. Fourth, we adjust the metric on the bundle space, which affects both nearest neighbors computation and the steering method, as we detail in Sec.~\ref{sec:metric_and_steering}. While different metrics are possible \citep{orthey_2018}, we use the intrinsic bundle metric for all algorithms (as determined by our meta-analysis in Appendix~\ref{sec:appendix:metaanalysis}).
\begin{figure}
    \centering
    \input{pseudocode/qrrt}
    \input{pseudocode/qrrt_star}
    \input{pseudocode/qmp}    
\end{figure}

\subsection{QRRT}

In Alg.~\ref{alg:qrrt_grow}, we show the QRRT algorithm. We previously
introduced QRRT in \cite{orthey_2019}. We differ here by using an exponential
importance primitive (Sec.~\ref{sec:importance_exponential}) and by adding the
find section primitive (Sec.~\ref{sec:primitives:pathsections}). The remaining
structure, however, remains unchanged. In detail, we sample a random element
from the bundle space
(Line~\algref{alg:qrrt_grow}{alg:qrrt_grow:restrictionsampling}) using
restriction sampling (Sec.~\ref{sec:restriction_sampling}). We then choose the nearest vertex from the tree (Line~\algref{alg:qrrt_grow}{alg:qrrt_grow:nearest}) and steer from the nearest to the random element (Line~\algref{alg:qrrt_grow}{alg:qrrt_grow:steer}). We then check if the motion is collision-free and add the new state to the tree. Note that we stop steering if the distance goes above a threshold, similar to RRT \citep{lavalle_2001}.

\subsection{QRRT*}

While QRRT performs well in our evaluations, we can improve upon QRRT by developing an asymptotic optimal version. We call this QRRT* and depict the algorithm in Alg.~\ref{alg:qrrtstar_grow}. By developing QRRT*, we generalize RRT* \citep{karaman_2011} to multiple levels of abstraction. 
To implement QRRT*, we use one iteration of QRRT (Line~\algref{alg:qrrtstar_grow}{alg:qrrtstar_grow:qrrt}), then compute $k$ nearest neighbors of the new state (Line~\algref{alg:qrrtstar_grow}{alg:qrrtstar_grow:nearest}). We choose the $k$ as $k = \krrt \log(N)$ whereby $N$ is the number of vertices in the tree \citep{karaman_2011}. The parameter $\krrt$ can be chosen based on the dimension of the problem \citep{karaman_2011, kleinbort_2019}. 

After computing $k$ nearest neighbors, we perform two rewire operations (this dicussion follows closely \cite{salzman_2016}). First, we rewire the nearest neighbors to the new state (Line~\algref{alg:qrrt_grow}{alg:qrrtstar_grow:rewireone}). Second, we rewire the new state to the nearest neighbors (Line~\algref{alg:qrrt_grow}{alg:qrrtstar_grow:rewiretwo}). We show the rewire operation in Alg.~\ref{alg:rewire}. Inside the rewire algorithm, we update the incoming edge of state $y$ by checking if the cost to come from state $x$ (cost from initial state to $x$) plus the cost to go from $x$ to $y$ is smaller than the cost to come for state $y$. In that case, we update the graph by removing all incoming edges into $y$ and adding a directed edge from $x$ to $y$. Contrary to similar implementations \citep{karaman_2011, salzman_2016}, we also update the tree $\Gk$ such that we can use the same restriction sampling method for each algorithm. While the grow method is similar to the RRT* method \citep{salzman_2016}, we note that much of the complexity is encapsulated in the primitive methods (Sec.~\ref{sec:primitives}), which we use to sample, to compute distances, to find sections and to choose a bundle space to grow next.

\subsection{QMP}

In Alg.~\ref{alg:qmp}, we show the QMP algorithm, which we introduced in
\cite{orthey_2018}. In the QMP algorithm, we differ from QRRT by not growing a
tree, but a graph \citep{kavraki_1996}. QMP generalizes PRM in the sense that
QMP becomes equivalent to PRM when we choose a single-level abstraction. The
algorithm QMP as presented here differs slightly from its original conception
\citep{orthey_2018} in three points. First, we use the epsilon greedy importance
(Sec.~\ref{sec:importance_exponential}) instead of uniform importance to select
a bundle space to expand. Second, we use the intrinsic bundle metric
(Sec.~\ref{sec:metric_and_steering}) instead of the quotient space metric, which
we found to not scale well to high-dimensional state spaces (see Appendix~\ref{sec:appendix:metaanalysis}). Third, we use the \textsc{FindSection} method to quickly check for sections (Sec.~\ref{sec:primitives:pathsections}). 

\subsection{QMP*}

QMP* is similar as QMP, but we use a different $k$ in each iteration to chose the nearest neighbors. This $k$ is chosen such that the resulting algorithm is almost-surely asymptotically optimal \citep{karaman_2011}. In general we use $k = \kprm \log(N)$ with $N$ being the number of vertices in the graph. See also \cite{solovey_2020} for recent developments on choosing the parameter $\kprm$.

\subsection{Open Source Implementation\label{sec:algorithms:implementation}}

To make the algorithms freely available, we provide implementations in C/C++, which we split into two frameworks. The first framework is a graphical user interface (GUI) where users can specify fiber bundles by providing URDF (Unified Robotic Description Format) files for each level and specify the bundle structure in an XML (Extensible Markup Language) file. We then provide functionalities to step through each level and to visualize the lowest-cost path on each level. The code is freely available on github\footnote{\url{https://github.com/aorthey/MotionExplorer}}.

The second framework is the actual implementation of fiber bundles, bundle algorithms, and primitives, which we implement as a submodule of the Open Motion Planning Library (OMPL) \citep{sucan_2009}. In particular, we encapsulate our code as an OMPL planner class, which we can use for benchmarking \citep{moll_2015} or analysis. 
This code is part of OMPL version 1.6.0 and includes a high-level introduction, a tutorial, and additional demos\footnote{\url{https://ompl.kavrakilab.org/multiLevelPlanning.html}}. 

\section{Analysis of Bundle Planners\label{sec:analysisalgorithms}}

Let $\X_K \xrightarrow{\pi_{K-1}} \ldots \xrightarrow{\pi_{1}} \X_1$ be a fiber bundle sequence. We like to prove that, on this fiber bundle sequence, the algorithms QRRT, QRRT*, QMP, and QMP* are probabilistically complete (PC) and that QRRT* and QMP* are asymptotically optimal (AO).

To prove those properties, we use two methods. First, we state three assumptions on the importance function and the datastructures, which we use to establish that restriction sampling is dense. Second, we argue that the bundle algorithms, when using restriction sampling, inherit the PC and AO properties from their single-level counterpart. 

\subsection{Assumptions} 

We require three assumptions to hold true. 

\def\initialcomponent{connected initial component}

\begin{enumerate}
    \item The importance function of each bundle space (Sec.~\ref{sec:importance}) monotonically converges to zero (we select every bundle space infinitely many times)
    \item Restriction sampling is dense in $\X_1$
    \item If restriction sampling is dense, the graph on the $k$-th bundle space is space filling in the \initialcomponent
\end{enumerate}

whereby the \emph{\initialcomponent} is the set of points in $\Xk$ which are path-connected\footnote{We say that two states are path-connected if there exists a continuous path connecting them.} to $\pi_k(x_I)$, i.e. to the projection of the initial state onto the $k$-th bundle space. A graph is said to be space-filling in a set $U$, if for any $x$ in $U$ there exists a path in the graph starting at $x_I$ and converges to $x$ \citep{kuffner_2011} (in the limit when running time goes to infinity). 

\subsection{Proof that Restriction Sampling is Dense}

When stripping down to the essentials, we observe that the bundle planners differ on the last level from non-multilevel planners by replacing uniform sampling with restriction sampling. While uniform sampling is dense in the \emph{complete} state space, restriction sampling differs, in that we can prove it to be dense in the \initialcomponent. 

To prove denseness, we need some notations. 
First, a set $U$ is dense in $\X$ if the intersection of $U$ with any non-empty open subset $V$ of $X$ is non-empty \citep{munkres_2000}. We abbreviate this by saying that a set is dense if its \emph{closure} $cl(U)$, the smallest closed set containing $U$, contains the space $\X$. When using a sequence of samples $\alpha_1,\alpha_2, \ldots$, we can interpret the sequence as a set $A = \{\alpha_i \mid i \in \N\}$. We can then say that the sequence is dense in the space $\X$ if the closure $cl(A)$ contains $\X$ (or is equal to). 

\def\Xkinit{I_k}
\def\Xkinitone{I_1}
\def\Xkkinit{I_{k-1}}
\def\alphak{A_k}
\def\alphaone{A_1}
\def\alphakk{A_{k-1}}

Let $\Xkinit$ be the \initialcomponent (on the bundle space $k$) and let $\alphak$ be a restriction sampling sequence. To prove $\alphak$ to be dense in $\Xkinit$, we choose an arbitrary set $U$ in $\Xkinit$. We then prove that there will be a non-empty intersection of $U$ with $\alphak$, i.e. given enough time, we will at least sample once from $U$. Our proof is inductive, i.e. we prove it to be true for $k=1$, then use this to inductively argue for arbitrary $k$.

In a preliminary version of the proof \citep{orthey_2019}, we showed restriction sampling to be dense in the free state space, which is true only if there is a single connected component. To make the proof more general, we replace the free state space here with the \initialcomponent. 

\begin{theorem}
$\alphak$ is dense in $\Xkinit$ for $k \geq 1$.
\label{theorem:dense_sampling}
\end{theorem}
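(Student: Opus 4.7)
The plan is to proceed by induction on $k$, using Assumptions 1--3 to propagate density up through the bundle sequence. For the base case $k=1$, there is no base space, so \texttt{RestrictionSampling} reduces to plain uniform sampling on $X_1$. Assumption 2 asserts that this sample set is dense in $X_1$. Since our state spaces are locally path-connected manifolds, the path-component $I_1$ is open in $X_1$, so any non-empty open $U \subseteq I_1$ is also open in $X_1$ and therefore met by $A_1$; Assumption 1 guarantees that level $1$ is sampled infinitely often, so $A_1$ is dense in $I_1$.

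For the inductive step, assume $A_{k-1}$ is dense in $I_{k-1}$; by Assumption 3, the graph $G_{k-1}$ is then space-filling in $I_{k-1}$. Let $U \subseteq I_k$ be any non-empty open set. The first key observation is that $\pi_{k-1}$ is open and continuous (immediate from Assertion~\ref{enum:projection2} in Sec.~\ref{sec:bundles}), so $V := \pi_{k-1}(U)$ is a non-empty open subset of $X_{k-1}$, and moreover $V \subseteq I_{k-1}$ since continuity preserves path-connectivity and $\pi_{k-1}(x_I^k) = x_I^{k-1}$. Picking any $y \in U$ and a local trivialization $\pi_{k-1}^{-1}(W) \cong W \times F_k$ around $\pi_{k-1}(y)$, the set $U$ must contain some basic open set $W' \times F'$ with $W' \subseteq W \cap V$ and $F' \subseteq F_k$ non-empty and open.

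Because $G_{k-1}$ is space-filling in $I_{k-1}$, the graph eventually has vertices or edge interiors inside $W'$, so \texttt{SampleBase} draws into $W'$ with positive probability under any of the strategies of Sec.~\ref{sec:restriction_sampling}. Conditional on such a base draw, \texttt{SampleFiber} is uniform on $F_k$ and therefore hits $F'$ with positive probability, and then \texttt{Lift} produces a sample lying in $U$. Assumption 1 guarantees that level $k$ is selected infinitely many times, so by a Borel--Cantelli-style argument some sample in $A_k$ almost surely lands in $U$. Since $U$ was arbitrary, $A_k$ is dense in $I_k$, closing the induction.

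The hardest step will be the local-trivialization bookkeeping for non-trivial bundles such as the Mobius strip, where \texttt{Lift} is not a Cartesian concatenation but the base-dependent isomorphism described in Sec.~\ref{sec:bundle_sections}. One needs to verify that on any trivializing chart the lift factors through $W \times F_k$ in a way that preserves openness, so that independently drawing base and fiber samples yields a distribution whose support is dense in $\pi_{k-1}^{-1}(W)$. A secondary subtlety is that not every \texttt{SampleBase} strategy listed in Sec.~\ref{sec:restriction_sampling} (e.g.\ random-degree-vertex, path-restriction with decaying bias) is literally uniform on $G_{k-1}$, so the argument must invoke the fact that each such strategy assigns asymptotically positive probability to every open subset of $I_{k-1}$ that $G_{k-1}$ eventually reaches; once those two points are in place, the remaining pieces assemble routinely.
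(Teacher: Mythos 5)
Your proposal is correct and follows essentially the same route as the paper: induction on $k$, using openness of the projection $\pi_{k-1}$ to push $U$ down to a non-empty open subset of the base, invoking the inductive density there to obtain a base sample (or graph element) in $\pi_{k-1}(U)$, and then using the denseness of fiber sampling over that element to land in $U$. Your additional bookkeeping --- openness of the initial component, the inclusion $\pi_{k-1}(U)\subseteq I_{k-1}$, the local-trivialization argument for non-trivial bundles, and the remark that non-uniform \texttt{SampleBase} strategies still assign positive probability to reachable open sets --- fills in details the paper's proof glosses over, but does not change the argument.
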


\begin{proof}
By induction for $k=1$, $\alphaone$ is dense in $\X_1$ by assumption and therefore dense in $\Xkinitone$ since $\Xkinitone \subseteq \X_1$. For the induction step, we can assume $\alphakk$ to be dense in $\Xkkinit$. Let $U$ be a non-empty open subset of $\Xkinit$. 
Since $U$ is open, $\pi_{k-1}(U)$ is open (by property of fiber bundle). 
By induction assumption there exists a $y$ in $\alphakk \cap \pi_{k-1}(U)$.
Consider an open set $V$ of the preimage $\pi^{-1}_{k-1}(y)$. Since $\alphak$ is dense in $\pi^{-1}_{k-1}(\alphakk)$ (by definition of restriction sampling), there exists an $x$ in $\alphak \cap V$ which is a subset of $U$. Since $U$ was arbitrary, $\alphak$ is dense in $\Xkinit$.
\end{proof}

Due to Theorem~\ref{theorem:dense_sampling}, we observe that restriction sampling differs from uniform sampling by removing states which cannot be feasible. Therefore, algorithms using restriction sampling maintain all their properties, which we can inherit. 

\subsection{Inheritance of Probabilistic Completeness}

A motion planning algorithm is probabilistically complete, if the probability that the algorithm will find a path (if one exists) goes to one as time goes to infinity. This property has been proven for sampling-based planners, in the case of a graph \citep{svestka_1996} including the case of a tree \citep{kuffner_2000}.

Probabilistic completeness follows in our case directly from the assumptions and our proof that restriction sampling is dense. In particular, let us assume a given motion planning problem to be feasible and containing a solution in the interior of the free space. Since restriction sampling is dense, by assumption, we have a space-filling graph containing a path starting at the initial state and converging to the goal state. 

In the grow functions of QRRT, QRRT*, QMP and QMP*, we directly implement the
corresponding versions of RRT, RRT*, PRM and PRM*, which all have been shown to
be probabilistically complete (see corresponding papers listed in Tab.~\ref{tab:list_of_algorithms}). They therefore necessarily need to construct a space-filling graph (tree) \citep{kuffner_2011} and all bundle space planners, when using restriction sampling, inherit the probabilistic completeness property. 

\subsection{Inheritance of Asymptotical Optimality}

An algorithm is (almost surely) asymptotically (near-) optimal (AnO) \citep{karaman_2011, salzman_2016} if it converges to  a cost at most $(1+\epsilon)$ times the cost of the optimal path. An algorithm is (almost surely) asymptotically optimal if it is AnO with $\epsilon=0$. 

Similar to probabilistic completeness, we argue that QRRT* and QMP* are asymptotically optimal, since this property is inherited from RRT* and PRM* \citep{karaman_2011}, respectively. This is true, since on the last level, we only change the sampling function from uniform to restriction sampling. Since we showed restriction sampling to be dense and we will select the last bundle space infinitely many times, we can be sure that the optimality properties are kept intact. Note that this line of reasoning is slightly different from the proof of asymptotic optimality for HBFMT \citep{reid_2020}, where \cite{reid_2020} define a probability $l$ with which they switch to use uniform sampling, thereby guaranteeing optimality by actually reverting to BFMT. We, however, rely on the denseness property of restriction sampling, thereby avoiding an uniform extension step.

Detailed proofs of asymptotic optimality for sampling-based planner can be found in \cite{karaman_2011}. See also \cite{salzman_2016} and \cite{solovey_2020} for a treatise of asymptotic near-optimality.

\section{Evaluation\label{sec:evaluation}}

To show the wide applicability of fiber bundles and bundle algorithms, we apply them to a broad range of planning scenarios. 
In particular, we evaluate our algorithms on four low-dimensional and eight high-dimensional planning scenarios, including computer animation, pre-grasping, multi-robot coordination, and non-holonomic constraints. 
The dimensionality of the state spaces in the high-dimensional case ranges from 21-dof (box folding) to 100-dof (hypercube). We compare our algorithms with available algorithms implemented in the Open Motion Planning Library (OMPL) as of May 2020 \citep{moll_2015}. 
References and details of those algorithms are shown in Tab.~\ref{tab:list_of_algorithms}. 
All algorithms, except QMP, QMP*, QRRT, and QRRT*, do not use the additional information which fiber bundles provide. We like to show that fiber bundles are helpful to solve scenarios which are near unsolvable using classical sampling-based methods~\cite{kavraki_1996, kuffner_2000}

\noindent\textbf{Evaluation Metrics.} For all scenarios, we let
each algorithm run $10$ times with a cut-off time limit of $60$s. For the low-dimensional scenarios, we report a success-cost plot showing convergence rate and success rate over time. 
In this case, we let the algorithms run for $60$s and query their current best cost with a $100$Hz update frequency (i.e. every $0.01$s). For the high-dimensional scenarios, we run two separate evaluations. 
First, a pure runtime evaluation, where we compare the average runtime on each scenario, comparing against all available OMPL planners. In this case, planners run until they either find a solution or the cut-off time limit has been reached. 
Second, we report on a success-cost plot for our algorithms against four well performing algorithms from OMPL, namely RRTConnect, RRT*, BIT*, and LBTRRT. In this case, all algorithms are run for $60$s with best cost queries at $100$ Hz update frequency. 

\noindent\textbf{Hardware.} 
Concerning hardware, we use a 4-core laptop
running Ubuntu 18.04 with $20$GB of RAM to run the runtime evaluation on the high-dimensional planning problems.
For the low-dimensional planning problems and the cost function evaluation on the high-dimensional problems, we use a 4-core laptop running Ubuntu 16.04 with $8$GB of RAM.
Concerning parameters, our algorithms are
set as follows. 
For the \textsc{FindSection} method, we use $\dmax = 3$ and
$\bmax=10$. 
For path restriction sampling, we use the decay constant $\lambda =
\num{1e-3}$ and fixed probability $\pathBiasFixed=0.1$. 
For QRRT, we use a
maximal distance range of $0.2\mu$ whereby $\mu$ is the measure of the space
(same value as in RRT or RRTConnect). 
For QMP, we use $k=10$ to compute nearest
neighbors (same as in PRM). 
For QMP*, we use the optimal number of nearest
neighbors in each iteration as in PRM* \citep{solovey_2020}. 
The choice of
primitive methods has been independently optimized using a meta analysis (See
Appendix~\ref{sec:appendix:metaanalysis}). 
We set any other parameters to be equivalent to the corresponding single-level planner. 

\noindent\textbf{State Spaces.}
In all scenarios, the state spaces of the robots are modelled using the following mathematical spaces. For rigid bodies, we use $SE(2)$ and $SE(3)$, the special Euclidean group in two and three dimensions, respectively~\citep{selig_2004}. The spaces in those groups model all rotations and translations applicable to a rigid body in two or three dimensions. For rotating joints, we use $SO(2)$ and $SO(3)$, the special orthogonal group. The spaces in those groups model all rotations about a fixed point of the robot. For all other kinematic chains with rotational joints and joint limits, we use the Euclidean space $\R^n$ of $n$ dimensions.

\begin{figure*}
    \centering
    \includegraphics[width=0.48\textwidth]{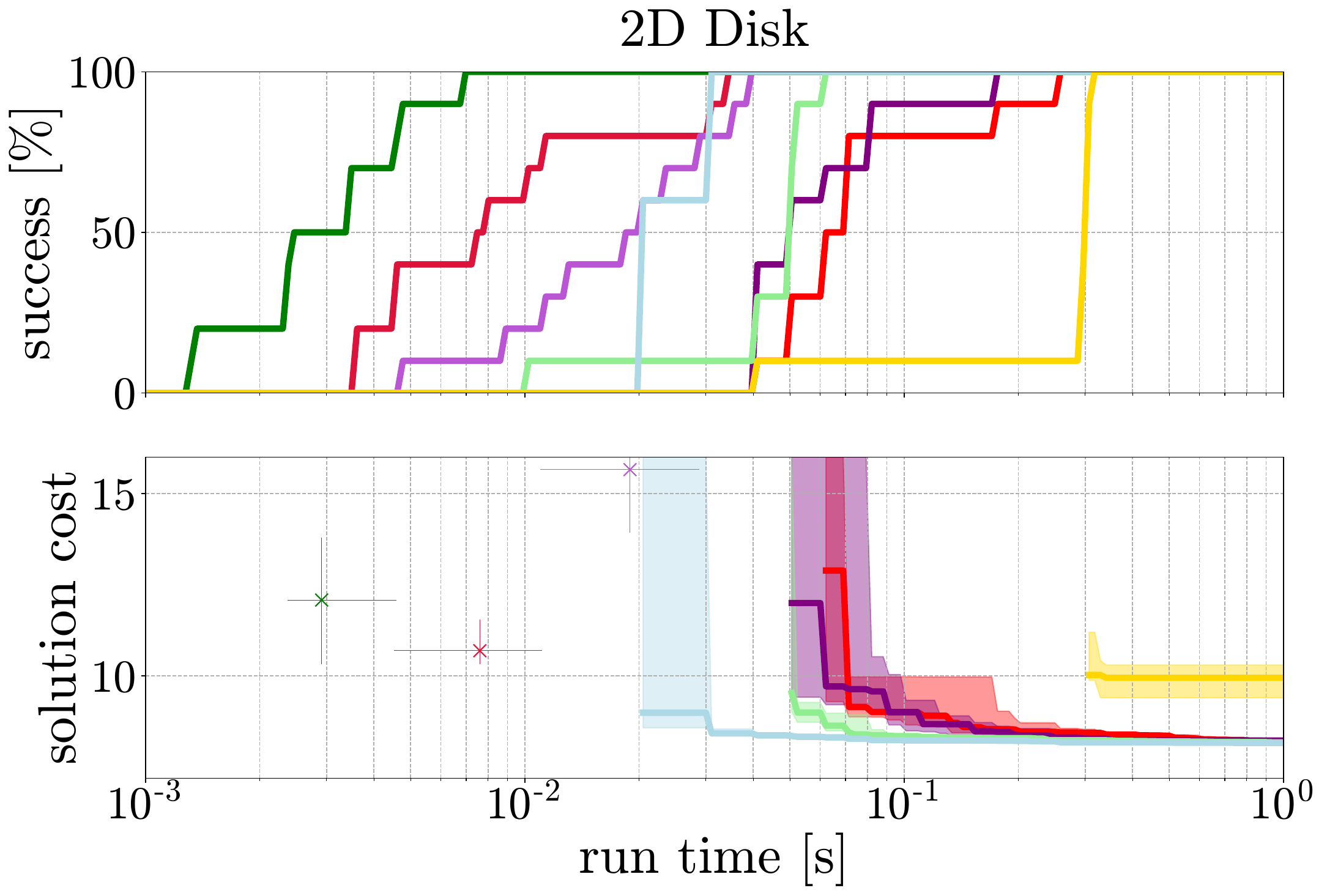}
    \includegraphics[width=0.48\textwidth]{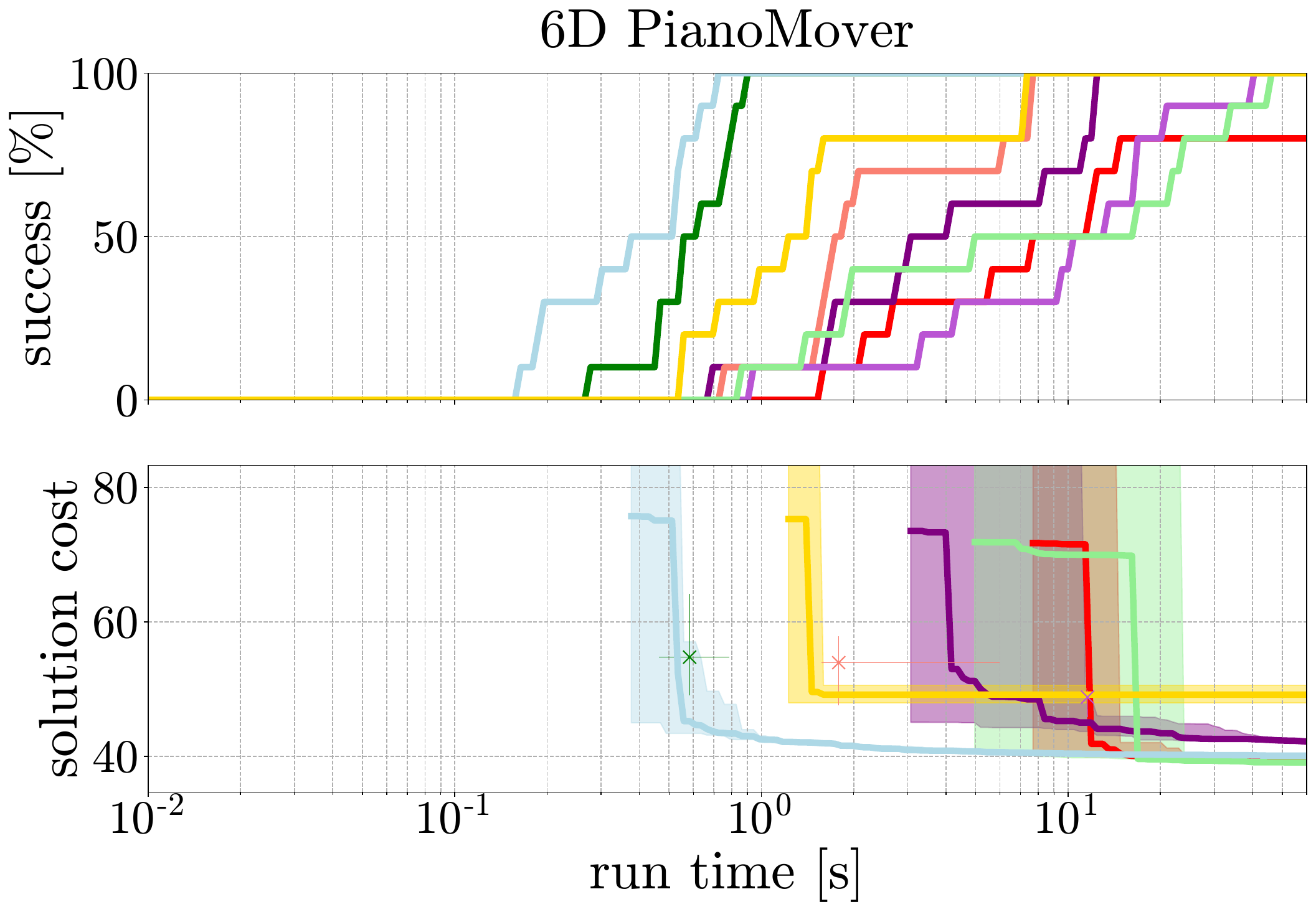}\\
    \includegraphics[width=0.48\textwidth]{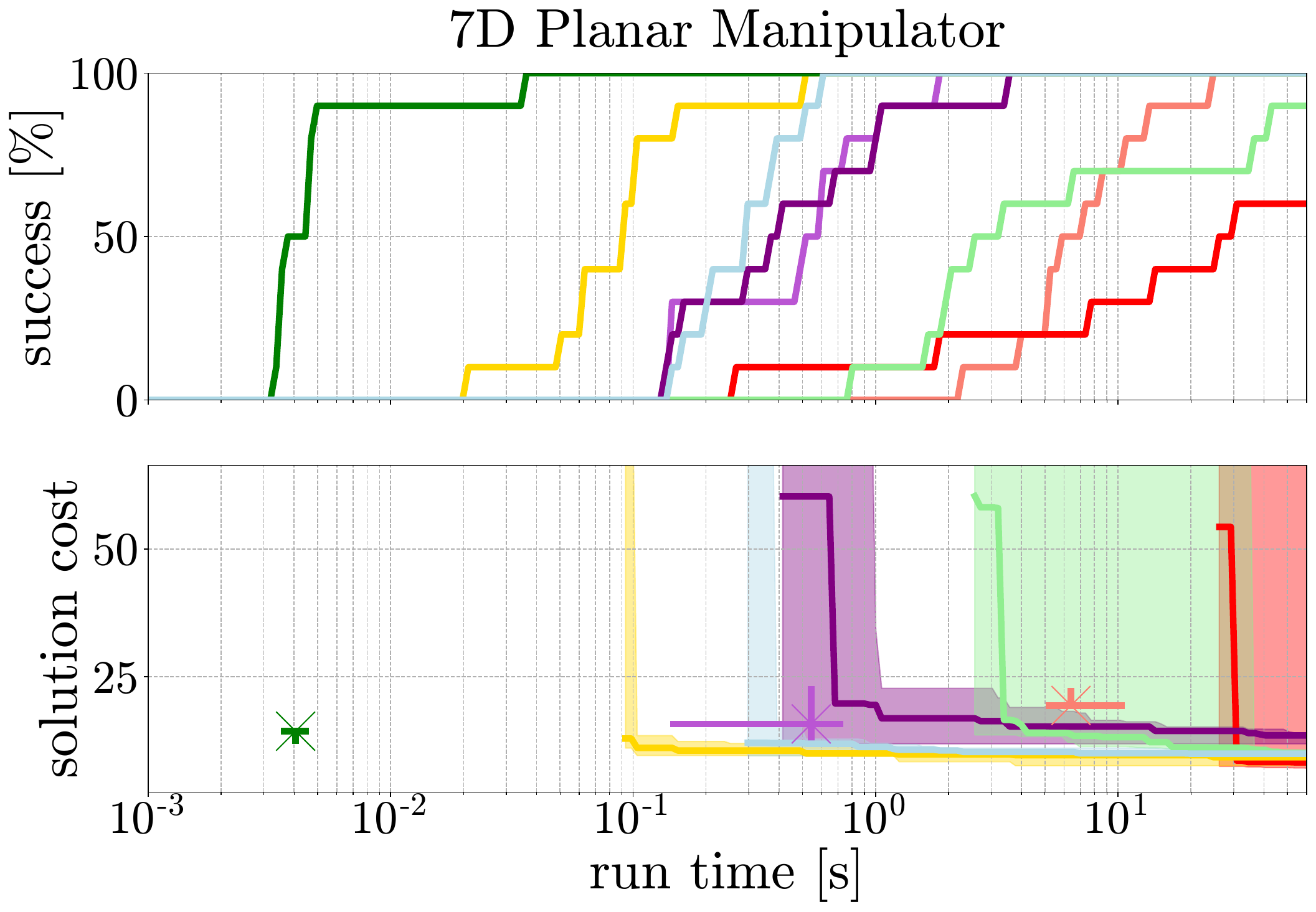}
    \includegraphics[width=0.48\textwidth]{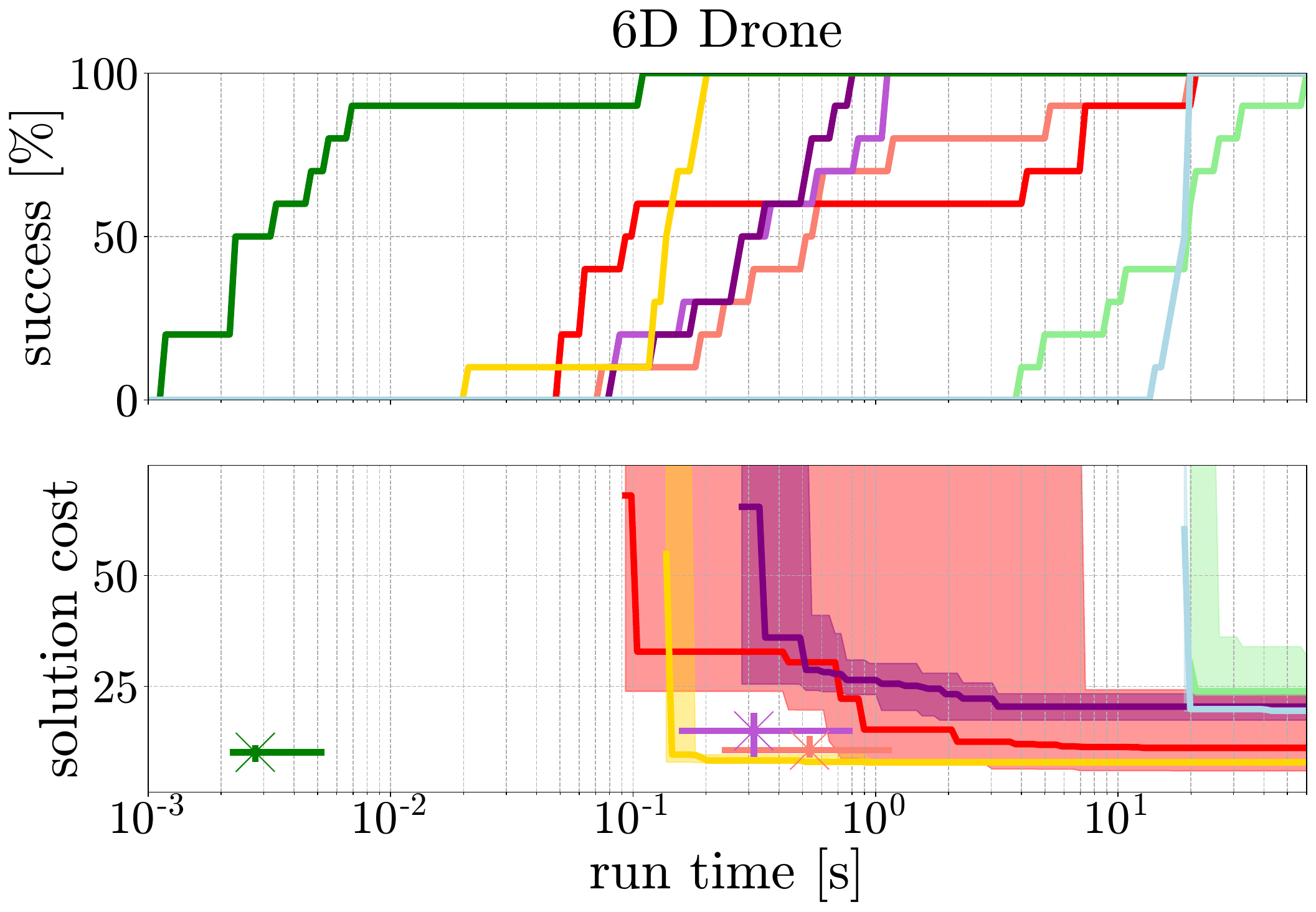}\\
    \centering\includegraphics[width=0.7\textwidth]{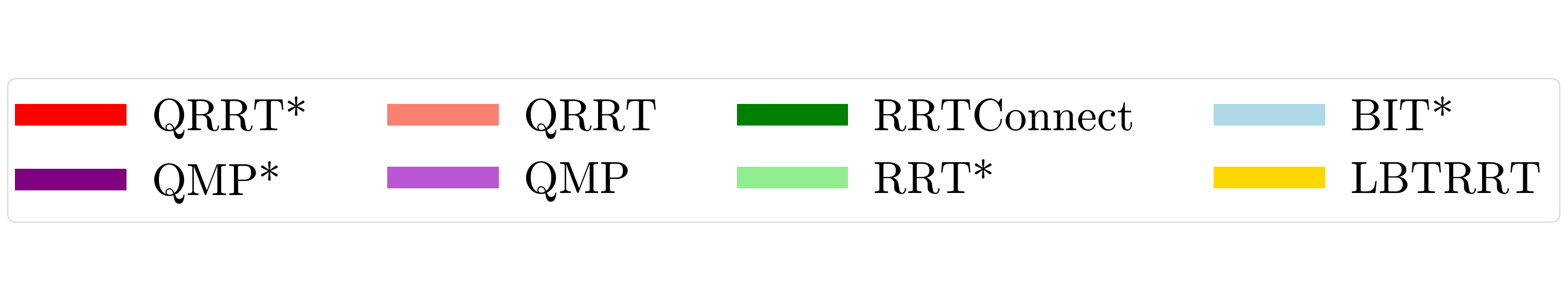}\\
    \caption{Success-cost plots of the four low-dimensional planning scenarios.\label{fig:successcost-lowdim}}
\end{figure*}

\begin{figure*}
    \begin{subfigure}[t]{0.5\textwidth}
    \centering
    \includegraphics[width=0.48\textwidth]{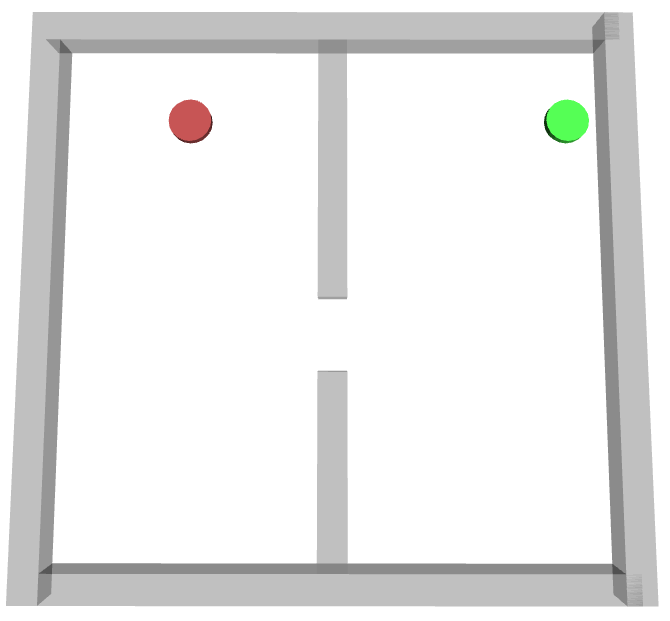}
    \includegraphics[width=0.48\textwidth]{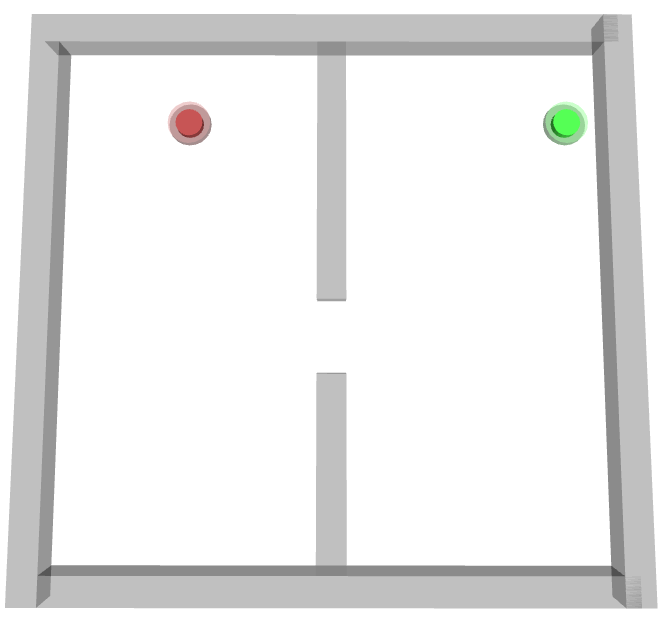}
    \caption{02D disk}
    \end{subfigure}
    \begin{subfigure}[t]{0.5\textwidth}
    \includegraphics[width=0.48\textwidth]{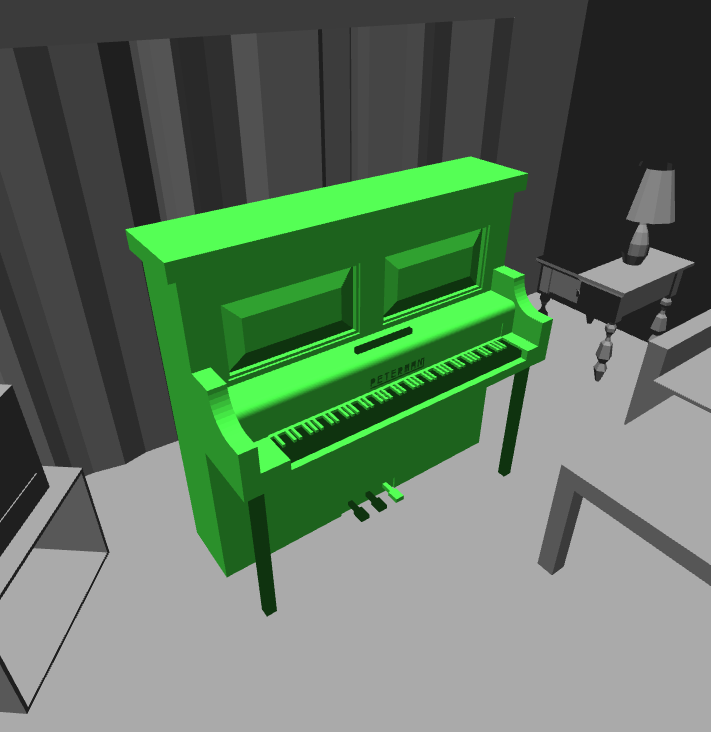}
    \includegraphics[width=0.48\textwidth]{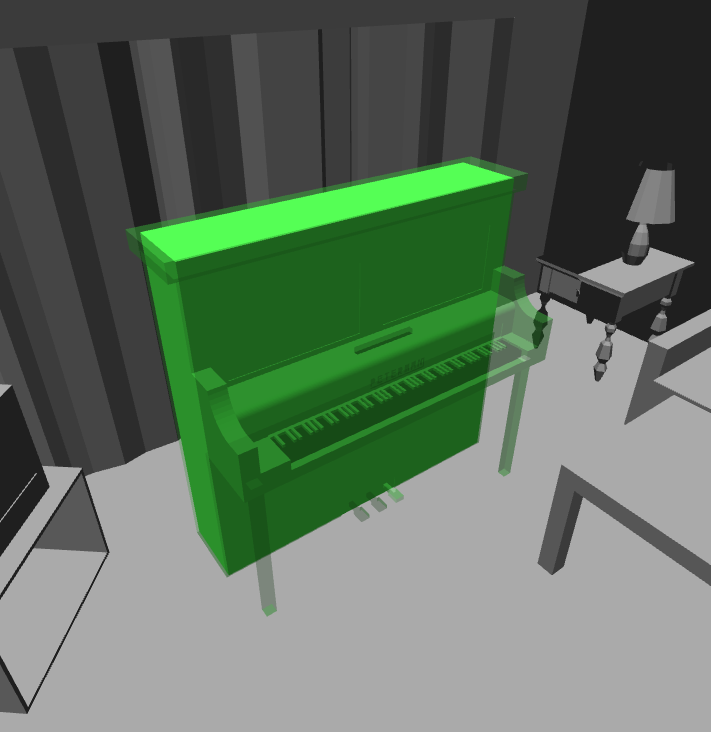}
    \caption{06D Piano Mover's problem}
    \end{subfigure}\\
    \begin{subfigure}[t]{0.5\textwidth}
    \includegraphics[width=0.48\textwidth]{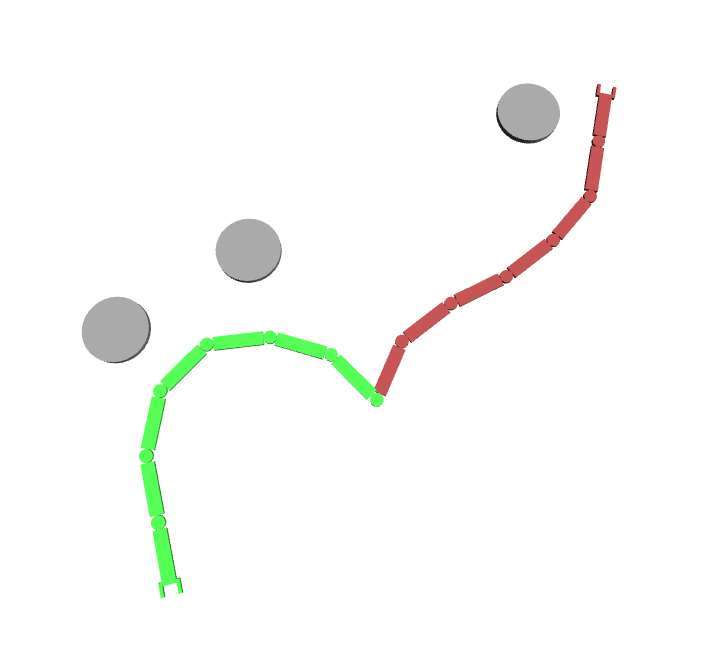}
    \includegraphics[width=0.48\textwidth]{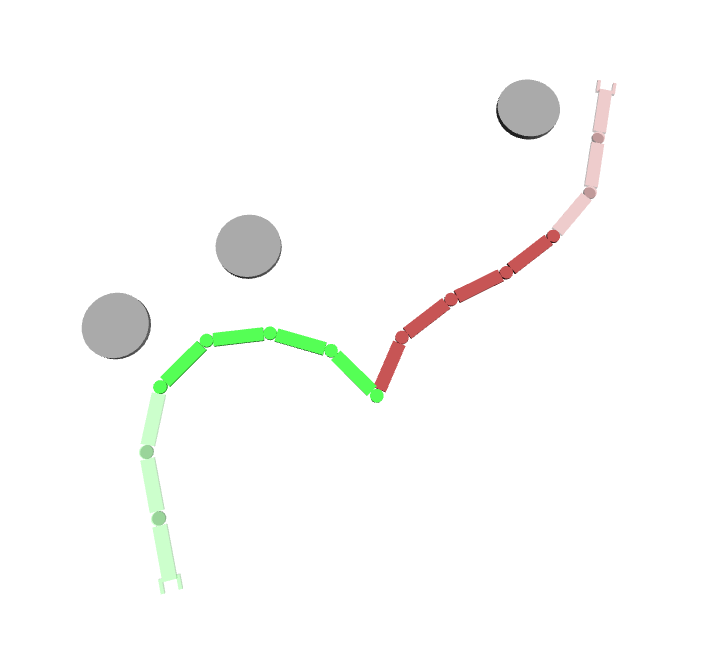}
    \caption{07D Planar Manipulator}
    \end{subfigure}
    \begin{subfigure}[t]{0.5\textwidth}
    \includegraphics[width=0.48\textwidth]{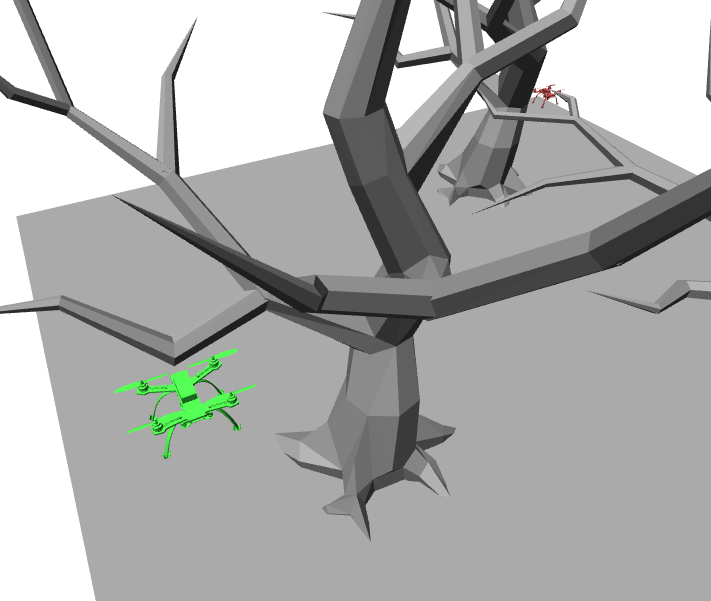}
    \includegraphics[width=0.48\textwidth]{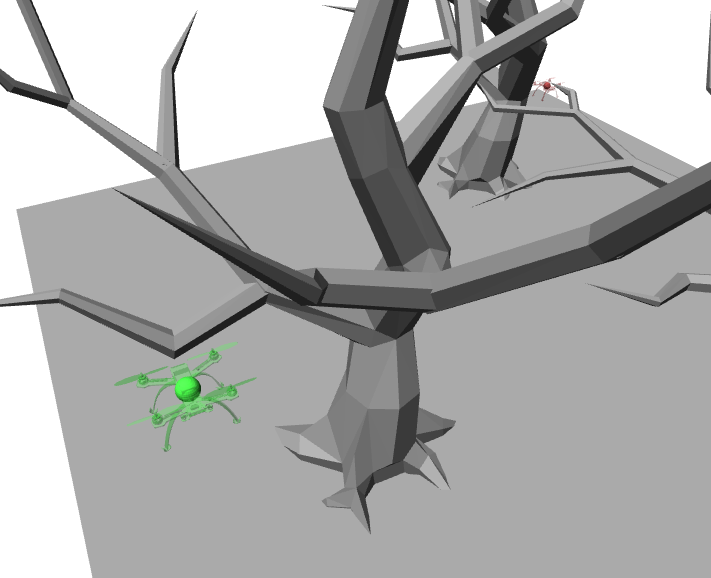}
    \caption{06D Double L-Shape}
    \end{subfigure}   
    \caption{Four scenarios for low-dimensional planning. Start configuration of robot (green) is shown alongside goal configuration (red) when applicable. In each figure, the robot is shown on the original space (left), and with the first projection applied (right), where the original robot is shown with a transparent color.\label{fig:scenarios-lowdim}}
\end{figure*}

\subsection{Low-dimensional motion planning}

In the low-dimensional motion planning evaluation, we evaluate QMP, QMP*, QRRT, and QRRT* against RRTConnect, RRT*, BIT*, and LBTRRT. This is done on four low-dimensional planning problems as shown in Fig.~\ref{fig:scenarios-lowdim}. We let all planners run until time out and collect both time to find the first solution and solution cost over time.

\subsubsection{2-dof Disk problem (2 levels)}
The first scenario is a 2-dof disk problem, where a small disk robot needs to traverse a square with a narrow passage in the middle. For our bundle algorithms, we use a projection onto a smaller inscribed disk with half the radius of the original disk (see Fig.~\ref{fig:scenarios-lowdim}). This creates a fiber bundle as
\begin{equation}
    \R^2 \rightarrow \R^2.
\end{equation}
The evaluation results are shown in Fig.~\ref{fig:successcost-lowdim} (Upper left). RRTConnect performs best in terms of quickest convergence to one hundred percent success rate, while BIT* performs best by converging the fastest to the optimal solution. All bundle planners can successfully solve this problem with competitive results both in terms of success rate (QRRT, QMP), and in terms of cost convergence (QRRT*, QMP*).

\subsubsection{3-dof Piano Mover's problem (2 levels)}

The second scenario is the piano mover's problem~\citep{schwartz1983piano}, where a piano has to be moved on a planar floor from one side of a house to the other side. 
As shown in Fig.~\ref{fig:scenarios-lowdim}, we impose a fiber bundle by inscribing a simpler shape into the original piano, thus imposing a fiber bundle as
\begin{equation}
    SE(2) \rightarrow SE(2).
\end{equation}
The evaluation results are shown in Fig.~\ref{fig:successcost-lowdim} (Upper right). BIT* and RRTConnect outperform in terms of success rate, while BIT* also converges quickest to a low-cost solution. All bundle planners perform slightly worse, but still competitive in terms of runtime and cost convergence.

\subsubsection{7-dof Planar Manipulator (4 levels)}

In the third scenario, we evaluate the planners on a 7-dof planar manipulator task, as shown in Fig.~\ref{fig:scenarios-lowdim} (Lower left). For this scenario, we impose four levels of abstractions, where we first project the 7-dof robot onto a 4-dof robot by removing the last three links. We then project onto a 2-dof robot by removing two links and finally we project onto a 1-dof robot by removing one link. The resulting fiber bundle can be written as
\begin{align}
\begin{aligned}
    SO(2) \times \R^6 \rightarrow SO(2) \times \R^3 \\ \rightarrow SO(2) \times \R^1 \rightarrow SO(2).
\end{aligned}
\end{align}
The evaluation results in Fig.~\ref{fig:successcost-lowdim} (Lower left) show that RRTConnect and LBTRRT perform best in terms of success rate, while LBTRRT converges quickest in terms of solution cost. Both QMP and QMP* perform competitively in terms of success rate and QMP* terms of cost convergence. QRRT has slightly worse performance in terms of success rate, but still solves the problem. QRRT*, however, does not solve all runs of this problem. 

\subsubsection{6-dof Drone (2 levels)}

In the fourth scenario, a drone has to traverse two trees to reach a goal state. The state space is $SE(3)$ with additional constraints on roll and pitch, but not yaw (to prevent impossible maneuvers). We impose a fiber bundle as 
\begin{equation}
    SE(3) \rightarrow \R^3,
\end{equation}
by inscribing a small sphere inside the drone, thereby reducing the state space to $\R^3$. 
The evaluation results in Fig.~\ref{fig:successcost-lowdim} (Lower right) show RRTConnect to converge quickest in success rate, with LBTRRT being fastest in cost convergence. All bundle planners solve this problem, but QMP* returns a slightly larger final cost compared to the lowest cost found.

\subsection{High-dimensional motion planning}

For the high-dimensional planning scenarios, we conduct two evaluations. First, we run a large set of planners from OMPL until a first solution is found (or a timeout occurs) and report on the runtime. Those results are evaluated for all available planners in OMPL, if they are applicable to the problem at hand. This case is discussed in Sec.~\ref{experiment:hypercube} up to Sec.~\ref{experiment:manipulator}. Second, we run the eight planners QMP, QMP*, QRRT, QRRT*, RRTConnect, RRT*, BIT*, and LBTRRT on each scenario until the timeout occurs. We collect both success rate and cost over time and plot those results as success-cost graphs. This case is discussed in Sec.~\ref{sec:evaluations:costanalysis}. Note that each case uses a different hardware setup as mentioned in Sec.~\ref{sec:evaluation}.
\subsubsection{100-dof Hypercube (98 levels)\label{experiment:hypercube}}

The hypercube \citep{gipson_2013} is a classical motion planning benchmark,
where we need to move a point robot in an $n$-dimensional cube $\X = [0,1]^n$
from $x_I = (0,\ldots,0)$ to $x_G = (1,\ldots,1)$. We allow the robot to move
only along corridors of size $\epsilon=0.1$ along the edges of the cube as shown
in Fig.~\ref{fig:100D_hypercube}. For more details see \cite{gipson_2013}. As a fiber bundle, we choose the sequence of reductions
\begin{equation}
[0,1]^n \rightarrow [0,1]^{n-1} \rightarrow \ldots \rightarrow [0,1]^2
\end{equation}
where the constraint function is the constraint function of the corresponding cube. 

Prior work showed solutions to $25$-dimensional cubes in around $100$s
\citep{gipson_2013}. Here, we attempt to solve a $100$-dimensional cube version.
The benchmarks are shown in Fig.~\ref{fig:100D_hypercube_benchmark}. All bundle
planners have an average runtime of less than $0.1$s. Also the non-bundle
planner SPARS2 terminates with a runtime of around $0.2$s. However, we note that
SPARS2 terminates with a probabilistic infeasibility proof, i.e. they declare
this problem infeasible. Only QRRT, QMP and their star versions can solve this
problem in the time limit given. While we terminate all planner at $60$s, we can
provide a rough estimate of performance improvement of QRRT compared to STRIDE
(which outperforms PRM, KPIECE, EST and RRT \citep{gipson_2013}). To do that, we
let STRIDE run on the $n=\{3,\ldots,9\}$ dimensional version of the cube, then
we extrapolate the results by fitting a cubic curve (see Fig.~\ref{fig:hypercube_progression}). Comparing the extrapolation to QRRT at the dimension $100$, we observe that QRRT performs around $6$ orders of magnitude better than STRIDE.

\begin{figure}
    \centering
    \includegraphics[width=\linewidth]{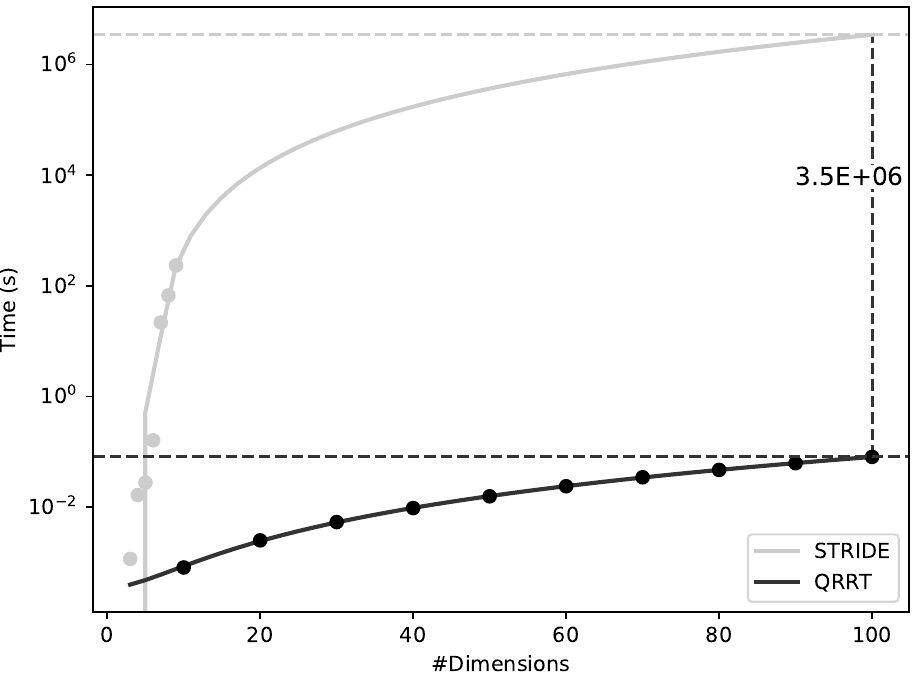}
    \caption{Hypercube scenario comparison of algorithms STRIDE and QRRT.\label{fig:hypercube_progression}}
\end{figure}
\subsubsection{21-dof Box folding (5 levels)}

To automate deliveries or assemble production pieces, we often need to compute folding motions. Here we concentrate on computing the folding motion of a small packaging box with $21$-dof (Fig.~\ref{fig:21D_box_folding}). Such problems are challenging, because parts of the box have to fit into small narrow passages, which is challenging for sampling-based planners. We use a fiber bundle sequence as
\begin{align}
\begin{aligned}
    SE(2)\times \R^{18} 
    &\rightarrow SE(2)\times \R^{16} 
     \rightarrow SE(2)\times \R^{13}\\
      &\rightarrow SE(2)\times \R^{10}
       \rightarrow SE(2)\times \R^{7}\\
       &\rightarrow SE(2)
\end{aligned}
\end{align}
which corresponds to the removal of (1) flaps on lid, (2) lid, (3) right side,
(4) left side, (5) back/front elements. We show the benchmarks in
Fig.~\ref{fig:21D_box_folding_benchmark}. The best performing algorithm is QMP
with $0.68$s of planning time. QRRT performs worse with around $6.4$s. We
discuss this performance difference in Sec.~\ref{sec:discussion}. Both QMP and QRRT together with QMP* outperform all other planning algorithms, i.e. no OMPL algorithm was able to solve this scenario in our timelimit. 

\subsubsection{24-dof Dubins Cars crossing (3 levels)}

With several companies pushing towards autonomous driving,
we need increasingly more efficient algorithms to coordinate multiple
car-like robots under non-holonomic constraints. We concentrate here on the
problem of planning motions for eight Dubins cars \citep{dubins_1957}, which are
cars with constant forward speed, which we can steer left or right. The cars
start on different ends of a crossroad (in reverse direction) and we need them
to cross the road while avoiding each other (Fig.~\ref{fig:24D_crossing_cars}). We impose a fiber bundle as
\begin{equation}
    SE(2)^8 
    \rightarrow (\R^2)^8
       \rightarrow (\R^2)^4
\end{equation}
which corresponds to the reduction onto a disk inscribed in the car and the
removal of the upper four robots, respectively. We show the benchmark in Fig.~\ref{fig:24D_crossing_cars_benchmark}. QRRT performs best with a planning time of $0.28$s closely followed by QRRT* ($0.29$s) and QMP ($1.77$s). QMP* performs less well with $12.41$s of planning time. Except EST with planning time of around $54$s, there was no non-bundle algorithm able to solve this coordination problem in the timelimit given.

\subsubsection{30-dof airport (15 levels)}

While coordinating motions for multiple cars are essential for traffic
coordination, we often need to coordinate multiple vehicles in 3D under
non-holonomic constraints. One particular instance of this problem is an
airport, in which we might need to coordinate cars, planes and zeppelins, each
with different state spaces and different possible nonholonomic constraints.
Here, we use a scenario with $3$ trucks, $1$ zeppelin, $1$ propeller plane, $1$
airplane while
taxiing\footnote{\href{https://en.wikipedia.org/wiki/Taxiing}{Taxiing} refers to
movements of an airplane on the ground, for example after landing or before
take-off.} and $2$ airplanes while flying (see Fig.~\ref{fig:30D_airport}). 
This scenario is particularly challenging, since all vehicles have non-holonomic constraints except the zeppelin. We model the dynamics of the trucks and the planes as Dubins car and Dubins airplane \citep{lavalle_2006}, respectively. Note that arbitrary dynamically constraints could be imposed, but there are implementations of Dubins car and airplane spaces available in OMPL, which makes them also useable with other algorithms in the library. We use a prioritization-like abstraction as
%
\begin{align}
\begin{aligned}
    SE(2)^4 \times SE(3) \times (\R^3 \times SO(2))^3
    \rightarrow\\
    \R^2 \times SE(2)^3 \times SE(3) \times (\R^3 \times SO(2))^3 \rightarrow \\
    SE(2)^3 \times SE(3) \times (\R^3 \times SO(2))^3 \rightarrow \\
    \R^2 \times SE(2)^2 \times SE(3) \times (\R^3 \times SO(2))^3 \rightarrow \\
    SE(2)^2 \times SE(3) \times (\R^3 \times SO(2))^3 \rightarrow \\
    \R^2 \times SE(2) \times SE(3) \times (\R^3 \times SO(2))^3 \rightarrow \\
    SE(2) \times SE(3) \times (\R^3 \times SO(2))^3 \rightarrow \\  
    \R^2 \times SE(3) \times (\R^3 \times SO(2))^3 \rightarrow \\
    SE(3) \times (\R^3 \times SO(2))^3 \rightarrow \\
    (\R^3 \times SO(2))^3 \rightarrow \\
    \R^3 \times (\R^3 \times SO(2))^2 \rightarrow \\
    (\R^3 \times SO(2))^2 \rightarrow \\
    \R^3 \times (\R^3 \times SO(2)) \rightarrow\\
    (\R^3 \times SO(2)) \rightarrow \\
    \R^3
\end{aligned}
\end{align}
where the first four $SE(2)$ spaces represent the three trucks and the taxiing airplane. The $SE(3)$ space represents the zeppelin and the remaining $3$ spaces of $\R^3\times SO(2)$ represent the two flying airplanes and the propeller plane, respectively. Each projection either projects an $SE(2)$ space by using the simpler robots of a nested disk, by removing a robot completely (and its geometry) or by nesting an inscribed sphere.
The benchmarks are shown in Fig.~\ref{fig:30D_airport_benchmark}. The best performing planner are QRRT ($0.52$s), QMP ($0.99$s) and QMP* ($0.94$s). QRRT* performs significantly worse with a planning time of around $58$s, which suggest that it could not completely solve this problem in the time allocated. Besides the bundle planner, we also observe that RRTConnect shows competitive results with $4.5$s of planning time.

\subsubsection{37-dof pregrasp (3 levels)}

Manipulation of objects is a challenging task for robots \citep{dafle_2018,
driess_2020}, in particular if we have to deal with realistic hands with many
dofs. We concentrate here on computing a pregrasp for a 37-dof shadow-hand robot
mounted on a KUKA LWR robot. We define the problem as finding a pregrasp for the
grasping of a small glass, as we depict in Fig.~\ref{fig:37D_pregrasp}. We impose a fiber bundle as
\begin{equation}
    \R^{31} \rightarrow \R^{18} 
       \rightarrow \R^{13} 
\end{equation}
which corresponds to a reduction by first removing all fingers except thumb and
index finger and second removing the thumb. The benchmark for this problem are
shown in Fig.~\ref{fig:37D_pregrasp_benchmark}. Both QMP and QMP* perform
best with around $6.81$s and $12.36$s of planning time. In this scenario, no non-bundle planner can solve this problem. Please note that the
planner QRRT and QRRT* perform around $44$s and $48$s. We discuss this
performance further in Sec.~\ref{sec:discussion}.

\subsubsection{48-dof drones (8 levels)}

Planning motions for multiple quadrotors 
\citep{hoenig_2018} is essential for drone delivery, in disaster response
scenarios and for entertainment purposes. We consider here the problem of
coordinating the motion of eight drones which have to traverse a small
forest-like environment as shown in Fig.~\ref{fig:48D_drones}. We use the fiber bundle
\begin{equation}
    SE(3)^8 \rightarrow SE(3)^7 \rightarrow \cdots \rightarrow SE(3),
\end{equation}
which corresponds to a prioritization of the drones, i.e. in each projection we
remove one robot. The benchmarks are shown in Fig.~\ref{fig:48D_drones_benchmark}. While the best algorithm is QRRT ($0.14$s) closely followed by QMP ($0.15$s) and QMP* ($0.16$s), we observe that also RRTConnect and BFMT show competitive performances with $0.59$s and $6.05$s, respectively.

\subsubsection{54-dof Kraken animation (17 levels)}

Computer animation is an important application of planning algorithms \citep{plaku_2018}. In animations for movies, an animator would probably insert keyframes to guide the planning of motions. However, if we like to compute animations online, for example for a computer game, we require fast planning algorithms.

We show here the problem of animating a 54-dof Kraken-like robot (see Fig.~\ref{fig:54D_kraken}), which has to wrap its arms around a sailing ship. We use a fiber bundle reduction as
\begin{align}
\begin{aligned}
    SE(3)\times \R^{48}
    &\rightarrow SE(3)\times \R^{45} 
    \rightarrow SE(3)\times \R^{42}\\
    &\rightarrow\ldots\rightarrow\\
    SE(3)\times \R^{6}&\rightarrow SE(3)\times \R^{3}
    \rightarrow SE(3)
\end{aligned}
\end{align}
which corresponds to the removal of each arm ($6$-dof revolute joints) on each
stage, whereby we first remove the last three links (removal of $3$-dof) and
then remove the remaining arm ($3$-dof). We show the benchmark in Fig.~\ref{fig:54D_kraken_benchmark}. We observe that both QRRT ($0.20$s) and QMP ($0.23$s) perform below $1$s to find a feasible solution. Next comes QMP* with a planning time of $6.21$s. The next best non-bundle planner is BiTRRT with a performance of around $22$s planning time. The performance of the bundle planner QRRT is thus two orders of magnitude better than the next best non-bundle planner. 

\subsubsection{72-dof manipulators (3 levels)\label{experiment:manipulator}}

When automating construction work \citep{hartmann_2020} or warehouse operations
\citep{salzman_2020, eppner_2016}, we often need to coordinate multiple robots
with many dofs. Here, we consider the coordination of eight KUKA manipulators on
disk-shaped mobile bases. Each manipulator starts around a circle and needs to
change position with its antipodal partner (see Fig.~\ref{fig:72D_manips}). We impose a fiber bundle as
\begin{equation}
    (SE(2) \times \R^6)^8 \rightarrow (\R^2)^8 \rightarrow (\R^2)^4
\end{equation}
which corresponds to the removal of arms and the removal of the upper half of
the robots. The benchmarks are shown in Fig.~\ref{fig:72D_manips_benchmark}. We observe that QRRT solves this problem in $3.65$s while QRRT* requires $19$s. Only one non-bundle planner is able to terminate on average before the timelimit: RRTConnect with around $39$s seconds of planning time. Note that this problem is difficult for QMP ($57$s) and QMP* ($50$s) which perform worse than RRTConnect.

\begin{table*}[t]
    \centering
    \begin{tabular}{|c|c|c|c|c|c|c|}
    \hline
         \multicolumn{2}{|c|}{Motion Planning Algorithm} &
           Origin Paper &
         FB &
        PC & AnO \\
        \hline
        QRRT & Rapidly-exploring random quotient space trees & \citep{orthey_2019} &x & x& \\
        QMP & Quotient-space roadmap planner & \citep{orthey_2018} &x & x&  \\
        QRRT* & Optimal version of QRRT & \textbf{this paper} &x & x& x \\
        QMP* & Optimal version of QMP & \textbf{this paper} &x & x& x \\
        \hline
        PRM & Probabilistic Roadmap Planner & \citep{kavraki_1996} & & x&  \\
        PRM* & Optimal version of PRM & \citep{karaman_2011} & & x &  \\
        LazyPRM* & Optimal version of LazyPRM & \citep{karaman_2011} & & x &  \\
        SPARS & Sparse roadmap spanners & \citep{dobson_2014} & & x & x  \\
        SPARS2 & SPARS without dense graph & \citep{dobson_2014} & & x & x  \\
        RRT & Rapidly-exploring random tree & \citep{lavalle_1998} & & x&  \\
        RRTConnect & Bidirectional RRT & \citep{kuffner_2000} & & x&  \\
        RRT* & Optimal version of RRT & \citep{karaman_2011} & & x &  \\
        LazyRRT & Lazy edge evaluation RRT & \citep{kuffner_2000} & & x & \\
        TRRT & Transition-based RRT & \citep{jaillet_2010} & & x & \\
        BiTRRT & Bidirectional TRRT & \citep{jaillet_2010} & & x & \\
        LBTRRT & Lower-bound tree RRT & \citep{salzman_2016} & & x &x \\
        RRTX & RRT with pseudo-optimal tree & \citep{otte_2016} & & x & x \\
        RRT\# & RRT sharp & \citep{arslan_2013} &  & x &x \\
        InformedRRT* & Informed search RRT* & \citep{gammell_2014} & & x &x \\
        SORRT* & Sorted InformedRRT* & \citep{gammell_2014} & & x &x \\
        SBL & Single-query bidirectional lazy PRM & \citep{sanchez_2003} & & x & \\
        SST & Stable sparse RRT & \citep{li_2016} & & x &x \\
        STRIDE & \makecell[t]{Search Tree with Resolution \\Independent Density Estimation} & \citep{gipson_2013} & & x & \\
        FMT & Fast marching tree & \citep{janson_2015} & & x &x \\
        BFMT & Bidirectional FMT & \citep{janson_2015} & & x &x \\
        BIT* & Batch informed trees& \citep{gammell_2020} & & x &x \\
        ABIT* & Advanced BIT*& \citep{strub_2020} & & x &x \\
        EST & Expansive spaces planner& \citep{hsu_1999} & & x & \\
        BiEST & Bidirectional EST& \citep{hsu_1999} & & x & \\
        ProjEST & Projection EST& \citep{hsu_1999} & & x & \\
        KPIECE & \makecell[t]{Kinodynamic Motion Planning \\by Interior-Exterior Cell Exploration}& \citep{sucan_2009} & & x & \\
        BKPIECE & Bidirectional KPIECE& \citep{sucan_2009} & & x & \\
        LBKPIECE & Lazy BKPIECE& \citep{sucan_2009} & & x & \\
        PDST & Path-Directed Subdivision Tree& \citep{ladd_2004} & & x & \\
        \hline
    \end{tabular}
    \caption{List of motion planning algorithms used in experimental section. Properties of the algorithms are: Supporting fiber bundles (FB), being probabilistically complete (PC) and being asymptotically (near-)optimal (AnO).}
    \label{tab:list_of_algorithms}
\end{table*}
\begin{figure*}[ht]
    \centering
    \begin{subfigure}[t]{0.33\textwidth}
    \centering
    \begin{tikzpicture} 
\def\radiusPoint{3pt}
\def\slant{30}
\def\widthCube{0.5*\linewidth}
\def\heightCube{0.5*\linewidth}
\def\widthEdge{0.1*\widthCube}
\def\lengthEdge{0.9*\widthCube}
\tikzset{
    border/.style={line width=0.1mm,gray!50, dashed},
    shaded/.style={line width=0.1mm,gray!50, fill=gray!20}
}

\newcommand\drawCuboidDashed[5]{

\coordinate (O) at (#1, #2);
\def\width{#3}
\def\length{#4}
\def\height{#5}

\coordinate (B1) at ($(O)+(0, 0)$);
\coordinate (B2) at ($(O) + (\width, 0)$);
\coordinate (B3) at ($(B2) + ({\length*cos(\slant)},
{\length*sin(\slant)})$);
\coordinate (B4) at ($(B1) + ({\length*cos(\slant)},
{\length*sin(\slant)})$);

\coordinate (T1) at ($(B1)+(0,\height)$);
\coordinate (T2) at ($(B2)+(0,\height)$);
\coordinate (T3) at ($(B3)+(0,\height)$);
\coordinate (T4) at ($(B4)+(0,\height)$);

\draw[border] (B1) -- (B2) -- (B3) -- (B4) -- (B1);
\draw[border] (B4) -- (B3) -- (T3) -- (T4) -- (B4);
\draw[border] (B1) -- (B2) -- (T2) -- (T1) -- (B1);
\draw[border] (T1) -- (T2) -- (T3) -- (T4) -- (T1);

}

\newcommand\drawCuboid[5]{

\coordinate (O) at (#1, #2);
\def\width{#3}
\def\length{#4}
\def\height{#5}

\coordinate (B1) at ($(O)+(0, 0)$);
\coordinate (B2) at ($(O) + (\width, 0)$);
\coordinate (B3) at ($(B2) + ({\length*cos(\slant)},
{\length*sin(\slant)})$);
\coordinate (B4) at ($(B1) + ({\length*cos(\slant)},
{\length*sin(\slant)})$);

\coordinate (T1) at ($(B1)+(0,\height)$);
\coordinate (T2) at ($(B2)+(0,\height)$);
\coordinate (T3) at ($(B3)+(0,\height)$);
\coordinate (T4) at ($(B4)+(0,\height)$);


\draw[shaded] (B1) -- (B2) -- (B3) -- (B4) -- (B1);
\draw[shaded] (B1) -- (B4) -- (T4) -- (T1) -- (B1);
\draw[shaded] (B4) -- (B3) -- (T3) -- (T4) -- (B4);
\draw[shaded] (B1) -- (B2) -- (T2) -- (T1) -- (B1);
\draw[shaded] (T1) -- (T2) -- (T3) -- (T4) -- (T1);
\draw[shaded] (B2) -- (B3) -- (T3) -- (T2) -- (B2);
}

\def\x{{\lengthEdge*cos(\slant)}};
\def\y{{\lengthEdge*sin(\slant)}};
\drawCuboid{\x}{\y}{\widthEdge}{\widthEdge}{\heightCube}

\def\x{{\widthEdge+\lengthEdge*cos(\slant)}};
\def\y{{\lengthEdge*sin(\slant)}};
\drawCuboid{\x}{\y}{\lengthEdge}{\widthEdge}{\widthEdge}

\drawCuboid{\lengthEdge}{0}{\widthEdge}{\lengthEdge}{\widthEdge}

\drawCuboidDashed{0}{0}{\widthCube}{\widthCube}{\heightCube}

\coordinate (XI) at (\widthCube, 0);
\fill[green, fill=green] (XI) circle[green, fill=green, radius=\radiusPoint];

\coordinate (XG) at ($({\widthCube*cos(\slant)},
{\widthCube*sin(\slant) + \heightCube})$);
\fill[red, fill=red] (XG) circle[radius=\radiusPoint];

\end{tikzpicture}
    \caption{100-dof hypercube (3-dof version shown)\label{fig:100D_hypercube}}
    \end{subfigure}
    \begin{subfigure}[t]{0.66\textwidth}
    \centering
    \includegraphics[width=\textwidth]{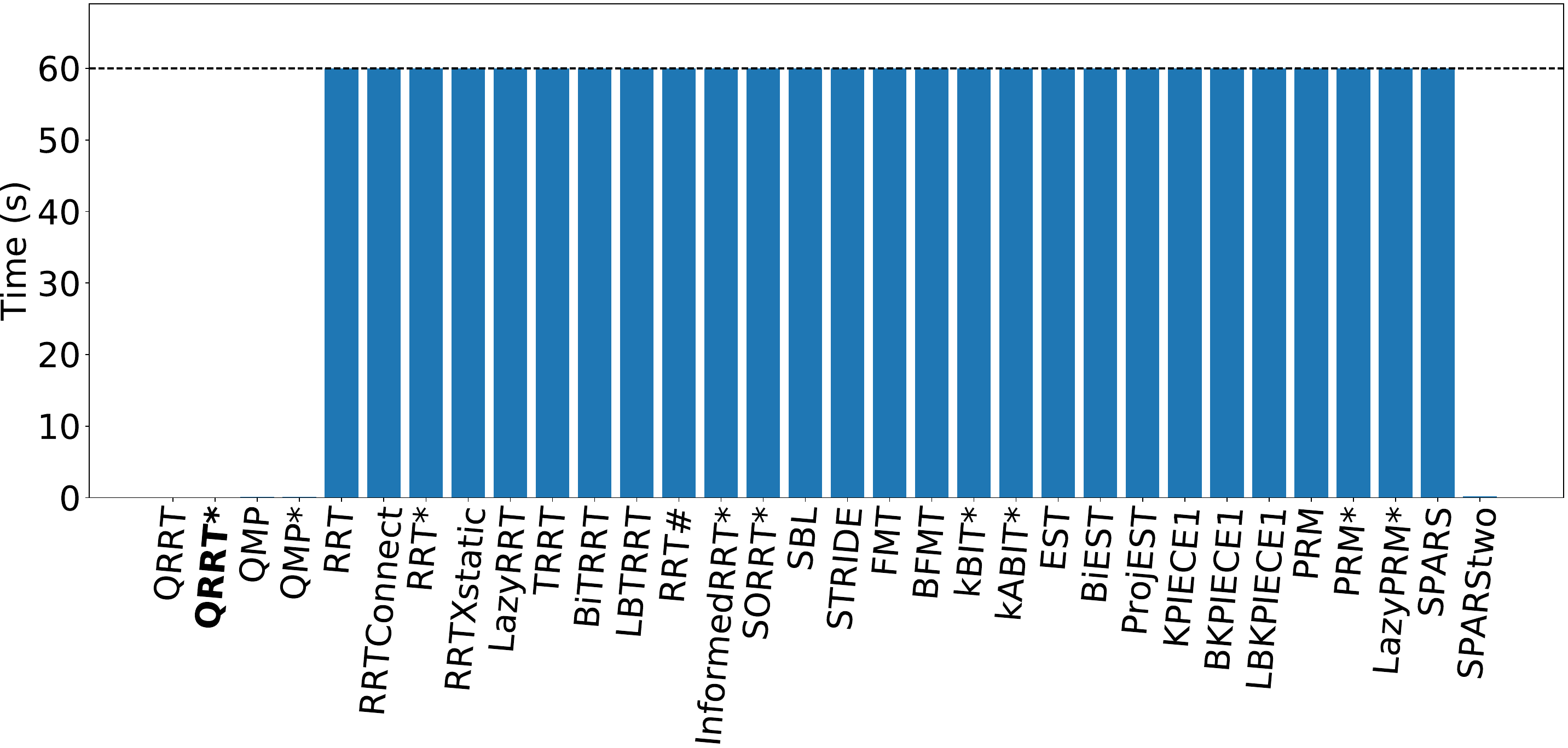}
    \caption{Benchmark of 100-dof hypercube\label{fig:100D_hypercube_benchmark}}
    \end{subfigure}
    \begin{subfigure}[t]{0.33\textwidth}
    \centering
    \includegraphics[width=\textwidth]{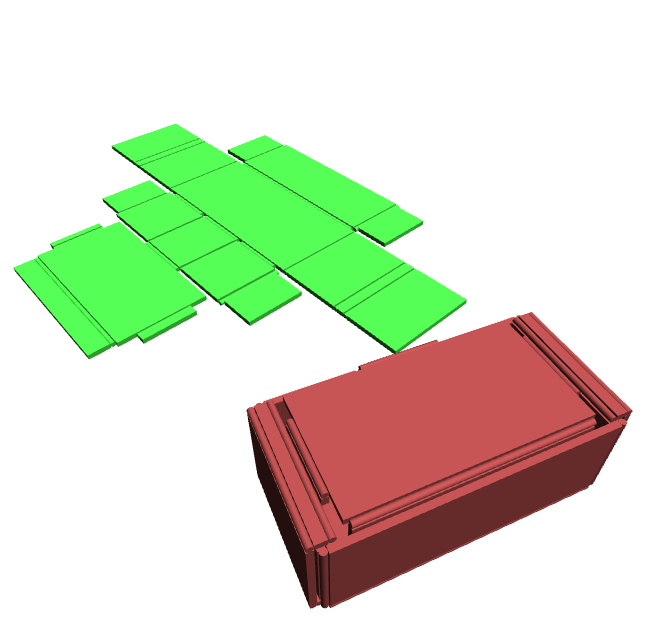}
    \caption{21-dof box folding problem\label{fig:21D_box_folding}}
    \end{subfigure}
    \begin{subfigure}[t]{0.66\textwidth}
    \centering
    \includegraphics[width=\textwidth]{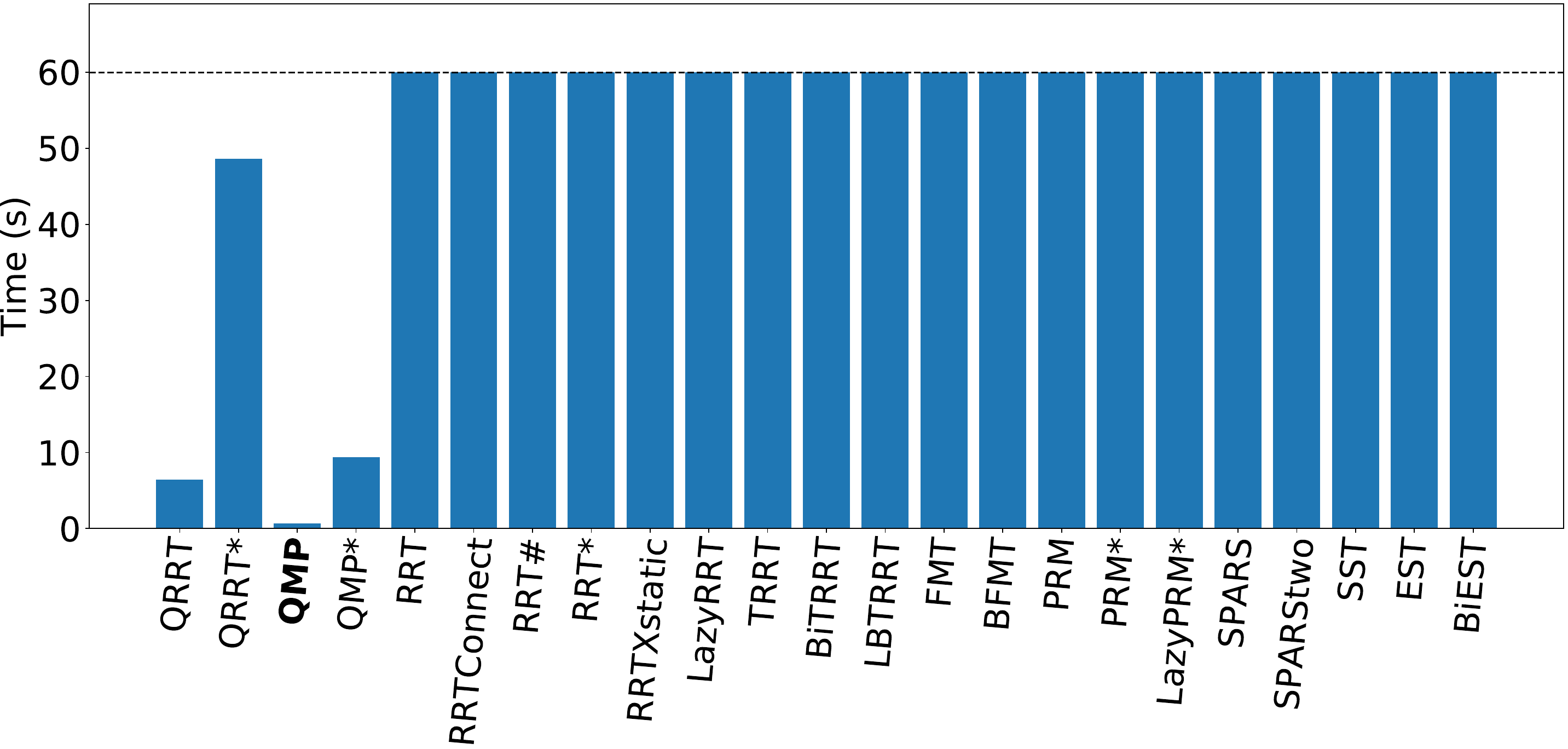}
    \caption{Benchmark of 21-dof folding box problem\label{fig:21D_box_folding_benchmark}}
    \label{fig:21D_folding_box}
    \end{subfigure}
    \begin{subfigure}[t]{0.33\textwidth}
    \centering
    \includegraphics[width=\textwidth]{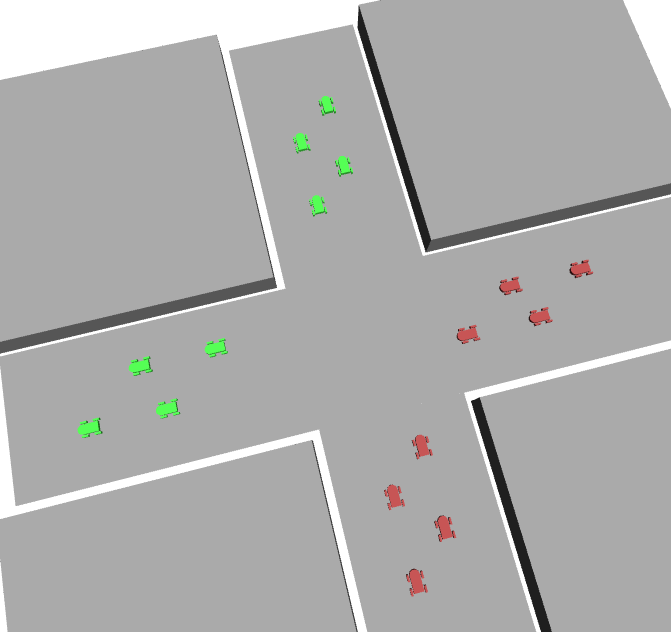}
    \caption{24-dof dubin cars crossing\label{fig:24D_crossing_cars}}
    \end{subfigure}
    \begin{subfigure}[t]{0.66\textwidth}
    \centering
    \includegraphics[width=\textwidth]{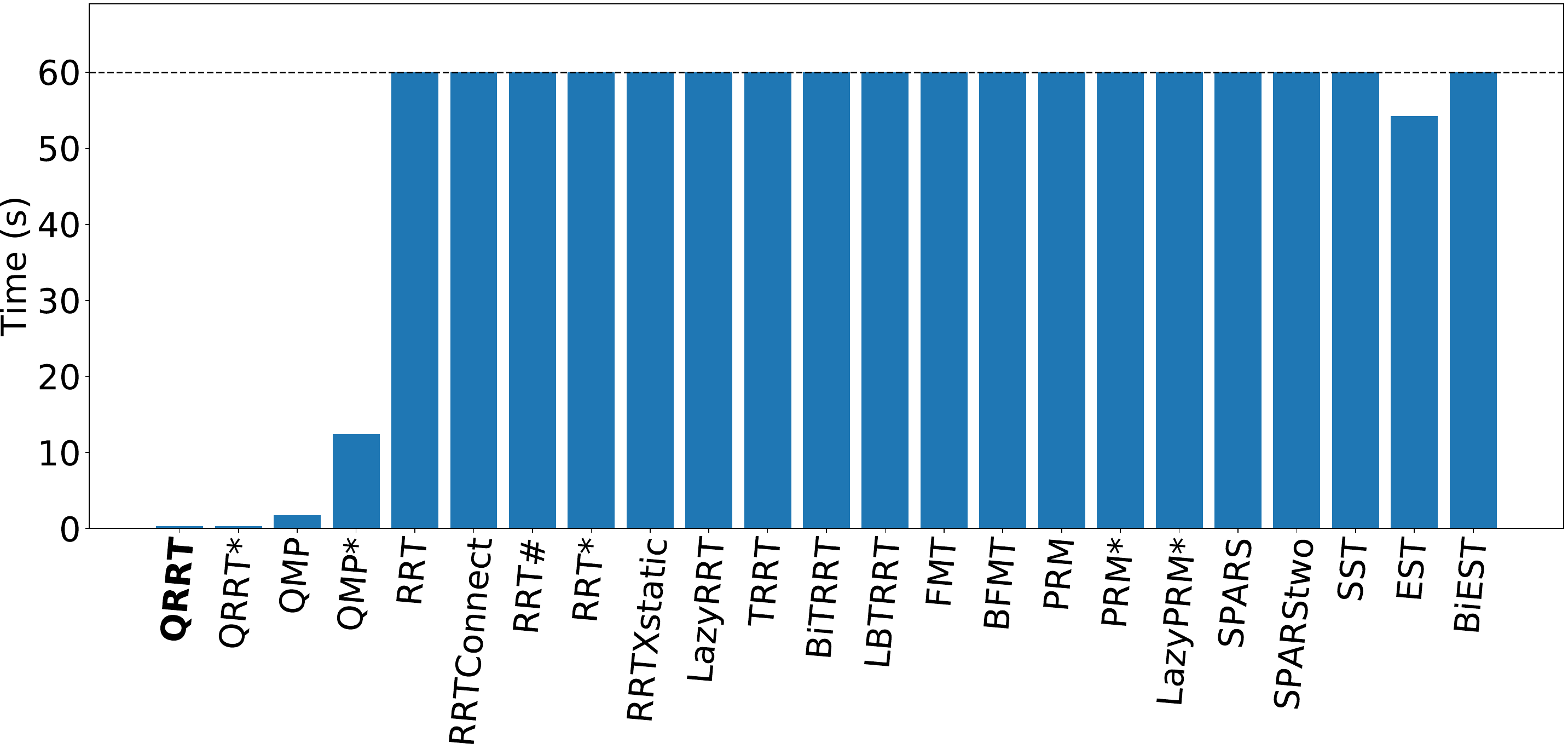}
    \caption{Benchmark \label{fig:24D_crossing_cars_benchmark}}
    \label{fig:24D_crossing_car}
    \end{subfigure}
    \begin{subfigure}[t]{0.33\textwidth}
    \centering
    \includegraphics[width=\textwidth]{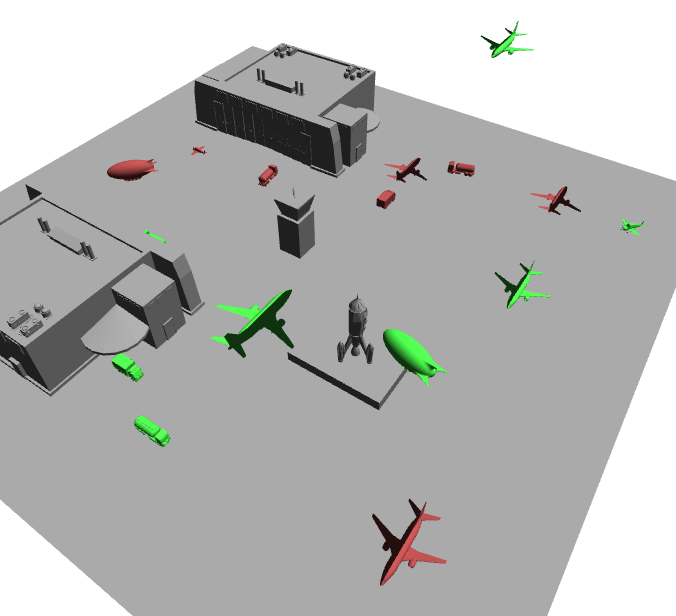}
    \caption{30-dof airport\label{fig:30D_airport}}
    \end{subfigure}
    \begin{subfigure}[t]{0.66\textwidth}
    \centering
    \includegraphics[width=\textwidth]{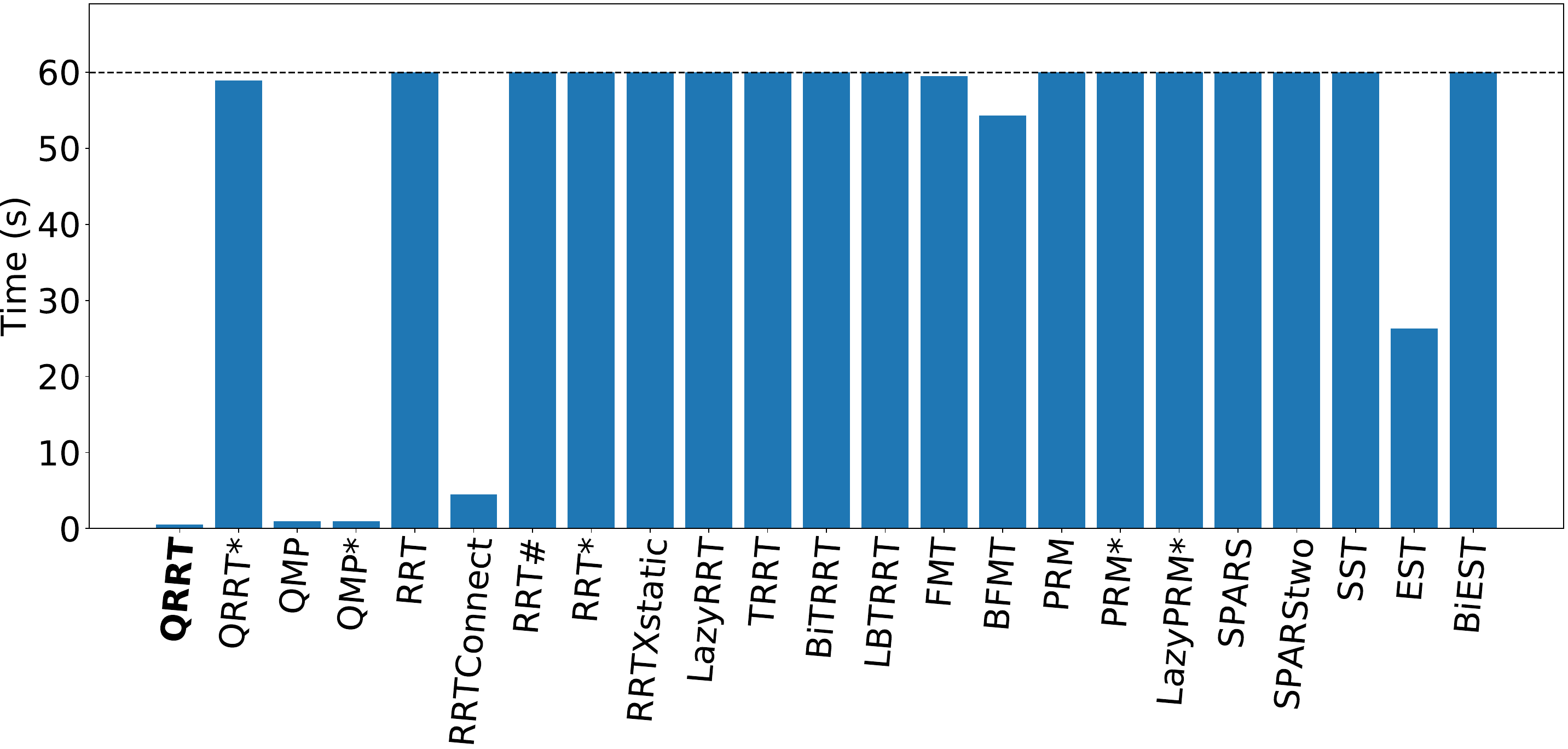}
    \caption{Benchmark\label{fig:30D_airport_benchmark}}
    \end{subfigure}

    \caption{Runtime benchmarks on the first four high-dimensional planning scenarios.}
    \label{fig:benchmarks_1}
\end{figure*}

\begin{figure*}[ht]
    \centering
    
    \begin{subfigure}[t]{0.33\textwidth}
    \centering
    \includegraphics[width=\textwidth]{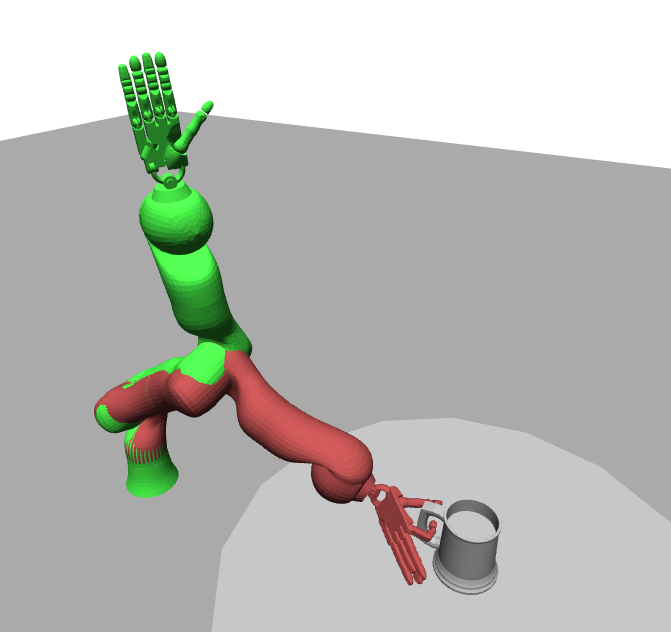}
    \caption{37-dof pre-grasp\label{fig:37D_pregrasp}}
    \end{subfigure}
    \begin{subfigure}[t]{0.66\textwidth}
    \centering
    \includegraphics[width=\textwidth]{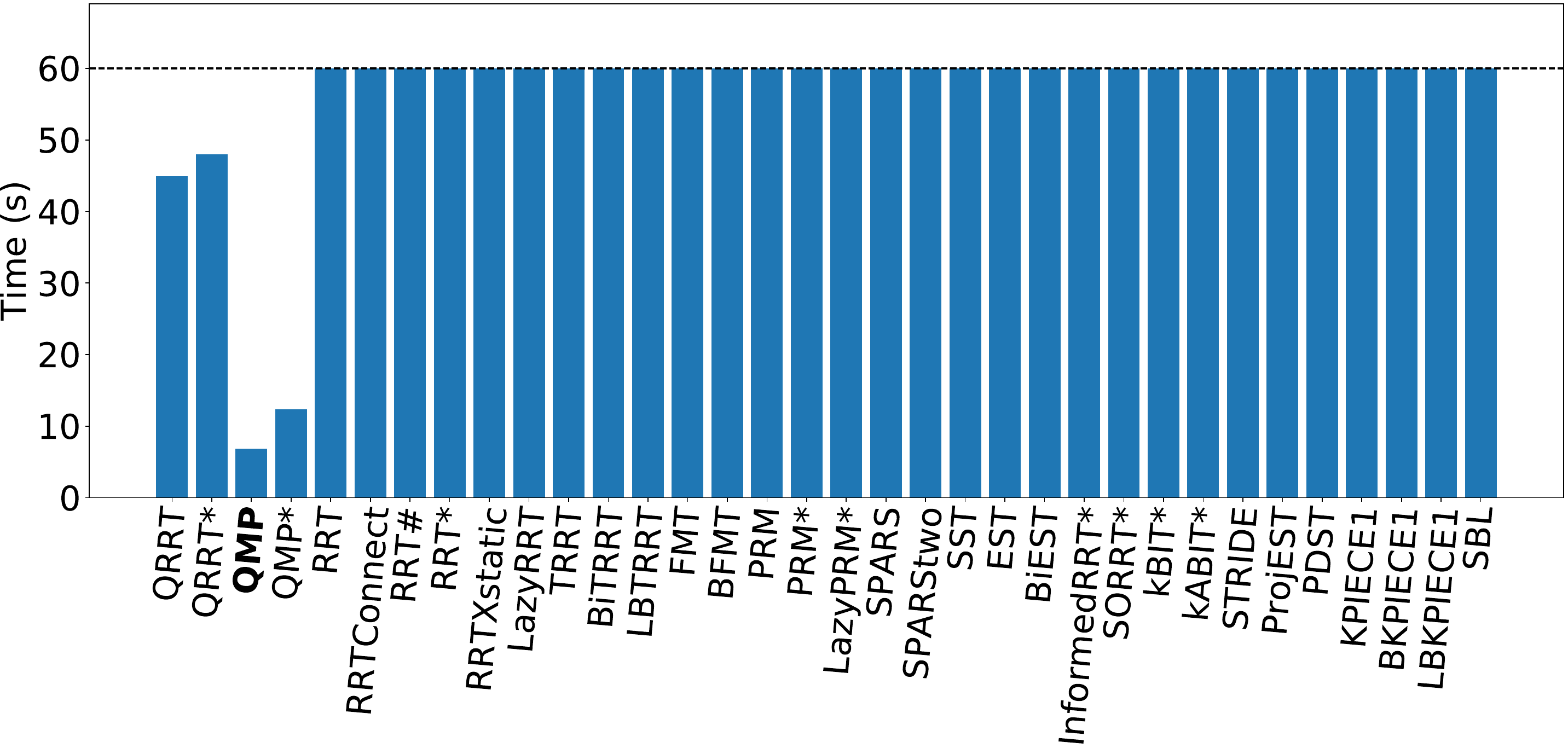}
    \caption{Benchmark\label{fig:37D_pregrasp_benchmark}}
    \end{subfigure}
    \begin{subfigure}[t]{0.33\textwidth}
    \centering
    \includegraphics[width=\textwidth]{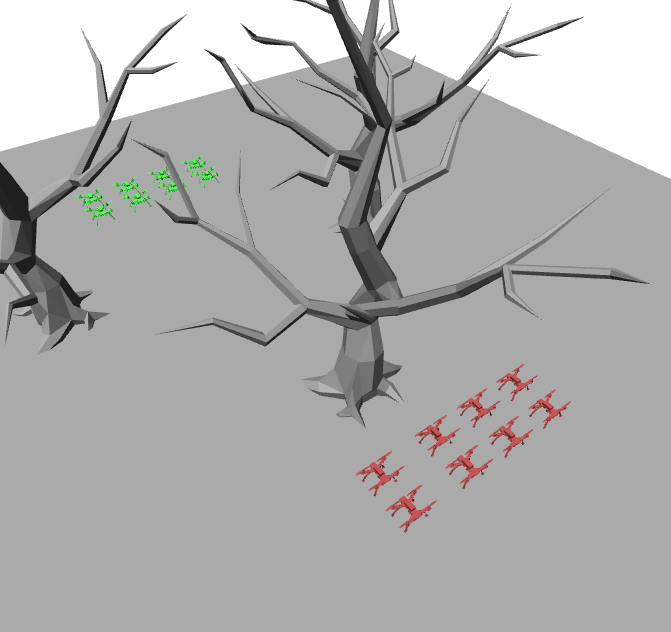}
    \caption{48-dof drones\label{fig:48D_drones}}
    \end{subfigure}
    \begin{subfigure}[t]{0.66\textwidth}
    \centering
    \includegraphics[width=\textwidth]{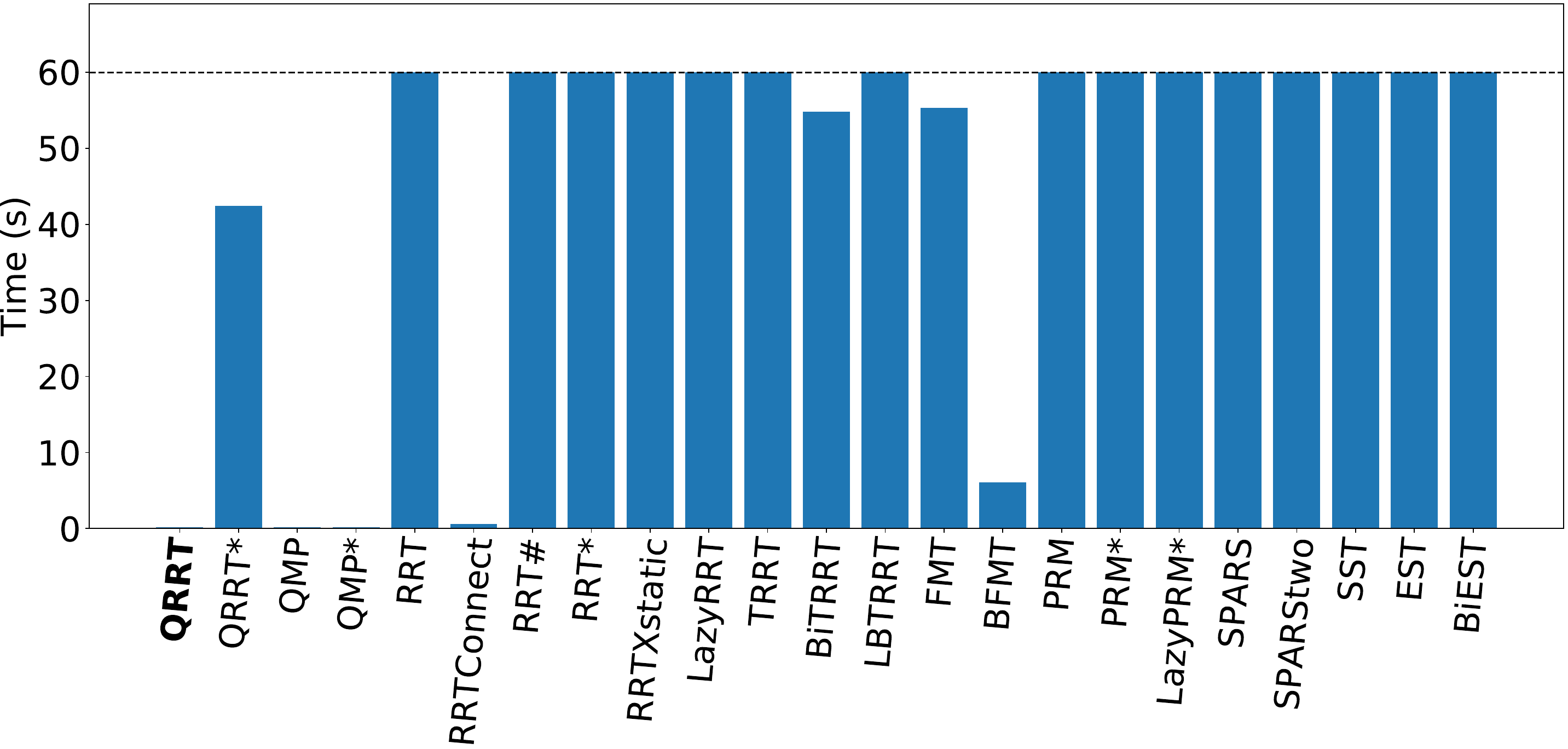}
    \caption{Benchmark\label{fig:48D_drones_benchmark}}
    \end{subfigure}
    \begin{subfigure}[t]{0.33\textwidth}
    \centering
    \includegraphics[width=\textwidth]{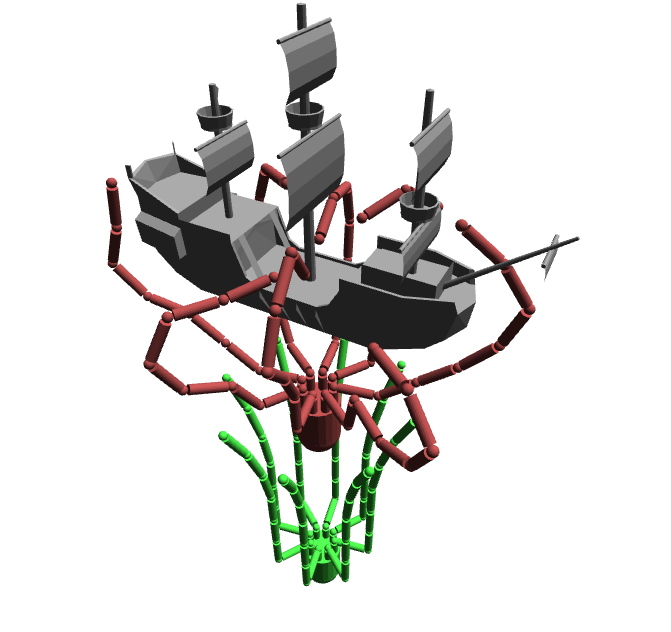}
    \caption{54-dof kraken animation\label{fig:54D_kraken}}
    \end{subfigure}
    \begin{subfigure}[t]{0.66\textwidth}
    \centering
    \includegraphics[width=\textwidth]{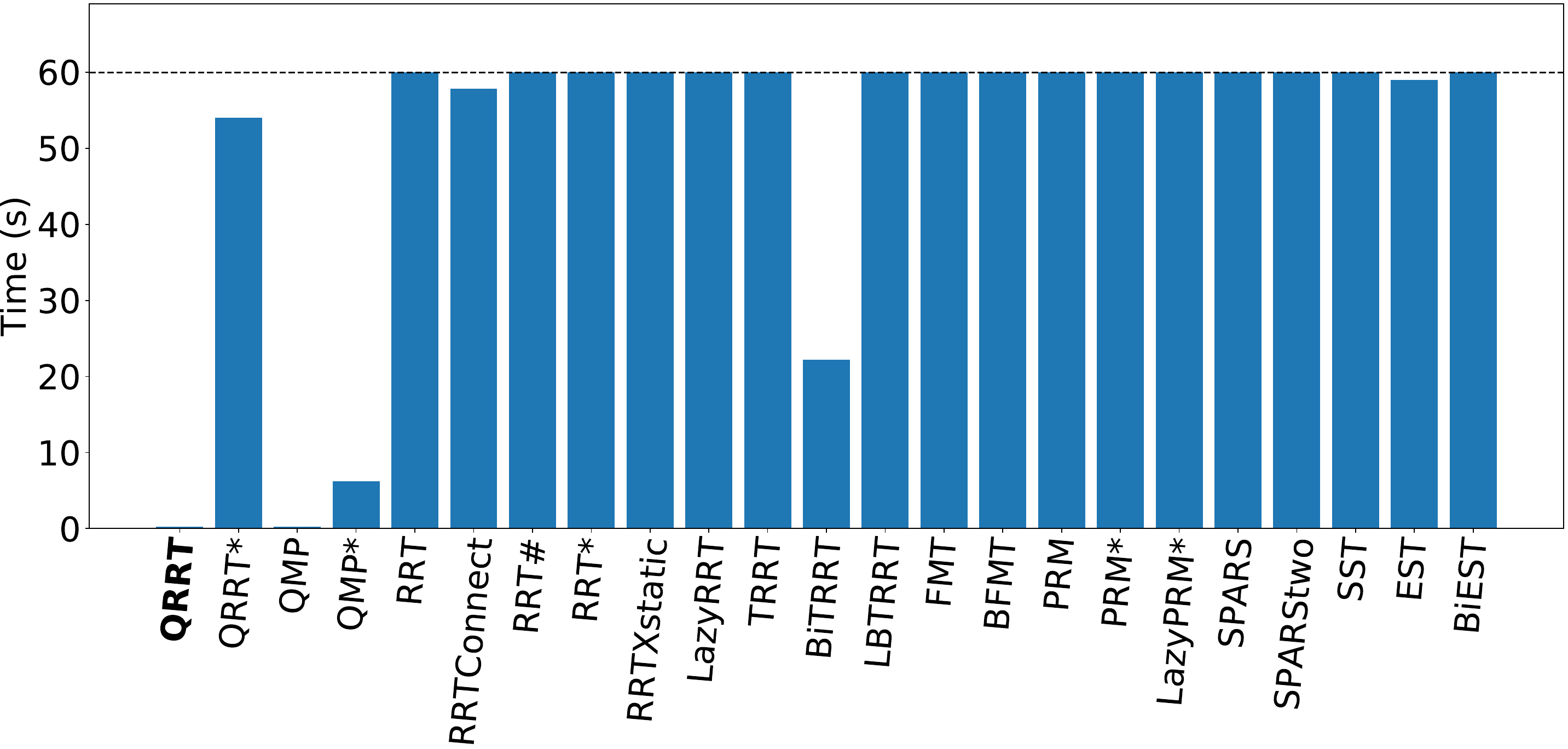}
    \caption{Benchmark\label{fig:54D_kraken_benchmark}}
    \end{subfigure}
    \begin{subfigure}[t]{0.33\textwidth}
    \centering
    \includegraphics[width=\textwidth]{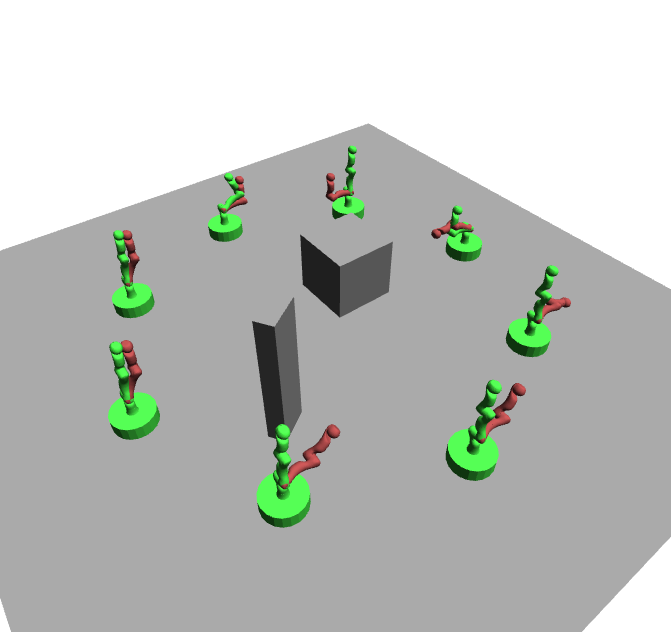}
    \caption{72-dof manipulators\label{fig:72D_manips}}
    \end{subfigure}
    \begin{subfigure}[t]{0.66\textwidth}
    \centering
    \includegraphics[width=\textwidth]{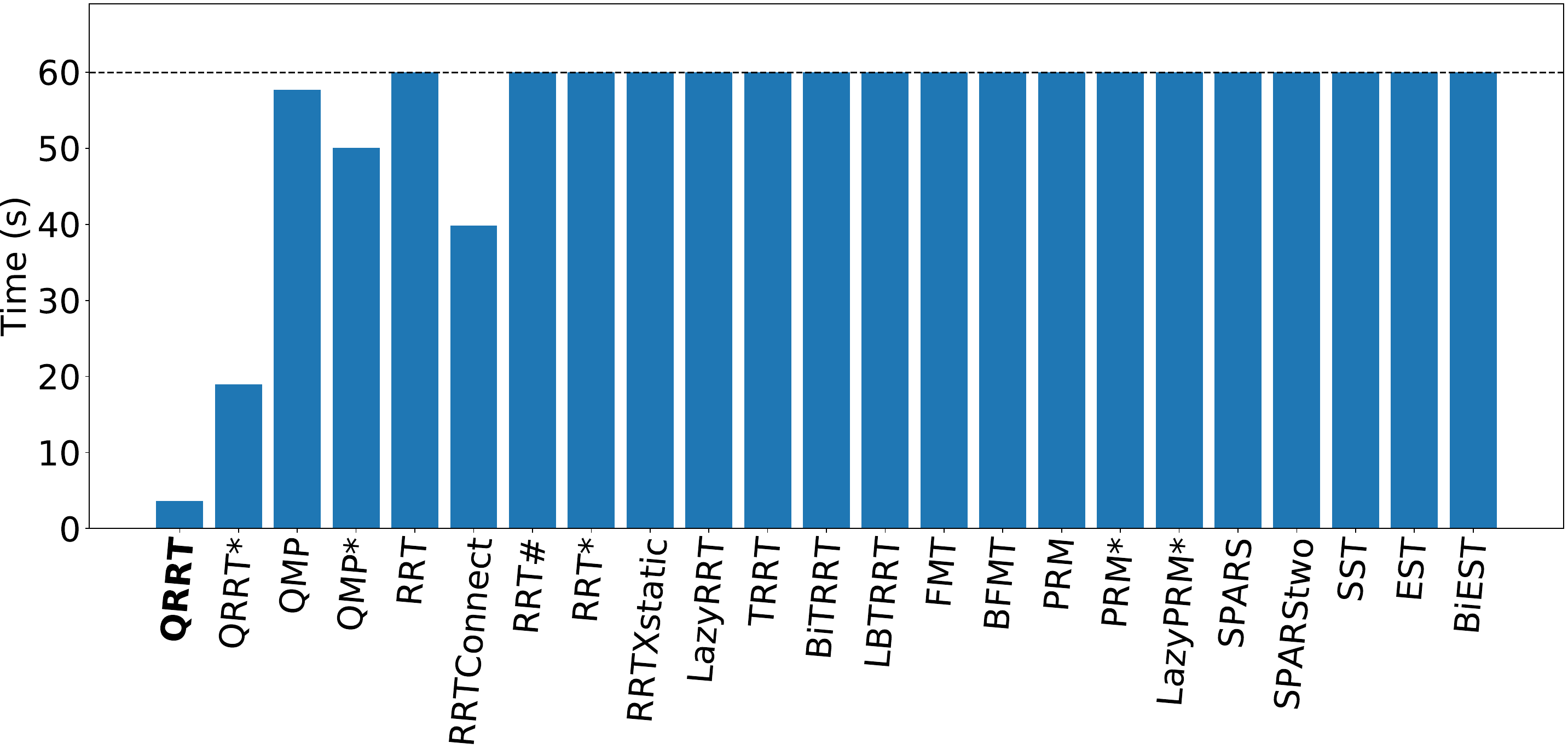}
    \caption{Benchmark\label{fig:72D_manips_benchmark}}
    \end{subfigure}
    \caption{Runtime benchmarks on the last four high-dimensional planning scenarios.}
    \label{fig:benchmarks_2}
\end{figure*}

\subsection{Cost Analysis of High-Dimensional Scenarios\label{sec:evaluations:costanalysis}}
So far, planners have been evaluated with respect to runtime. To also evaluate the cost convergence property, we compare both QRRT* and QMP* on all eight high-dimensional scenarios to QMP, QRRT, BIT*, RRT*, LBTRRT, and RRTConnect. The results are shown in Fig.~\ref{fig:successcostplot-highdim}. 

Let us detail the performance of each algorithm class. First, the non-bundle space planners are only able to tackle two out of eight scenarios. RRTConnect is able to solve the airport and the drones scenario by quickly converging to $100\%$ success rate. In the drones scenario, RRTConnect also finds good, low-cost solutions before any other planner has even found a single solution. However, apart from RRTConnect, the planners RRT*, BIT*, and LBTRRT are not applicable to any of the scenarios with no solved run during the time budget given. 

Second, the bundle space planners QMP, QMP*, QRRT, and QRRT* are able to tackle all eight scenarios.
For the hypercube, QMP, QRRT, and QRRT* quickly find a solution, but are not able to improve upon it. QMP* finds a solution slightly later, but is able to continuously improve upon it. In the box folding task, QMP* is able to solve $90\%$ of the cases while converging quickly to a low-cost solution. Both QRRT and QMP have lower success rates, but find on average a low-cost solution. QRRT*, however, is not able to adequately solve this problem with a success rate of $10\%$. For the crossing cars scenario, all bundle planners reach $100\%$ success rate with both QRRT* and QMP* converging to low-cost solutions over time. For the airport scenario, QRRT and QMP reach $100\%$ success rate, while both QRRT* and QMP* reach only $80\%$ and $30\%$, respectively. In terms of cost convergence, QMP* is not able to improve the initial solution cost and has a large cost variance as indicated by the large shaded region around the average cost. 

In the Shadowhand scenario, QMP, and QMP* reach $90\%$ and $70\%$ success rate, while QRRT, and QRRT* reach only $40\%$ and $20\%$. While QMP* is able to improve the solutions slightly, it has a large variance around the average cost. For the drones scenarios, both QMP and QMP* reach $100\%$ with QMP* converging over time to good low-cost solutions. QRRT is competitive with $90\%$ success rate and low cost average solution as indicated by the cross in the cost plot. However, QRRT* is only able to solve $10\%$ of the runs. For the octopus scenario, QMP, QMP*, and QRRT reach $100\%$ success rate, while QRRT* only reaches $20\%$. QMP* shows quick, and low-variance convergence to an optimal solution. Finally, in the mobile manipulators scenario, QRRT* and QRRT reach $90\%$ success rate, while QMP* reaches $20\%$ and QMP fails to find any solutions. QRRT* is also able to converge quickly over time, reaching a solution cost significantly below solution costs from QRRT, and RRTConnect.

\definecolor{qrrtstar}{HTML}{ED1B23}
\definecolor{qmpstar}{HTML}{99479B}
\newcommand{\sqboxs}{1.5ex}
\newcommand{\sqbox}[1]{\textcolor{#1}{\rule{\sqboxs}{\sqboxs}}}

\def\eWidth{0.47\textwidth}
\begin{figure*}
    \includegraphics[width=\eWidth]{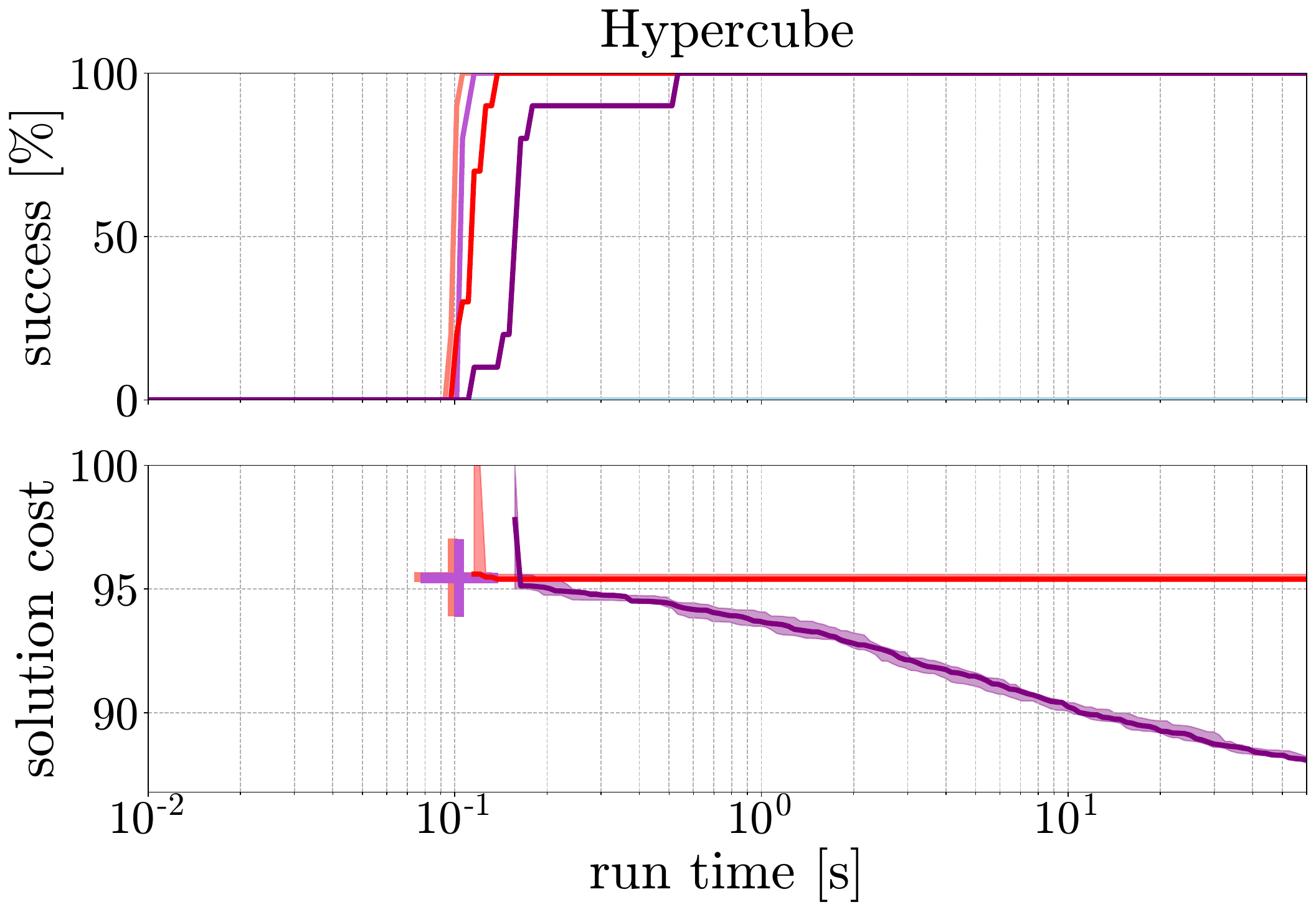}
    \includegraphics[width=\eWidth]{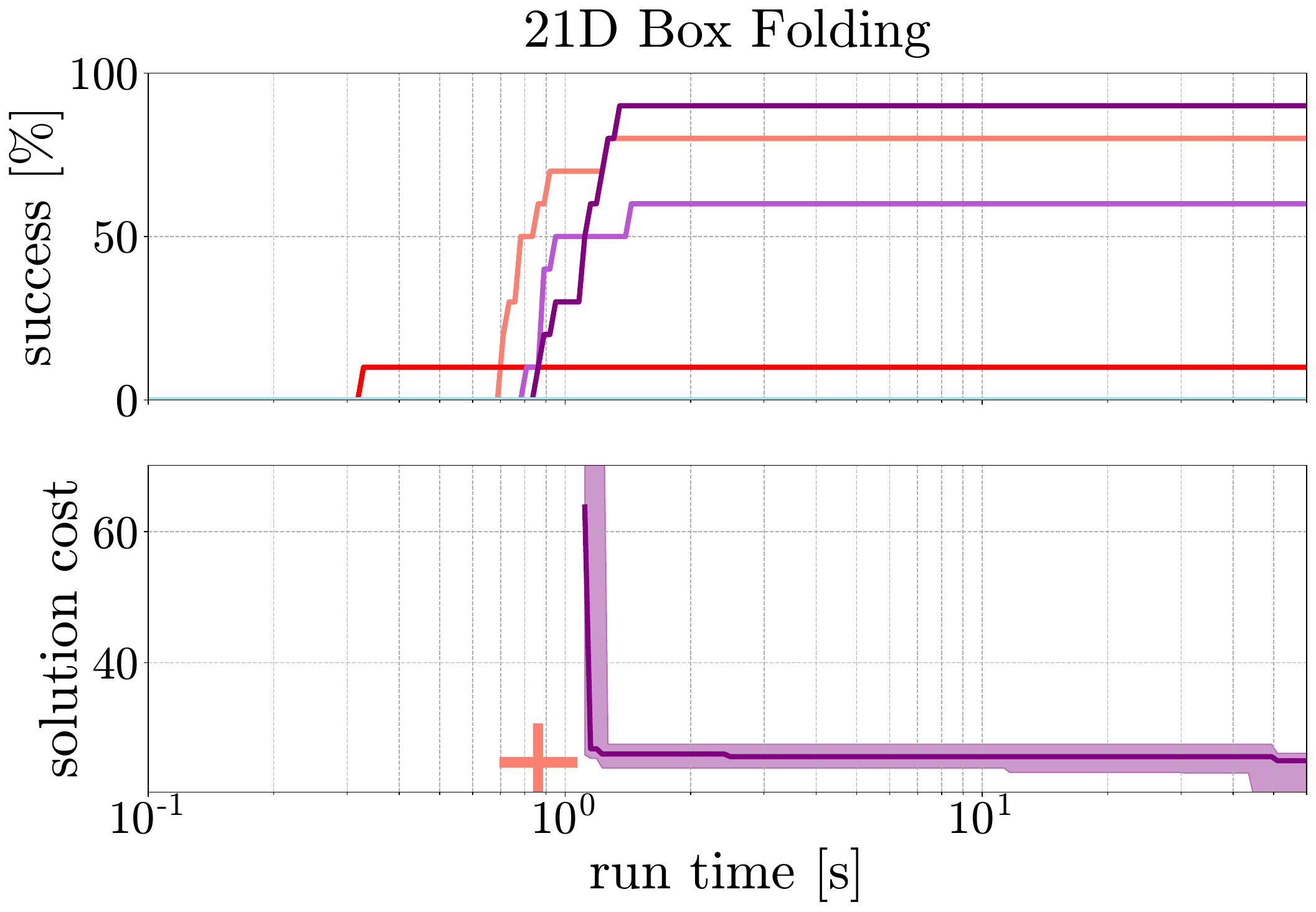}\\
    \includegraphics[width=\eWidth]{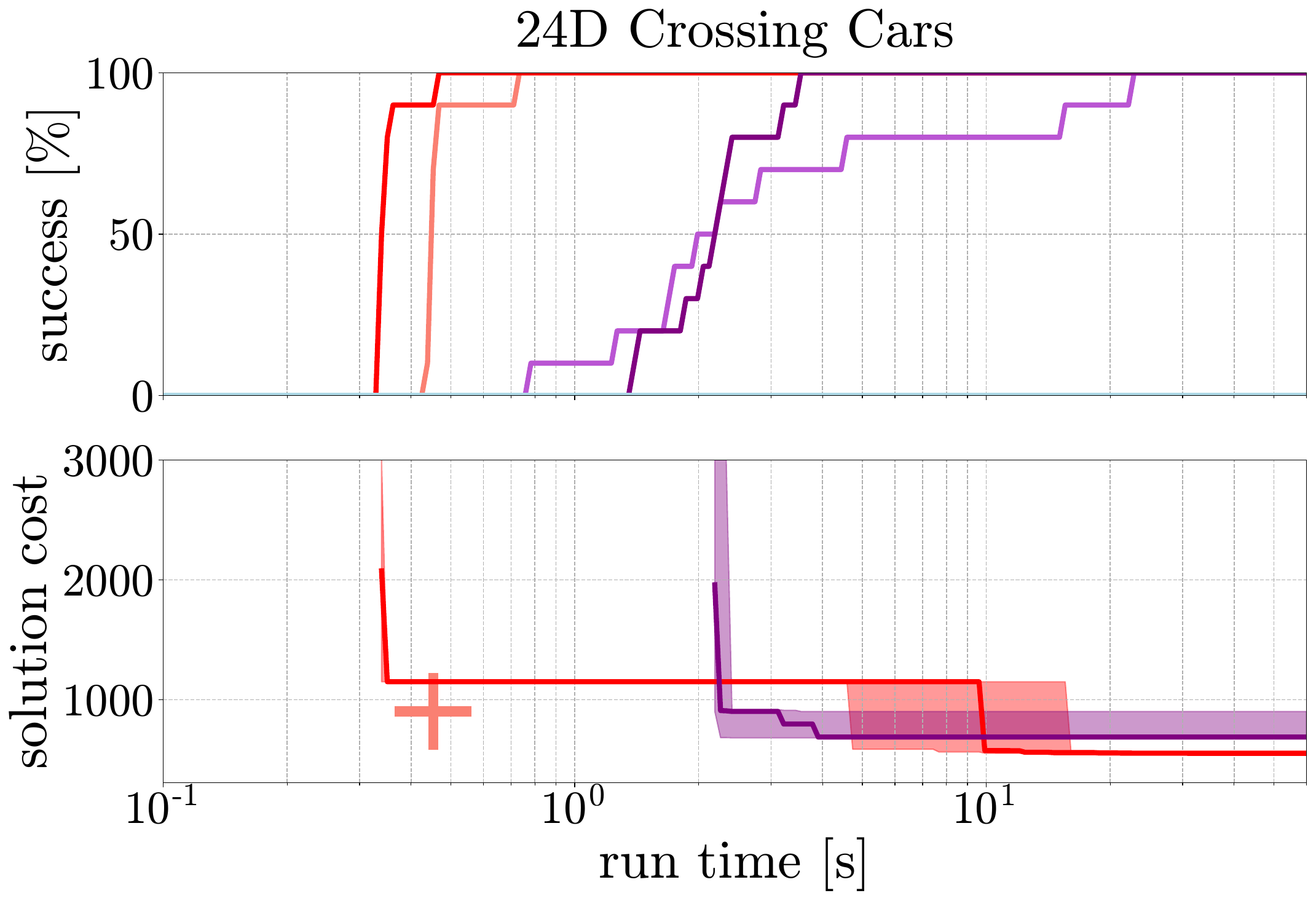}
    \includegraphics[width=\eWidth]{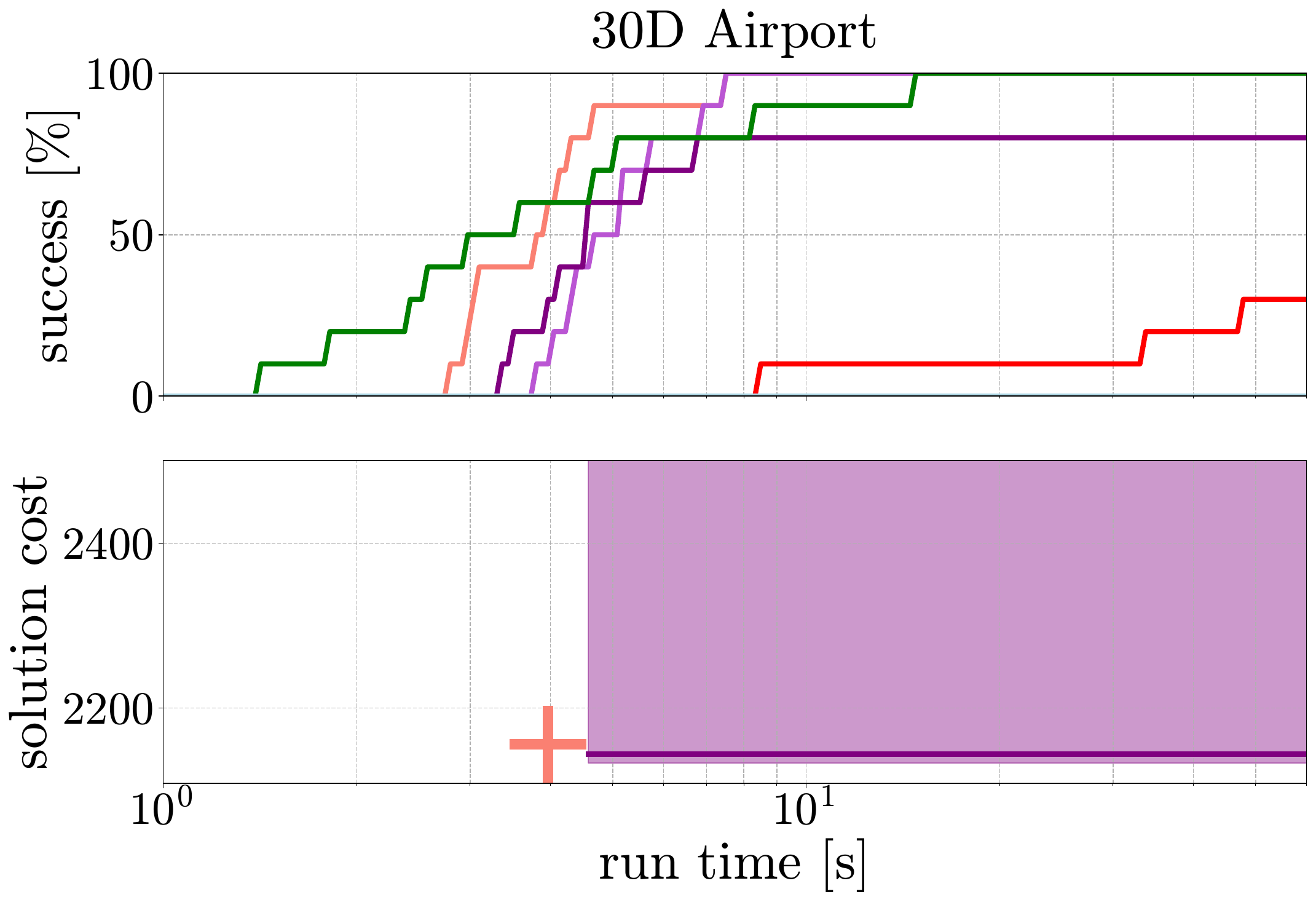}\\
    \includegraphics[width=\eWidth]{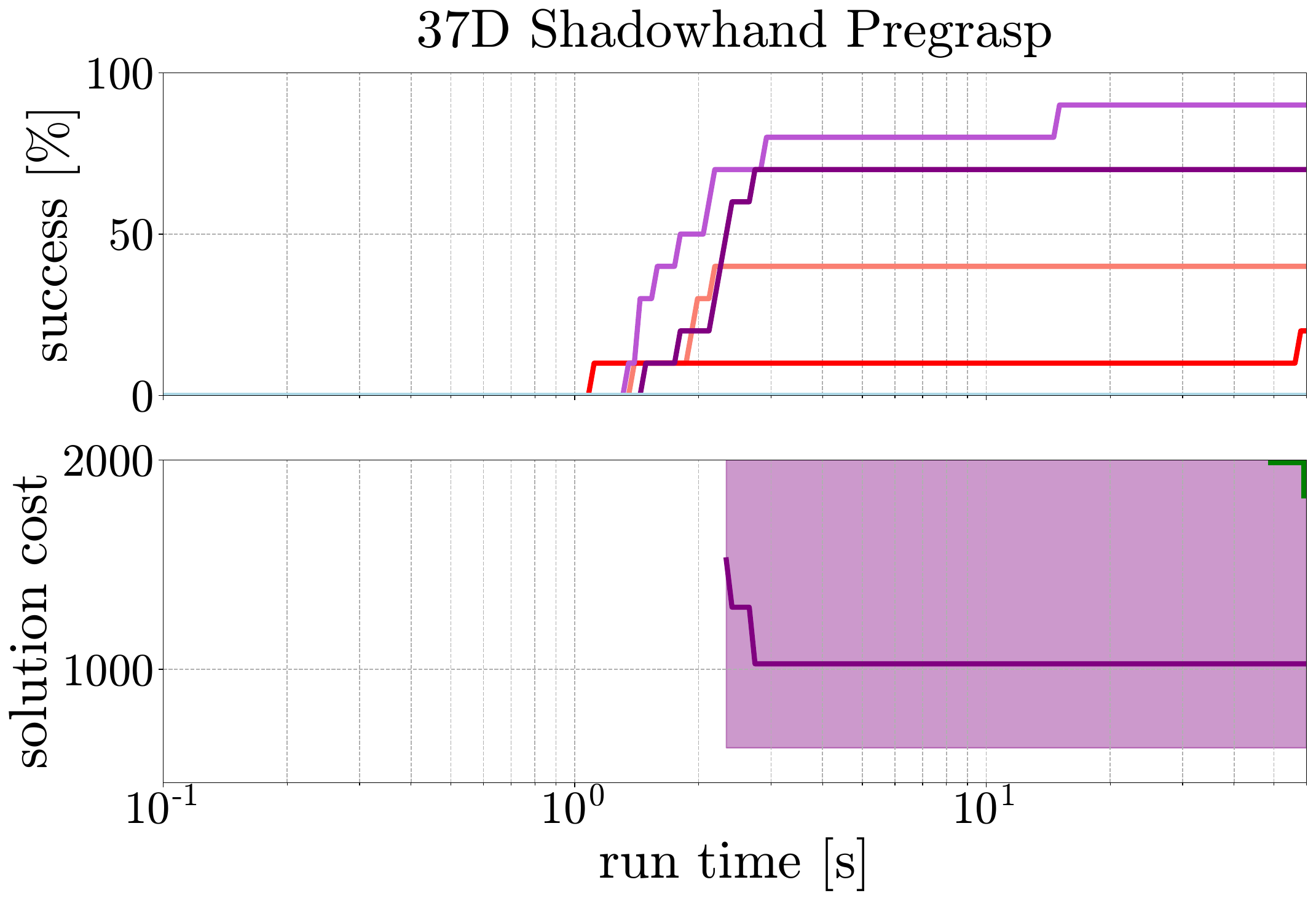}
    \includegraphics[width=\eWidth]{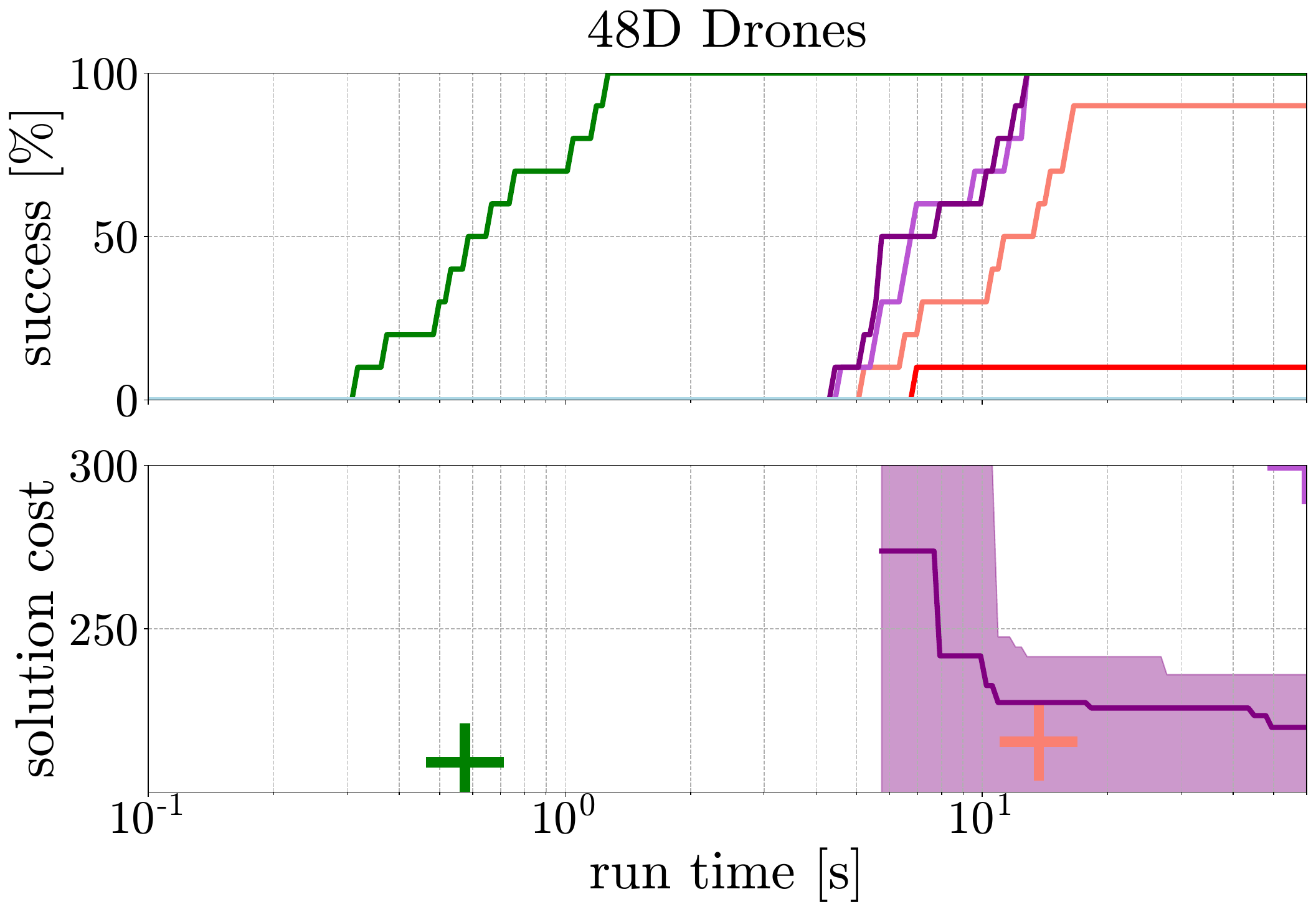}\\
    \includegraphics[width=\eWidth]{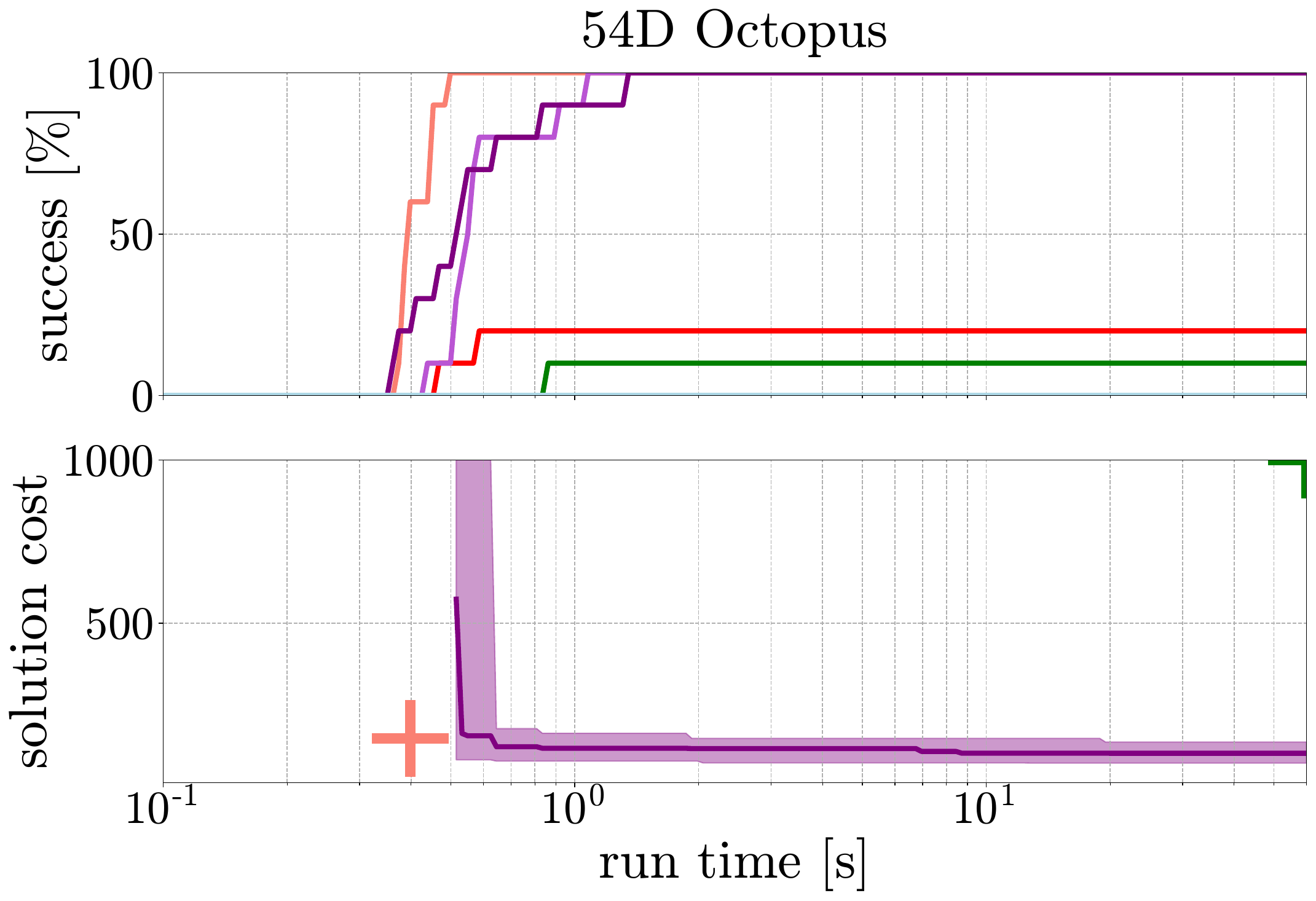}
    \includegraphics[width=\eWidth]{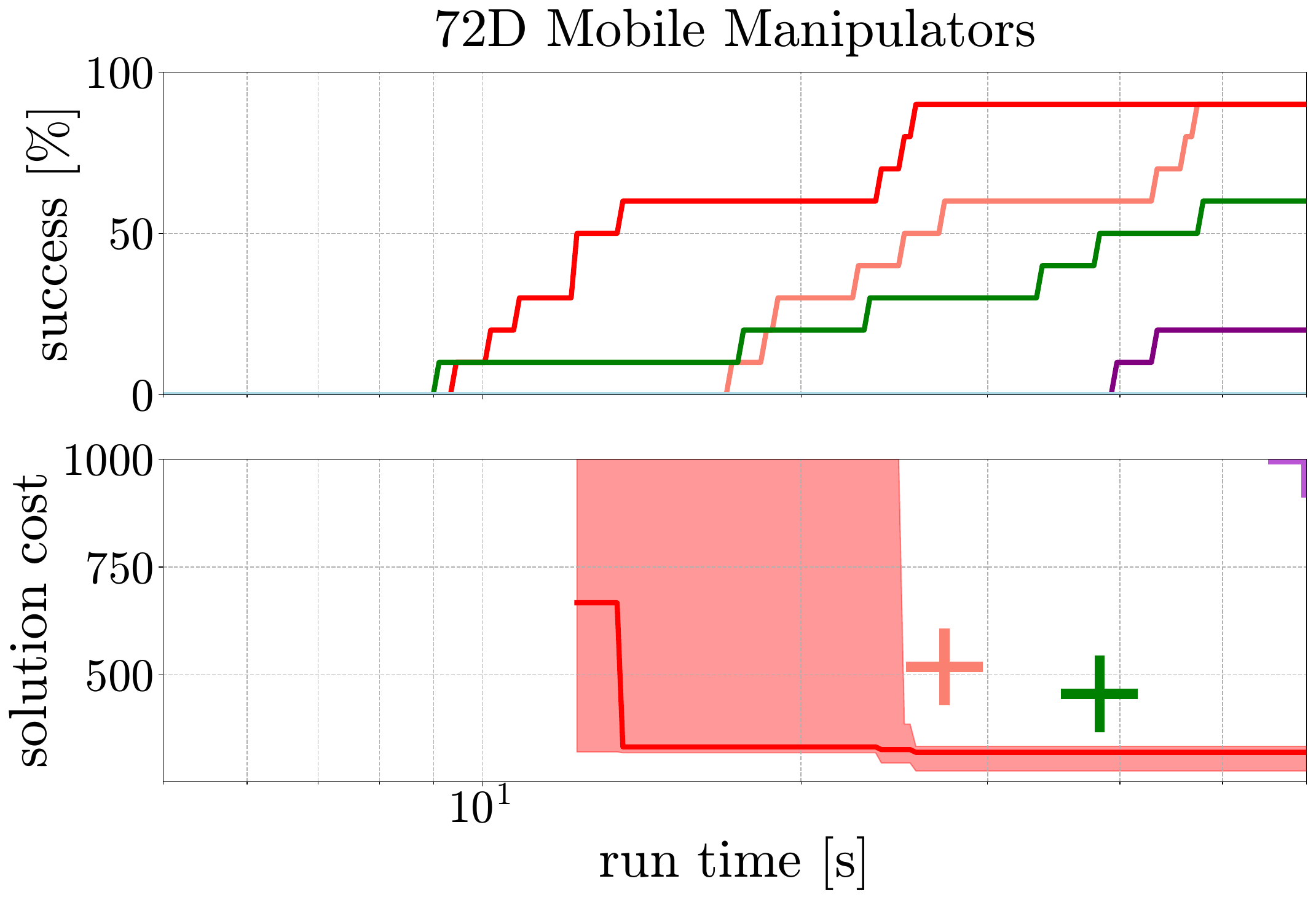} \\
    \centering
    \includegraphics[width=0.7\textwidth]{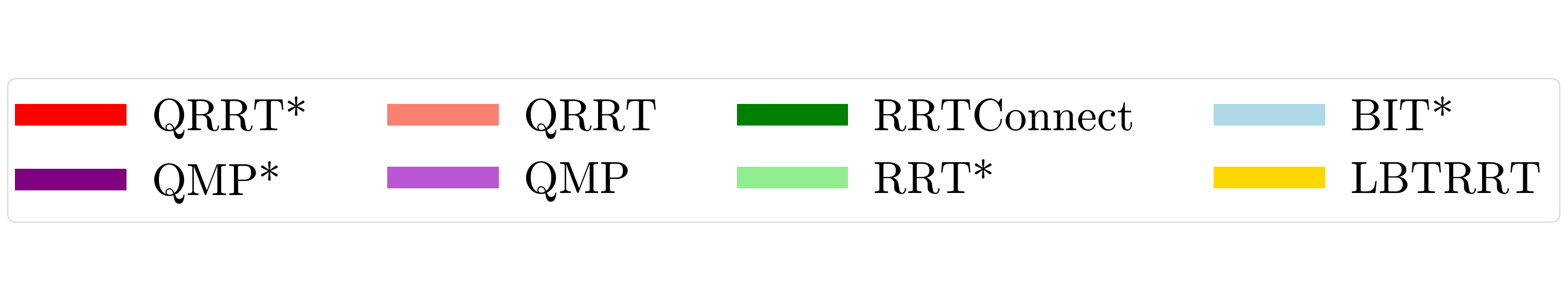}
    \caption{Success-cost plots of the eight high-dimensional planning scenarios.\label{fig:successcostplot-highdim}}
\end{figure*}

\section{Discussion\label{sec:discussion}}

From the preceding evaluation section, we have supporting evidence to draw three broad conclusions. 
First, it is difficult to solve high-dimensional planning problems with classical (non-bundle) motion planning algorithms. This should not be surprising, since the problem is known to be NP-hard \citep{hopcroft_1984, canny_1988, solovey_2020_complexity} and the spaces to contain multiple narrow passages \citep{lozano_perez_1979, salzman_2013}. 

Second, we can often quickly and reliably solve high-dimensional planning problems by exploiting fiber bundles. 
We believe there are three primary contributing factors. 
First, we have expansions of narrow passages. 
If we project a narrow passage onto a base space, we often observe the narrow passage to increase its volume relative to the surrounding space. 
We thereby increase our chance to sample narrow passages on the base space, which we can use to guide sampling on the total space \citep{orthey_2019}. 
Second, we have the removal of infeasible preimages. 
If we find a point on the base space to be infeasible, we can remove their preimage from the bundle space, thereby removing \emph{regions} which cannot be feasible \citep{orthey_2018}. Third, we have dedicated methods to exploit admissible heuristics. If we have a path on the base space, we can often quickly find solutions using the recursive path section method or by using path restriction sampling \citep{zhang_2009}. By staying on the path restriction, we exploit the information from the base space, similar to how we would exploit an admissible cost-to-go heuristic in a discrete search scenario \citep{pearl_1984, aine_2016}.

Third, the cost analysis showed that bundle space planners can successfully converge to low-cost solutions in high-dimensional spaces. However, this seems to only hold true for QMP*, which outperforms QRRT* in terms of cost convergence in seven out of eight scenarios, as shown in Sec.~\ref{sec:evaluations:costanalysis}. QRRT*, however, has inferior performance compared to QMP* and only outperforms QMP* in the mobile manipulators scenario. We believe this is due to QRRT* using tree rewiring, which is an expensive operation. Instead, QMP* does not rely on such an operation and is better suited to tackle high-dimensional spaces.

While our evaluation seems to corroborate those statements, we also like to discuss two limiting issues. The first issue are evaluation outlier, which seemingly contradict our statements. We discuss what they are and what we can do about them. The second issue is our reliance on pre-specification of fiber bundles, which we do for this work manually. We discuss options to automatically specify them in the future.

\subsection{Evaluation Outlier}

From the evaluations, we observe that we often can find solutions over multilevel abstractions quickly and reliably. However, we observe three noteworthy exceptions. First, we observe that QRRT performs below $3$s on every enviroment, except the $37$-dof pregrasp ($43$s) and the box folding task ($8$s). The cost-analysis further shows that QRRT is often not able to reach the $100\%$ success rate. We believe those environments to be challenging for QRRT, because they are examples of ingress problems, i.e. problems where we need to enter a narrow passage, similar to a Bugtrap \citep{yershova_2005}. Such problems could be overcome in future work by developing a bidirectional version of QRRT, by using biased sampling towards narrow passages \citep{yang_2004}, or by selectively expanding states at the frontier of the tree \citep{yershova_2005, denny_2020}. 

Second, we observe QRRT* to perform worse by an order of magnitude compared
to QRRT on five out of eight environments. The cost analysis corroborate this observation by showing that QRRT* performs worse in cost convergence on seven out of eight environments when compared against QMP*. We believe the rewiring of the tree in Alg.~\ref{alg:qrrtstar_grow} slows down planning over multilevel abstractions. In the future, we could overcome this by either postpone rewiring of the tree until a solution is found or by exploiting informed sets \citep{gammell_2014}, which are admissible lower bounds on the optimal solution. It could also be fruitful to investigate the connection between quotient space metrics and the geometric shape of informed sets, which we could use as admissible heuristics \citep{gammell_2020}. 

Third, we observe that the non-bundle planner RRTConnect performs competitively on the $30$-dof airport and the $48$-dof drones environment. 
Also BFMT performs competitively on $48$-dof drones. 
It seems, we could solve both problems without using fiber bundles. 
We believe this to happen because both scenarios involve $SE(3)$ state spaces, where narrow passages might be rarer than in $SE(2)$ scenarios. 
In those environments, we therefore have enough volume to quickly find valid samples, which we can exploit using RRTConnect, or BFMT. 
However, we believe fiber bundles are still needed. 
First, we do not know if RRTConnect or BFMT would still perform well if we further increase dimensionality. 
Second, only by using bundle planners can we consistently and reliably find solutions in all environments. 
Third, fiber bundles are often the only option if we want to rapidly establish infeasibility or organize local minima over high-dimensional state spaces \citep{orthey_2020}. 
It is, however, necessary to investigate how narrow passages slow down planning and how we could overcome them using fiber bundles.
We previously conducted some evaluations in that direction for the QRRT planner~\citep{orthey_2019}.

\subsection{Specifying Fiber Bundles}

For each problem, fiber bundles have to be specified manually. This is problematic, since there is no clear guideline on how to select fiber bundles for a specific problem. This could be overcome by optimizing over a primitive set of fiber bundles. To create a primitive set of fiber bundles, we could use the largest inscribed sphere for a rigid body, the removal of links from a chain, or the removal of nonholonomic constraints from a dynamical system. We can then search the landscape of such primitive fiber bundles to find an efficient fiber bundle for a specific robot and a specific set of environments. A recent study by \cite{brandao_2020} shows promising results in that direction by using evolutionary algorithms to select an abstraction. It could also be promising to use workspace information to select a fiber bundle \citep{yoshida_2005}, either by choosing joints which can actuate links of interest through the workspace \citep{luna_2020} or by choosing a bundle on-the-fly based on which links are closest to obstacles \citep{kim_2015}. We thereby could choose different fiber bundles for large rooms, for narrow passages or for ingress tasks. However, in those cases, we would need to consider fiber bundles with changing dimensions, which are in general given by the concept of a sheaf \citep{bredon_2012}.

\section{Conclusion\label{sec:conclusion}}

We modelled multilevel motion planning problems using the framework of fiber
bundles. To exploit fiber bundles, we developed a set of bundle primitives, and the bundle planners QRRT* and
QMP*, which we showed to be probabilistically complete and asymptotically
optimal. 
We also extended the existing bundle planners QRRT
\citep{orthey_2019} and QMP \citep{orthey_2018} using an exponential importance
criterion and a recursive L1 path section method (Fig.~\ref{fig:pullfigure}). 
We conducted a meta-analysis to find the best implementation of the bundle primitives, including graph sampling, metric, importance selection, and path section methods. Using the bundle planners, we robustly and efficiently solved challenging high-dimensional motion planning problems, from 21-dof to 100-dof. 
We also showed competitive results for low-dimensional scenarios, and we showed QMP* to be superior in cost convergence for high-dimensional scenarios. 

However, we believe there is still room for improvement. In particular, runtime could be further reduced by developing a bidirectional version of QRRT \citep{lavalle_2001}, by improving convergence using informed sets \citep{gammell_2014}, by investigating novel path section optimization methods \citep{zhang_2009}, and by automatically searching fiber bundles to exploit \citep{kim_2015, brandao_2020}---i.e. with respect to a given bundle algorithm \citep{orthey_2019}. 
We also believe it is worthwhile to investigate the connection to complementary approaches, like computing neighborhoods \citep{lacevic_2020} and exploiting sufficiency conditions \citep{grey_2017}.

However, despite room for improvements, we showed that bundle planners can efficiently exploit fiber bundles. 
By exploiting fiber bundles, bundle planners outperformed existing planners often by up to $2$ orders of magnitude, occasionally up to $6$ orders of magnitude. 
Thus, we believe to not only have contributed to solving multilevel planning problems in the now, but also to have contributed tools and insights to investigate high-dimensional state spaces in the future.

\section{Acknowledgement}

The authors disclose receipt of the following financial support for the research, authorship and publication of this article: This work was supported by the Alexander von Humboldt Foundation [individual grant], the Japan Society for the Promotion of Science [individual grant] and the Max-Planck Society [fellowship grant].

\balance
\bibliographystyle{style/SageH}
\bibliography{bib/reductions}

\appendix

\section{Background Fiber Bundles\label{sec:appendix:background}}

Fiber bundles are based upon the concepts of equivalence relations, and quotient spaces, with close ties to constraint relaxation, and admissible heuristics. We provide here a short overview about those concepts.

\subsection{Equivalence Relations\label{sec:prelims:equivalencerelations}}

An equivalence relation $\sim$ is a binary relation on a space $\X$ such that for any elements $x,y,z \in \X$ we have $x \sim x$ (reflexive), if $x \sim y$ then $y \sim x$ (symmetric) and if $x \sim y$ and $y \sim z$ then $x \sim z$ (transitive) \citep{munkres_2000}.

An equivalence relation partitions the space $\X$ into disjoint subsets we call equivalence classes \citep{munkres_2000}. Given an element $x$ in $\X$, the equivalence class of $x$ is the set of elements $[x] = \{ y \mid y \sim x\}$.

\subsection{Quotient Spaces\label{sec:prelims:quotientspaces}}

We often like to simplify a space $\X$ under an equivalence relation $\sim$ by taking the quotient. Taking the quotient means that we compute the quotient space $\quotientspace = \X / \sim$ under the quotient map $\pi: \X \rightarrow \quotientspace$. The quotient space is the set of all equivalence classes imposed by $\sim$ on $\X$. To manipulate those equivalence classes, we can often \emph{represent} the quotient space by assigning an equivalence class to a point of a representative space. We define this representative space as a space $B$ under a (bijective) representative mapping $\nu: \quotientspace \rightarrow B$ \citep{lee_2003}. 

Let us consider an example. In Fig.~\ref{fig:cosets}, we show the plane $\R^2$ with elements $x = (x_1,x_2)$ under the equivalence relation of vertical lines, i.e. $x \sim x'$ if $x_1 = x'_1$. An equivalence class $[x] = \{ x' \mid x' \sim x\}$ represents a vertical line, i.e.\ the set of points in $\R^2$ with equivalent $x_1$ value. Taking the quotient, we obtain the quotient space $\quotientspace = \R^2 / \sim$, the set of vertical lines in $\R^2$ (Fig.~\ref{fig:cosets} Middle). We can then \emph{represent} $\quotientspace$ by the representative space $\R^1$ by associating to each equivalence class (vertical line) the real value $x_1$ using the representative mapping $\nu: \quotientspace \rightarrow \R^1$ we define as $\nu([x]) = x_1$ (Fig.~\ref{fig:cosets} Right).

\begin{figure}
  \centering
  \newcommand\plane[1]{
  \coordinate (A) at (0,0);
\coordinate (B) at (0,-2);
\coordinate (C) at (2,-1);
\coordinate (D) at (2,-3);
\coordinate (E) at (0,-1);
\coordinate (F) at (2,-2);
\draw[<-,shorten <= -10pt] (A) -- (B);
\draw (B) -- (D);
\draw (D) -- (C);
\draw (C) -- (A);
\draw[->,shorten >= -10pt] (E) -- (F);
\node[above, yshift=10pt] at (A){$x_2$};
\node[above right] at (F){$x_1$};
\coordinate (Q) at (0.8,-0.4);
\node[above right] at (Q){$x$};
\draw[black,fill=black,circle] (Q) circle (1pt);
\coordinate (G) at (1,-3.5);
\node at (G){#1};
}
\begin{tikzpicture}
\plane{$\R^2$}
\end{tikzpicture}
\begin{tikzpicture}
\plane{$\R^2 / \sim$}
\foreach \x in {1,...,9}{
   \coordinate (\x0) at (\x*0.2,-\x*0.1);
   \coordinate (\x1) at (\x*0.2,-\x*0.1-2);
   \draw (\x0) -- (\x1);
   }
\end{tikzpicture}
\begin{tikzpicture}
\coordinate (E) at (0,-1);
\coordinate (F) at (1.8,-1.8);
\coordinate (G) at (1,-3.5);
\coordinate (Q) at (0.8,-1.35);
\coordinate (x0) at (0.8,-0.4);
\coordinate (x1) at (0.8,-2.4);
\draw[->,shorten >= -10pt] (E) -- (F);
\node at (G){$\R^1$};
\draw[black,fill=black,circle] (Q) circle (1pt);
\draw[black,fill=black,circle] (x0) circle (1pt);
\node[above right] at (Q){$x_1$};
\node[above right] at (x0){$x$};
\draw[dashed] (x0) -- (x1);
\end{tikzpicture}
\vspace{0pt}
  \caption{Quotient space example. \textbf{Left:} Space $\R^2$. \textbf{Middle:} Quotient space $\quotientspace = \R^2 / \sim$, the set of equivalence classes of vertical lines. \textbf{Right:} Representative space $\R^1$ under representation mapping $\nu: \quotientspace \rightarrow \R^1$ (Adapted from \citep{orthey_2018}).\label{fig:cosets}}
  \vspace{0pt}
\end{figure}

\subsection{Constraint Relaxation\label{sec:prelims:constraintrelaxation}}

To approximate a complex problem, we can often use the concept of constraint relaxation. Let $\X$ be a space and $\phi: \X \rightarrow \R$ be a constraint function on $\X$. To solve a planning problem on $\X$, we need to search through the free space $\Xf$, which might have zero-measure constraints or narrow passages. To simplify such a problem, we replace the constraint function $\phi$ by a constraint relaxation function $\phi_R$ under the condition
\begin{equation}
    \phi_R(x) \leq \phi(x)
\end{equation}
for any $x$ in $\X$. 

We can explain this condition geometrically as an expansion of the free space $\Xf$ when using $\phi_R$ \citep{orthey_2019}. Constraint relaxations \citep{roubivcek_2011} are advantageous, because we can use solutions of the relaxed problem as certified lower bounds on the solution of the original problem.

\subsection{Admissible Heuristics\label{sec:prelims:admissibleheuristics}}

In a search problem, we like to find paths through a state space $\X$ to move from an initial element $\xi \in \X$ to a goal element $\xg \in \X$. When casting this as a search problem, we often like to know which state to expand next. A helpful tool is the cost-to-go (or value) function $\hstar: \X \rightarrow \R$ which defines the cost of the optimal path from any point to the goal. An admissible heuristic is an estimate $h: \X \rightarrow \R$ which lower-bounds $\hstar$ as
\begin{equation}
    h(x) \leq \hstar(x)
\end{equation}
for any $x$ in $\X$ \citep{pearl_1984, edelkamp_2011, aine_2016}. Admissible heuristics are important because we can use them to guarantee optimality and completeness in algorithms like A* \citep{hart_1968, pearl_1984} and to often decrease planning time significantly \citep{aine_2016}.

\section{Exponential Change\label{sec:appendix:expdecay}}

To model quick but smooth transitions between two parameter values, we use an exponential decay function. Let $\kappa_0$ be the start and $\kappa_1$ be the final parameter value. We model the change between $\kappa_0$ and $\kappa_1$ using the exponential decay function
\begin{equation}
    \kappa(t) = (\kappa_0 - \kappa_1) \exp(-\lambda t) + \kappa_1
\end{equation}
with $t \in \R_{\geq 0}$ being the time or iteration number, $\kappa(0) = \kappa_0$, $\lim_{t \rightarrow \infty} \kappa(t) = \kappa_1$, $\exp$ being the exponential function and $\lambda \in \R_{\geq 0}$ being the decay parameter. 

\section{Meta-Analysis of Primitive Methods\label{sec:appendix:metaanalysis}}

\def\pWidth{0.48\linewidth}
\begin{figure*}
    \centering
    \includegraphics[width=\pWidth]{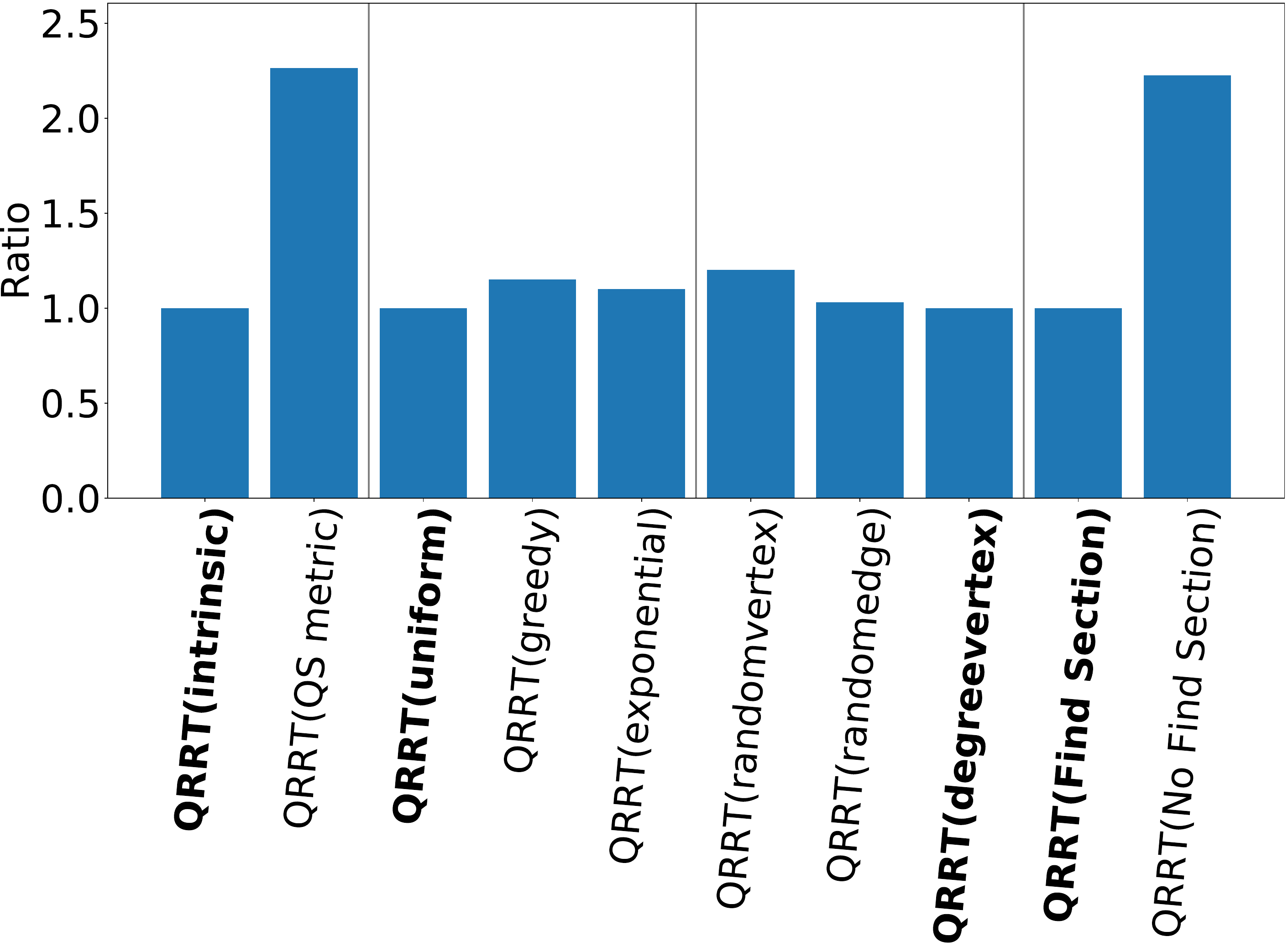}
    \includegraphics[width=\pWidth]{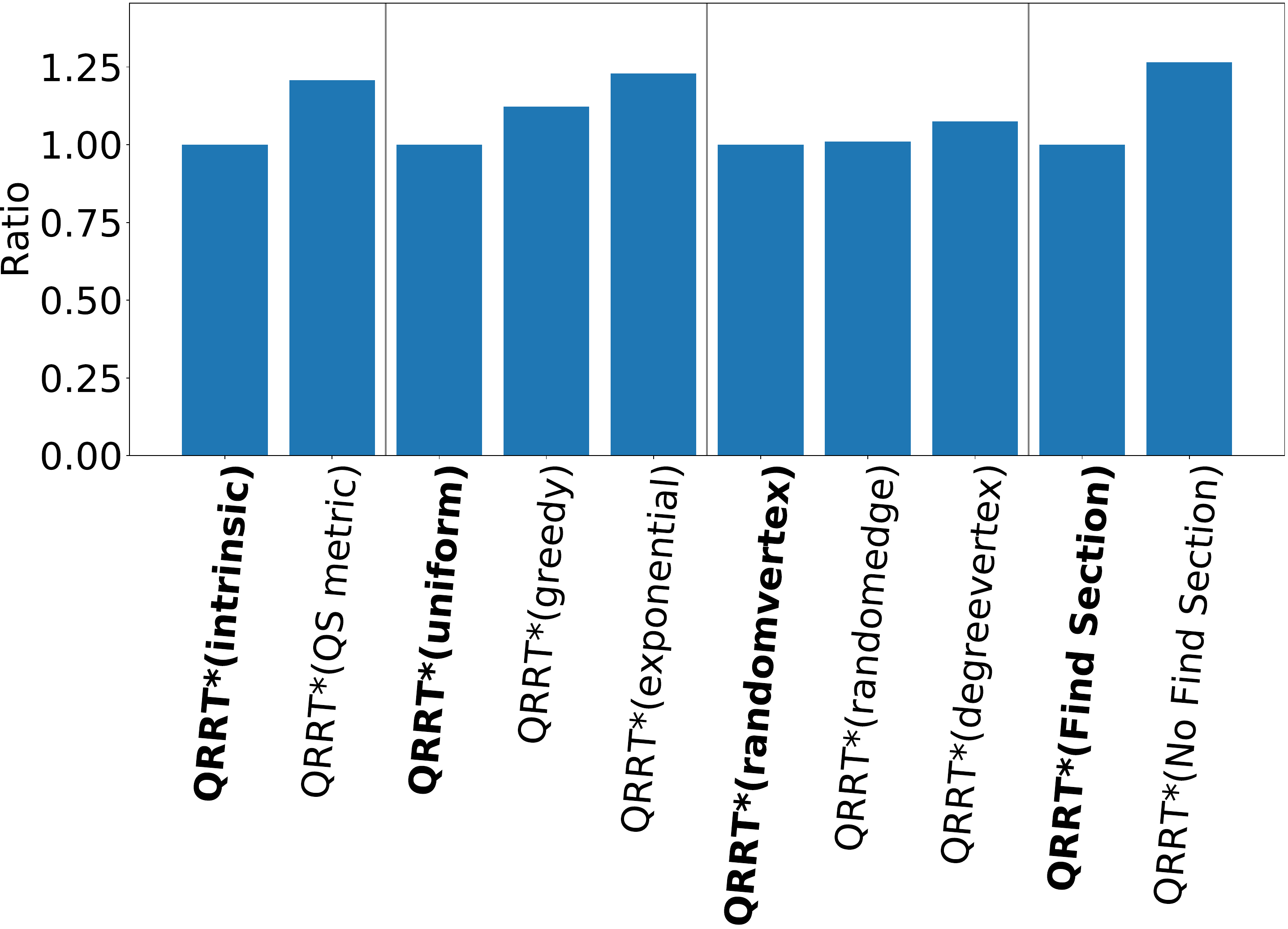}\\
    \includegraphics[width=\pWidth]{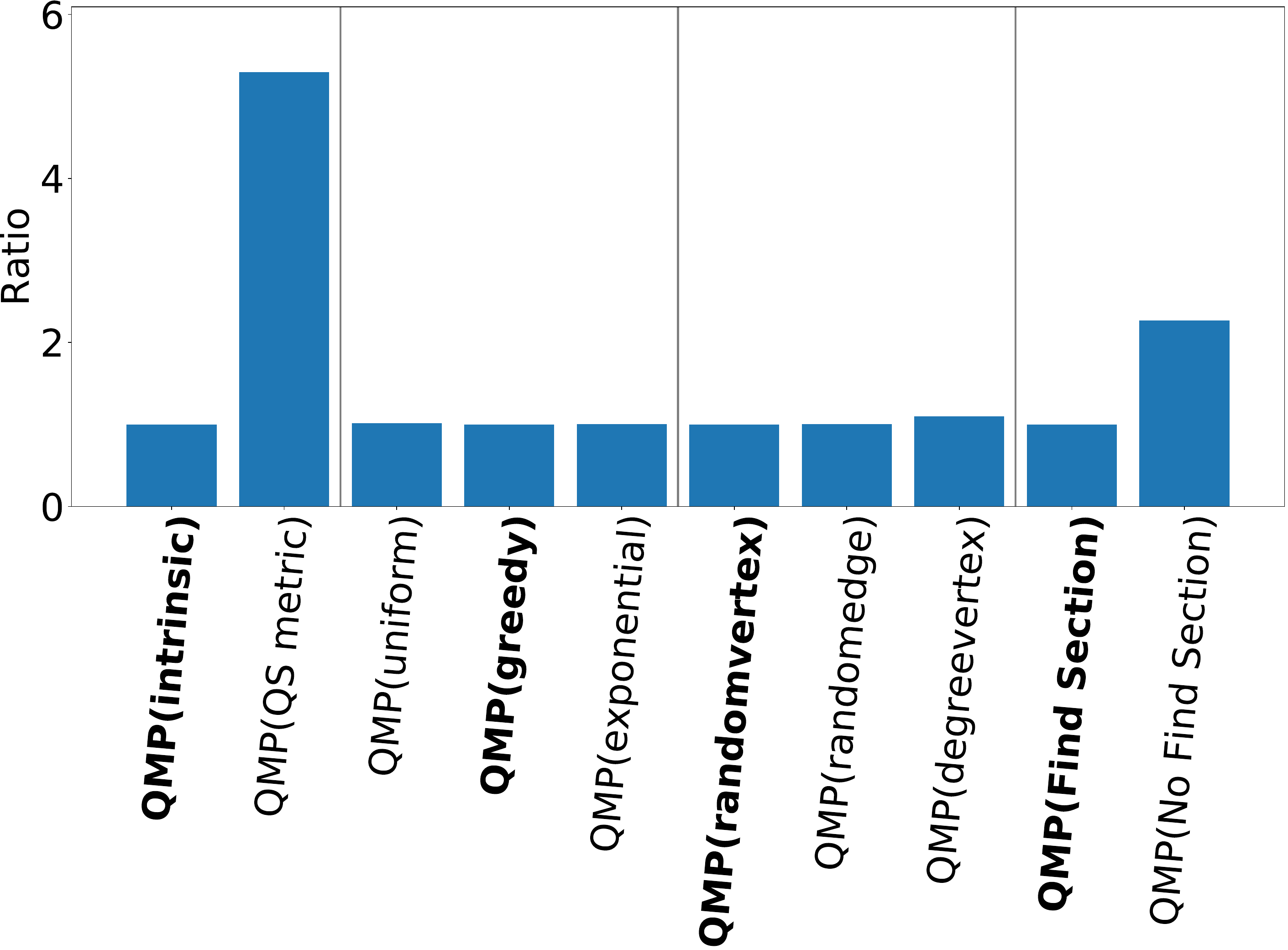}
    \includegraphics[width=\pWidth]{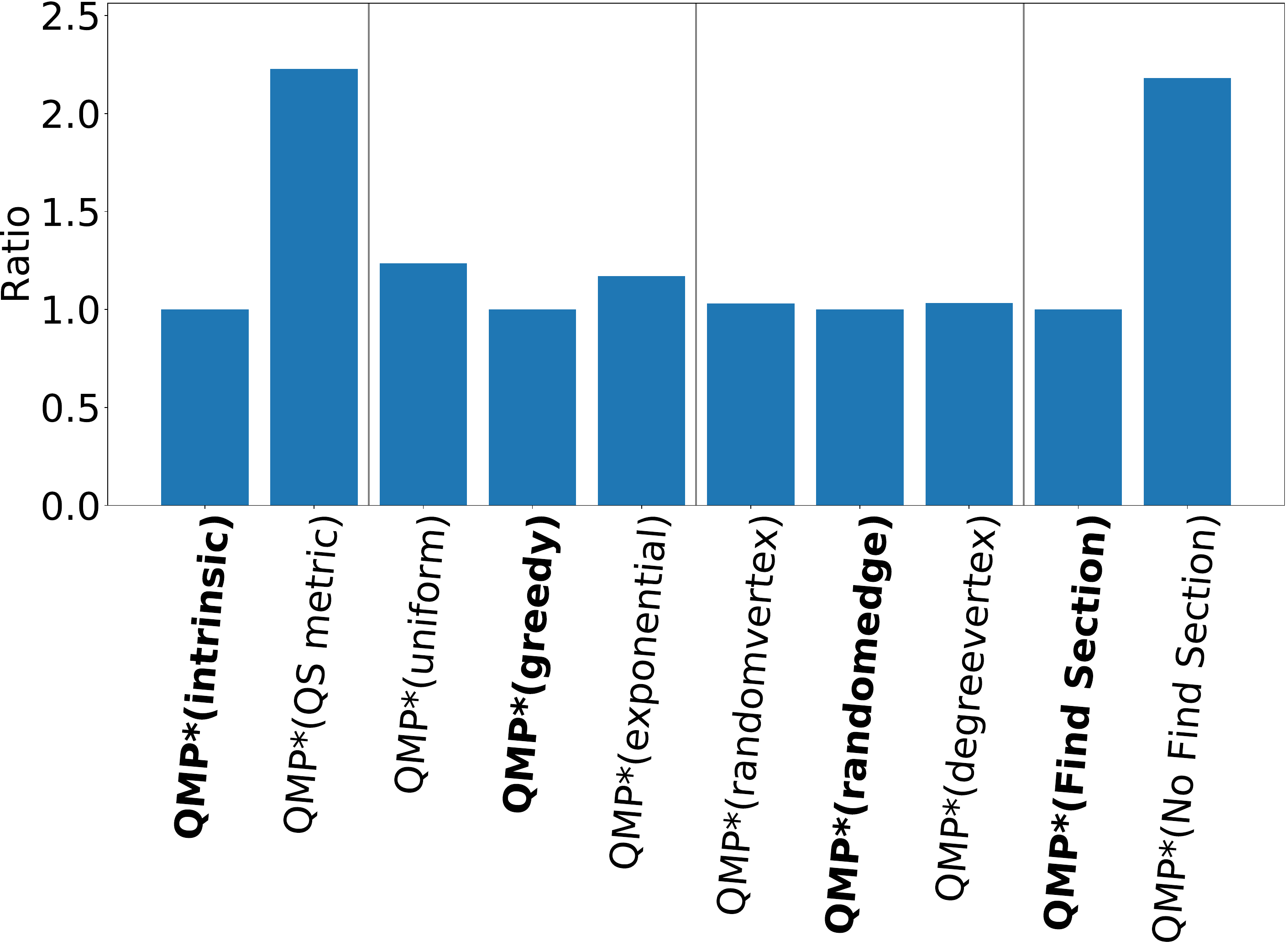}
    \caption{Meta analysis of different implementations of bundle space primitives. Each graph shows the performance of QRRT, QRRT*, QMP, and QMP* on the high-dimensional benchmark set by comparing four different primitives, the metric (intrinsic vs. quotient-space), the importance selection (uniform, greedy, or exponential), the sampling strategy (random vertex, random edge, random degree vertex), and having the side-step section method. See Sec.~\ref{sec:primitives} for details. The results are displayed as ratio compared to the value of the best performing implementation in each category.}
    \label{fig:metaanalysis}
\end{figure*}

As discussed in Sec.~\ref{sec:primitives}, each bundle space primitive can be implemented in multiple ways. To find out which method works best for a specific
algorithm, we perform a meta-analysis. In this meta-analysis, we select each
bundle algorithm QRRT, QRRT*, QMP and QMP* and vary its primitive methods. We
vary those methods by taking the runtime average over the same set of
environments as in Sec.~\ref{sec:evaluation} (except the hypercube). We then present the results as ratios of the best runtime. This means, to find the best sampling method for QRRT, we let QRRT run on all environments with different sampling method, then average the results for each method. We then take the method with the lowest runtime and assign it the ratio $1$. All other runtimes are represented as multiples of the lowest runtime. 

The results are shown in Fig.~\ref{fig:metaanalysis}. We divide the results into four groups. First, we compare the intrinsic metric to the quotient space (QS) metric (left group). Second, we compare the importance selection of a bundle space by comparing uniform, exponential and epsilon greedy (middle left). Third, we compare the graph sampling strategies, namely random vertex, random edge and degree vertex (middle right). Finally, we compare the algorithms with enabled find section method and without (right). 

In the case of QRRT, we observe the best metric to be the intrinsic metric (left) and that using the recursive find section method, we can lower the runtime significantly (right). 
However, for sampling and selection, we do not have a clear best strategy. 
Instead, we observe that a change in sampling or importance has a marginal influence on the performance.
For the other three algorithms QRRT*, QMP and QMP*, we observe similar results. 
One exception is QRRT*, where we observe the QS metric and the no find section method to perform only $1.25$ times worse.

\subsection{Discussion of results}

The results indicate that both for sampling and importance selection, there is no clear advantage of using either strategy. This suggests that either strategy can be chosen for the scenarios under investigation. Further investigation is required to understand the influence of sampling strategies over different types of bundle spaces. 

Concerning the metric and section method, the difference in performance is significant. In detail, for all bundle planners, both the intrinsic method, and the find-section method perform significantly better. The reason why the intrinsic metric is better lies in its simplicity. While the intrinsic metric can rapidly return values, the QS metric requires an expensive graph search. While the QS metric is more accurate, this is offset by its computational burden. The reason why the find-section method performs better is due to independent movements of links caused by the L1-interpolation. This is often a decisive factor to ensure that colliding links are moved out of the way to clear the way towards the goal. Most of the problems in our evaluations benefit from this movement. An example is the box folding task, where moving outer links towards the goal positions increased our chances to find collision-free motions.

\end{document}